\documentclass[10pt]{article} 

\usepackage[preprint]{rlj}           

\usepackage{amssymb}            
\usepackage{mathtools}          
\usepackage{mathrsfs}           
\usepackage{graphicx}           
\usepackage{subcaption}         
\usepackage[space]{grffile}     
\usepackage{url}                
\usepackage{lipsum}             

\usepackage{authblk}
\usepackage{graphicx}
\usepackage{amsfonts,amssymb,amsthm}
\usepackage{float}
\usepackage[mathscr]{euscript}
\usepackage{subcaption}
\usepackage{multicol}
\usepackage{stmaryrd}
\usepackage{thm-restate}
\tcbuselibrary{theorems}
\tcbuselibrary{skins}

\usepackage[nameinlink]{cleveref}
\newtcbtheorem{condition}
{\bfseries Empirical Condition}
    {
    enhanced,
    drop shadow={black!50!white},
    coltitle=black,
    top=0.2in,
    attach boxed title to top left={xshift=1.5em,yshift=-\tcboxedtitleheight/2},
    boxed title style={size=small,colback=white}
    }
{condi}

\newtcbtheorem[use counter=definition]{boxdef}
{\bfseries Definition}
    {
    enhanced,
    drop shadow={black!50!white},
    coltitle=black,
    top=0.2in,
    attach boxed title to top left={xshift=1.5em,yshift=-\tcboxedtitleheight/2},
    boxed title style={size=small,colback=white}
    }
{boxdef}

\makeatletter
\def\condition@counter{condition}
\def\condition@counter{boxdef}
\makeatother
\crefname{tcb@cnt@condition}{Empirical Condition}{Empirical Conditions}
\Crefname{tcb@cnt@condition}{Empirical Condition}{Empirical Conditions}

\crefname{tcb@cnt@boxdef}{Definition}{Definition}
\Crefname{tcb@cnt@boxdef}{Definition}{Definition}

\usepackage[utf8]{inputenc}
\usepackage{enumitem}

\newcommand{\Ibb}{\mathbb{I}}

\newcommand{\Nbb}{\mathbb{N}}

\newcommand{\Pbb}{\mathbb{P}}

\newcommand{\Rbb}{\mathbb{R}}

\newcommand{\Acal}{\mathcal{A}}

\newcommand{\Dcal}{\mathcal{D}}

\newcommand{\Gcal}{\mathcal{G}}
\newcommand{\Hcal}{\mathcal{H}}

\newcommand{\Ocal}{\mathcal{O}}
\newcommand{\Pcal}{\mathcal{P}}
\newcommand{\Qcal}{\mathcal{Q}}
\newcommand{\Rcal}{\mathcal{R}}
\newcommand{\Scal}{\mathcal{S}}

\newcommand{\Xcal}{\mathcal{X}}

\newcommand{\Zcal}{\mathcal{Z}}

\DeclareMathOperator*{\argmax}{arg\,max}

\newtheorem{definition}{Definition}
\newtheorem{corollary}{Corollary}

\newtheorem{example}{Example}

\title{Artifacts as Memory Beyond the Agent Boundary}

\setrunningtitle{Artifacts as Memory Beyond the Agent Boundary}

\author{John D. Martin\textsuperscript{1,2,$\dagger$}, 
Fraser Mince\textsuperscript{3,$\dagger$}, 
Esra'a Saleh\textsuperscript{3,4,5}, 
Amy Pajak\textsuperscript{3,6}}

\emails{john.martin@openmindresearch.org, frasermince@gmail.com, esraa.saleh@mila.quebec, pajak@seas.upenn.edu}

\affiliations{
$^{1}$\textbf{Openmind Research Institute}\\
$^{2}$\textbf{University of Alberta}\\
$^{3}$\textbf{Cohere Labs Community}\\
$^{4}$\textbf{Universit\'e de Montr\'eal}\\
$^{5}$\textbf{Mila -- Qu\'ebec AI Institute}\\
$^{6}$\textbf{University of Pennsylvania}\\
$^{\dagger}$\textbf{Equal contribution}\\
}

\contribution{
    We introduce a formalism for how the environment can functionally serve as an agent's memory  (\autoref{sec:formalizing_externalization}). Central to our formalism is the concept of artifacts (\autoref{def:artifact}), which we define as observations that inform the past, and external memory ~(\hyperref[def:externalized_compute]{Definition 3}). 
    }
    {
    Prior work has hypothesized about externalizing memory \citep{clark1998extended, sutton2003constructive}, but without a precise mathematical characterization of the phenomena.
    }

\contribution{
    The Artifact Reduction Theorem (\autoref{thm:artifact_reduction}): our main theoretical result proves artifacts (\autoref{def:artifact}) reduce the information needed to represent history. 
    }
    {
    The formalism we introduce naturally raises the question of what formal properties can be established.
    }

\contribution{
    Empirical evidence that RL agents can use their environment as a form of memory.
    }
    {
    Prior work, including our proposed theory, suggests that RL agents can externalize memory. Yet empirical evidence supporting this claim remains absent.    
    }
    
\contribution{
    An argument that spatial artifacts from our experiments satisfy qualitative memory properties \citep{michaelian2012external} used to ground accounts of externalized memory (\autoref{sec:criteria}).
    }
    {
    Our theoretical results establish a link between artifacts and memory, and our empirical results provide evidence that artifacts can reduce memory demands. However, the nature of memory we study requires further contextualization.
}

\keywords{Memory, Artifacts, Situated Cognition, Bounded Agent} 

\summary{
The situated view of cognition holds that intelligent behavior depends not only on internal memory, but on an agent's active use of environmental resources. Here, we begin formalizing this intuition within Reinforcement Learning (RL). We introduce a mathematical framing for how the environment can functionally serve as an agent's memory, and prove that certain observations, which we call artifacts, can reduce the information needed to represent history. We corroborate our theory with experiments showing that when agents observe spatial paths, the amount of memory required to learn a performant policy is reduced. Interestingly, this effect arises unintentionally, and implicitly through the agent's sensory stream. We discuss the implications of our findings, and show they satisfy qualitative properties previously used to ground accounts of external memory. Moving forward, we anticipate further work on this subject could reveal principled ways to exploit the environment as a substitute for explicit internal memory.
}

\begin{document}

\makeCover  
\maketitle  

\begin{abstract}
The situated view of cognition holds that intelligent behavior depends not only on internal memory, but on an agent's active use of environmental resources. Here, we begin formalizing this intuition within Reinforcement Learning (RL). We introduce a mathematical framing for how the environment can functionally serve as an agent's memory, and prove that certain observations, which we call artifacts, can reduce the information needed to represent history. We corroborate our theory with experiments showing that when agents observe spatial paths, the amount of memory required to learn a performant policy is reduced. Interestingly, this effect arises unintentionally, and implicitly through the agent's sensory stream. We discuss the implications of our findings, and show they satisfy qualitative properties previously used to ground accounts of external memory. Moving forward, we anticipate further work on this subject could reveal principled ways to exploit the environment as a substitute for explicit internal memory.
\end{abstract}

\section{Introduction}
According to the situated view of cognition, competent action depends not only on internal memory, but on an agent's use of environmental resources~\citep{hutchins1995cognition, clark1998being, menary2010cognitive}. On some accounts, the environment itself can implicitly function as an agent's memory \citep{clark1998extended, sutton2003constructive}. In this paper, we aim to formalize such cases within Reinforcement Learning (RL). As a first step, we focus on one form of \textit{externalized memory} which centers on the use of \textit{artifacts} \citep{hutchins2001cognitive} to store information about an agent's previous interactions---for instance, a trail of breadcrumbs indicating where the agent has been before.  

We make three main contributions. First, we introduce a mathematical framing for how the environment can functionally serve as an agent's memory. Our framework grounds the concept of artifacts as observations that inform the past (\autoref{def:artifact}), and proves the amount of information needed to represent a history is reduced when artifacts are present (\autoref{thm:artifact_reduction}). We equate externalized memory to a condition on the amount of capacity needed to learn a performant policy (\hyperref[def:externalized_compute]{Definition 3}), and show the amount of externalized memory can be systematically quantified. Our proposed method compares the capacity needed to match performance across two settings that differ in whether the agent can observe behavioral artifacts, such as a spatial path. 

Second, we empirically confirm that RL agents can use spatial environments as a form of memory. We find evidence for this in a five different settings and from two core agent designs: Q-learning \citep{watkins1992q} and DQN \citep{mnih2015human}. In each case, we find the use of external memory arises unintentionally; leaving behind a spatial path---like a trail of breadcrumbs---is enough for the agent to experience the effect. 

Third, we place our results in a broader conceptual context and show that they satisfy qualitative properties previously used to ground accounts of external memory \citep{michaelian2012external, sims2022externalized}. We discuss our results, and suggest that further work in this area could yield principled ways to exploit the environment as a substitute for explicit internal memory.
\section{The Reinforcement Learning Formalism}
We adopt a purely experiential framing of RL, inspired by observable operator models \citep{jaeger2000observable}, predictive state representations \citep{littman2001predictive, singh2004predictive}, and other generalizations of RL \citep{hutter2004universal,dong2022simple,abel2023convergence,bowling2023settling}. Experiential models are appealing because they make few assumptions and are grounded entirely in observable data. That said, alternative frameworks such as POMDPs \citep{kaelbling1998planning} remain available.

\paragraph{Interaction.} At every moment $t\in \Nbb$, an agent draws on its sense-data $o_t\in\Ocal$, takes an action $a_t\in\Acal$, then observes the outcome through the updated observation $o_{t+1}\in\Ocal$ and scalar reward $r_{t+1}\in\Rcal$. We assume the sets $\Ocal,\Acal$ and $\Rcal$ are all finite. In episodic settings, this interaction repeats until a termination condition is met, after which the interaction reinitializes. Traces of interaction are described by sequences of observations and actions, called \textit{histories}. From the agent's perspective, a single history is $h=o_1a_1,o_2a_2,\cdots$, coming from the set of all histories $\Hcal\equiv(\Ocal \times \Acal)^*$. 

\paragraph{Bounded Agents.} We study agents with bounded representations and input channels, defining a bounded agent in two parts relative to a history-dependent agent $\lambda \colon \Hcal\times\Ocal \rightarrow \Delta(\Acal)$ over interface $(\Ocal,\Acal)$. First, an agent has a \textit{bounded representation} if it possesses a finite set of internal states $\Scal$, defined formally as $\pi\colon \Scal \times \Ocal \rightarrow \Delta(\Acal)$, where there exists some $s\in\Scal$ such that $\lambda(\cdot|h) = \pi(\cdot|s)$ for all $h\in\Hcal$ \citep{abel2023convergence}. Second, an agent has a \textit{bounded input channel} if its observations are filtered through a fixed mapping $\tau\colon \Xcal \rightarrow \Ocal$, which we call the \textit{transduction function}\footnote{The transduction function is part of the environment controlled by design. For example, consider a resource-constrained robot with a camera. The camera is part of the environment and chosen by the designer. If the camera provides $n\times n$ images, and the robot is only equipped to support $m\times m$ images, where $m < n$, transduction models a morphological constraint of the agent's embodiment.}, taking the full set of observable signals $\Xcal$ to the agent-accessible signals $\Ocal \subseteq \Xcal$ \citep{delchamps1990stabilizing, brockett2000quantized}. We consider two transduction functions: a linear projection $T$, such that $o = Tx$ for all $x \in \Xcal$, and the identity. A \textit{bounded agent} is thus characterized by the triple $(\Scal, \pi, \tau)$, where $\Scal$ constrains the internal representation, $\pi$ governs behavior, and $\tau$ determines the signals received from the environment\footnote{In contrast to partially-observed formulations, which map between latent and observable variables, the transduction function maps exclusively between sets of observable variables. This distinction allows us to remain in the experiential setting, and to compare the performance of agents with different input channels within the same environment and task.}. Correspondingly, the \textit{environment} is a stochastic mapping $\xi \colon \Gcal \rightarrow \Delta(\Xcal)$ over the interface $(\Acal, \Xcal)$, where $\Gcal \equiv (\Xcal\times \Acal)^*$ denotes the set of finite environment histories.

\paragraph{Objective.} RL agents adapt to maximize the occurrence of future reward, which we capture by the discounted sum $R_{t+1} + \gamma R_{t+2} + \gamma^2 R_{t+3} + \cdots$, taking $\gamma\in[0,1)$ as the discount factor. The \textit{action-value function} is the expected discounted sum following action $a$ from history $h$ in $\xi$:
\begin{align*}
    q(h,a) = \mathbf{E}_{\lambda,\xi}\left[\sum_{k=0}^{\infty} \gamma^k R_{t+k+1} \,\middle|\, H_t=h,\, A_t=a\right].
\end{align*}

\paragraph{Learning.}We consider bounded agents that learn an approximate action value $\hat{q}(o,a)\approx q(h,a)$. Here, we suppress the dependence of the internal state $s\in\Scal$, but assume $s$ carries a set of learnable parameters along with additional overhead. The agent behaves according to an $\epsilon$-greedy policy, selecting a uniform-random action with probability $\epsilon$ and otherwise selecting $a \in \argmax_{a\in\mathcal{A}}\, \hat{q}(o,a)$. 

Two learning methods are relevant to our study: Linear Q-learning \citep{bradtke1992reinforcement} and the Deep Q-Network (DQN) \citep{mnih2015human}. Linear Q-learning represents $\hat{q}$ with a weighted sum of the observation-inputs: $\hat{q}(o,a; w) \equiv w_a^\top o$, where $w\equiv\{w_a\in \Rbb^{|\Ocal|}, a\in\Acal\}$. Given a sample transition, $(o,a,r,o')$, weights $w_a$ are updated with the Q-learning rule \citep{watkins1992q}, using a constant, scalar step-size $\alpha > 0$: 
\begin{align*}
    w_a \gets w_a + \alpha [r + \gamma \max_{a'\in\Acal}\hat{q}(o',a'; w) - \hat{q}(o,a; w)]o.
\end{align*}
DQN represents $\hat{q}$ with a neural network of real-valued weights $\theta$. Network weights are updated with backpropagation to minimize the following loss over mini-batches of cached experience $\Dcal$.  
\begin{align*}
    L(\theta) = \frac{1}{2}\sum_{(o,a,r,o')\in\Dcal}[r + \gamma \max_{a'\in\Acal}\hat{q}(o',a';\theta') - \hat{q}(o,a;\theta)]^2.
\end{align*}
Following \cite{mnih2015human}, we employ a target network with separate weights $\theta'$. 

\paragraph{System Capacity.} Let $C\in\Nbb$ denote an agent's \textit{capacity}: the total internal memory available for learning. Similar to \citep{tamborski2025memory}, we adopt an operational measure of capacity proportional to the number of learnable parameters: $C \propto |w_a|$ for Linear Q-learning and $C \propto |\theta|$ for DQN, excluding the replay buffer as a constant factor.

\section{A Formalism of Externalized Memory}\label{sec:formalizing_externalization}
RL treats memory as an internal resource whose capacity is specified at design-time and generally assumed to remain constant throughout operation. In this section, we formalize what it means to externalize memory through the use of artifacts.  

\subsection{Artifacts as Memory}
Here, artifacts are features of the environment that help an agent remember its past. Examples are ubiquitous: a folded page, a string tied around a finger, a trail of footprints in the snow. We specifically formalize artifacts that affect perception \citep{heersmink2021varieties}: those in which some present observation reliably encodes information about the past, enabling an agent to recover that information through observation alone.
\begin{definition}[Artifact]\label{def:artifact}
    An \textbf{artifact} is an observation $o$, provided that for any $t$, if $O_t =o$, there exists some non-zero $t' < t $ and $o'\neq o$ such that $O_{t'}=o'$. We say $o$ is the artifact of $o'$.
\end{definition}
Our definition establishes a fine-grained condition of certainty for a single observation from the past. For a given environment $\xi$, the set of all artifacts $\Omega_\xi\subseteq \Ocal$ is the collection of observations satisfying \autoref{def:artifact}. We give a special name to environments that support this condition.\begin{definition}[Artifactual Environment]\label{def:artifactual_env}
    An environment is \textbf{artifactual} if and only if $\Omega_\xi$ is non-empty.
\end{definition}
\begin{example}[Page Keeping]
\begin{figure}
    \centering
    \includegraphics[width=\linewidth]{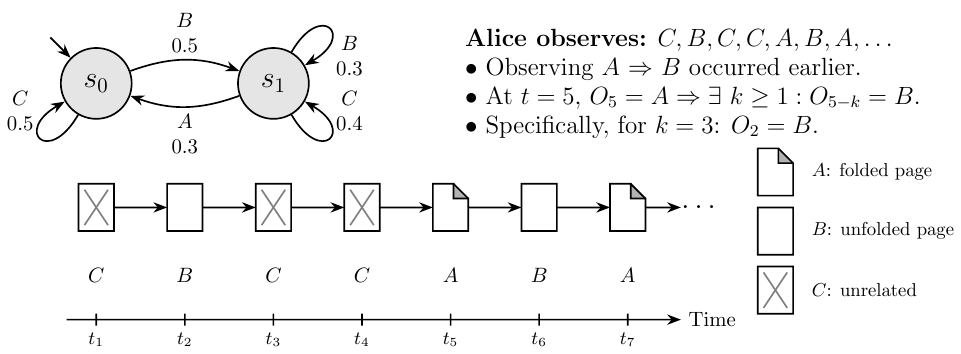}
    \caption{\textbf{Page Keeping example:} Transitions are labeled with probabilities and observations. Actions are omitted for clarity. Interaction starts from $s_0$. $A$ is an artifact of $B$, and is only observed after $s_1$. Observing $A$ at $t=5$ implies that $B$ was observed in the past: specifically at $t=2$.}
    \label{fig:single_artifact_env}
\end{figure}
Alice is an avid reader of books. Like many, she reads only a few pages at a time. Instead of remembering the page number where she stopped, she marks her place by folding the corner of the page. When she picks up the book later, she unfolds the corner and continues to read. 

This interaction can be represented by the artifactual environment pictured in \autoref{fig:single_artifact_env}. Observations indicate three basic situations where Alice sees a folded page ($A$), an unfolded page ($B$), or something unrelated ($C$). Whenever Alice observes $A$, she knows that $B$ must have occurred. Thus, in this context, a folded page serves as an artifact.  
\end{example}
The existence of artifacts can be expressed as a probabilistic property of the environment. Proofs of formal claims are provided in Section \ref{app:proofs} of the Supplement.
\begin{restatable}[]{lemma}{prob}\label{lemma:prob}
    An environment $\xi$ is artifactual if, and only if, for any $t>0$ there exist distinct observations $o,o'$, and non-zero $t' < t $ such that $\Pbb(O_{t'}=o' | O_{t}=o) = 1.$
\end{restatable}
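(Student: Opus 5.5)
The proof follows by unpacking \autoref{def:artifactual_env} and \autoref{def:artifact}, and translating the almost-sure implication ``$O_t=o$ implies some earlier $O_{t'}=o'$'' into a conditional probability of one. Throughout, all probabilities are taken with respect to the interaction law induced by $\lambda$ and $\xi$. Conditioning on $O_t=o$ is only considered at times $t$ with $\Pbb(O_t=o)>0$, which is the implicit reading of ``if $O_t=o$'' in \autoref{def:artifact}.

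For the forward direction, suppose $\xi$ is artifactual, so $\Omega_\xi$ contains some $o$. By \autoref{def:artifact}, whenever $O_t=o$ there exist a non-zero $t'<t$ and an $o'\neq o$ with $O_{t'}=o'$. I will read the definition as the paper's example (Page Keeping) does: the pair $(t',o')$ is determined by $o$ and $t$, not by the particular sample path. Then the event $\{O_t=o\}$ is contained in $\{O_{t'}=o'\}$ up to null sets, so $\Pbb(O_{t'}=o'\mid O_t=o)=\Pbb(O_{t'}=o',O_t=o)/\Pbb(O_t=o)=1$. This gives distinct observations $o,o'$ and a time $t'<t$ with the required property.

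For the reverse direction, fix the distinct pair $o,o'$ and $t'<t$ with $\Pbb(O_{t'}=o'\mid O_t=o)=1$. Then $\Pbb(O_t=o,\,O_{t'}\neq o')=0$. So, outside a null set, every realization with $O_t=o$ also has $O_{t'}=o'$ with $o'\neq o$. This is exactly the condition of \autoref{def:artifact} at time $t$, so $o\in\Omega_\xi$ and $\xi$ is artifactual by \autoref{def:artifactual_env}. The quantifier ``for any $t$'' is handled by noting that the lemma's hypothesis supplies such a witness at every $t$ at which $o$ can occur. At times where $o$ has probability zero, the definition's condition holds vacuously.

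The main obstacle is the quantifier mismatch between the two statements. \autoref{def:artifact} allows $t'$ and $o'$ to depend on the trajectory ($\exists t',o'$ inside the ``if $O_t=o$''), whereas the lemma fixes them before conditioning. If the witness varies across paths, the forward direction only gives $\Pbb(\exists t'<t: O_{t'}\neq o \mid O_t=o)=1$, not a single-time conditional probability of one. I would first try to resolve this by adopting the fixed-witness reading, as in the Page Keeping example, where $A$ at $t=5$ certifies $B$ at $t=2$. Failing that, I would use finiteness: for each $t$ there are finitely many pairs $(t',o')$, so the almost-sure union event is a finite union. I would then argue, possibly by restricting to a sub-observation, that some single pair carries full conditional mass. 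I would flag that this last step needs the fixed-witness interpretation to go through cleanly.
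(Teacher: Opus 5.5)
Your main argument is the same as the paper's. Both directions unpack Definitions~1 and~2 and translate ``$O_t=o$ forces $O_{t'}=o'$'' into $\Pbb(O_{t'}=o'\mid O_t=o)=1$ and back. The paper silently uses the fixed-witness reading of Definition~1 that you adopt explicitly. However, two of your side claims are wrong as stated, and the paper's proof glosses over both points too.

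\textbf{The finiteness fallback cannot work.} With a path-dependent witness the forward direction is false, so the fixed-witness reading is required, not merely convenient. Take $\Ocal=\{A,B,C\}$, $O_1$ uniform on $\{B,C\}$, and $O_t=A$ for all $t\ge 2$, with actions ignored. Then $A$ is an artifact in the path-dependent sense, since $O_1\neq A$ whenever $O_t=A$. Yet no distinct pair is ever certified:
$\Pbb(O_1=B\mid O_t=A)=\Pbb(O_1=C\mid O_t=A)=1/2$; the observations at times $2\le t'<t$ equal $A$ itself; and $B,C$ occur only at $t=1$. Observations are atoms of a finite set, so there is no ``sub-observation'' to pass to.

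\textbf{The quantifier step in the reverse direction is unsupported.} Your sentence ``the lemma's hypothesis supplies such a witness at every $t$ at which $o$ can occur'' is not what the statement gives. The statement reads $\forall t\,\exists o,o',t'$, so the certified observation may change with $t$; read literally, the right-hand side can never hold at $t=1$. Under that literal order the converse fails even for $t\ge 2$, using your own per-time reading of Definition~1:
let $O_1$ be uniform on $\{A,B\}$; at even $t$ emit $C$ with probability $1/2$ if $O_1=A$ and $E$ otherwise; at odd $t\ge 3$ emit $E$ with probability $1/2$ if $O_1=B$ and $C$ otherwise, with fresh coins each step. Then $C$ certifies $O_1=A$ at every even time and $E$ certifies $O_1=B$ at every odd time. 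But $C$ at odd times and $E$ at even times leave every earlier distinct observation with conditional probability at most $2/3$, so $\Omega_\xi=\emptyset$.

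What you, and the paper, actually prove is the equivalence with this right-hand side: some $o$ satisfies, for every $t$ with $\Pbb(O_t=o)>0$, $\Pbb(O_{t'}=o'\mid O_t=o)=1$ for some $o'\neq o$ and non-zero $t'<t$. State that reformulation explicitly. Both directions then follow immediately, $t=1$ is excluded automatically, and never-occurring observations are vacuous on both sides.
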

Next, we present our main theoretical result, proving that artifacts reduce the amount of information needed to represent a history. In what follows let $\Ibb(X;Y)$ be the mutual information between two discrete random variables $X$ and $Y$.
\begin{restatable}[Artifact Reduction]{theorem}{orderreduction}\label{thm:artifact_reduction}
Let $\xi$ be an artifactual environment, and let $H$ be a history from $\xi$ containing $m>1$ observations and at least one artifact. There exists a reduced sequence $H'$ with $m-1$ observations, such that 
\begin{align*}
\Ibb(O_{t+1}; H) = \Ibb(O_{t+1}; H').
\end{align*}
\end{restatable}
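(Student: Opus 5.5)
The plan is to remove from $H$ the observation that the artifact certifies, and to show that removing it loses no information. Write $H = (O_1 A_1, \dots, O_m A_m)$, with $t = m$ the current step. By hypothesis $H$ contains an artifact, say $O_k = o \in \Omega_\xi$ for some $k \le m$.

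By \autoref{def:artifact} and \autoref{lemma:prob}, there are a non-zero $t' < k$ and an observation $o' \neq o$ such that $\Pbb(O_{t'} = o' \mid O_k = o) = 1$. We define $H'$ to be $H$ with the entry $O_{t'}$ deleted, keeping its position index, and with the actions retained. Then $H'$ has $m-1$ observations. Note that $t' \ge 1$ and $t' < k \le m$, so $t'$ really indexes an observation inside $H$. This index bookkeeping is where the condition $m > 1$ enters.

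The main step is to show that $H$ is a deterministic function of $H'$ almost surely on the relevant event. Conditioned on $O_k = o$, the deleted coordinate equals $o'$ with probability one, so we can set $f(H') = $ ``reinsert $o'$ at position $t'$''. This gives $H = f(H')$ almost surely, and trivially $H' = g(H)$ by deletion. It follows that
\begin{align*}
\Ibb(O_{t+1}; H) = \Ibb(O_{t+1}; f(H')) \le \Ibb(O_{t+1}; H') = \Ibb(O_{t+1}; g(H)) \le \Ibb(O_{t+1}; H),
\end{align*}
using the data-processing inequality twice. Equivalently, the chain rule gives $\Ibb(O_{t+1}; H) = \Ibb(O_{t+1}; H') + \Ibb(O_{t+1}; O_{t'} \mid H')$, and the last term vanishes because $H(O_{t'} \mid H') = 0$.

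The expected obstacle is making ``$H$ contains an artifact'' precise as a random-variable statement. Mutual information is an average over histories, but the certainty $\Pbb(O_{t'} = o' \mid O_k = o) = 1$ only holds on the event $\{O_k = o\}$. I would handle this by treating $H$ as a random history drawn conditionally on the event $E = \{O_k = o\}$, with all quantities taken under $\Pbb(\cdot \mid E)$. Under that measure $O_{t'}$ is a constant, so $H(O_{t'} \mid H') = 0$ holds exactly, and the argument above goes through.

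If instead one insists on the unconditional measure, I would argue as follows. The deleted position and value depend only on $(k, o)$, and $o$ is visible in $H'$, so $f$ can be defined piecewise: reinsert whenever the artifact is present, and otherwise act as the identity. This requires fixing the artifact's position $k$ as part of the hypothesis, which is what the theorem's phrasing allows.
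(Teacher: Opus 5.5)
Your main argument is correct and is essentially the paper's proof. The paper also deletes the certified referent (there the artifact sits at the current time $t$ and its referent at $t-k$), writes $\Ibb(O_{t+1};H)=\Ibb(O_{t+1};H')+\Ibb(O_{t+1};O_{t-k}\mid H')$, and kills the last term because $O_t=o$ survives in $H'$. Your explicit conditioning on $E=\{O_k=o\}$ makes precise what the paper's closing remark (the equality holds for histories in which $o'$ precedes its artifact $o$) leaves implicit.

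Drop your final paragraph, though. On $\{O_k\neq o\}$ the deleted $O_{t'}$ cannot be recovered from $H'$, so ``act as the identity'' does not give an inverse. Under the unconditional measure the claim genuinely fails for deletions. Take $O_1\in\{B,C\}$ uniform, $O_2\in\{A,D\}$ uniform after $B$ and $O_2=D$ after $C$, and $O_3$ a symbol encoding the pair $(O_1,O_2)$. Then $A$ is an artifact of $B$, yet deleting either observation strictly lowers $\Ibb(O_3;O_1,O_2)$. So the conditional reading is required, not optional.
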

The Artifact Reduction Theorem guarantees any history containing an artifact can be reduced by at least one observation. Thus, knowing $H'$ is equivalent to knowing $H$, even though $|H'|<|H|$. The reduction increases when $H$ contains multiple artifacts (\autoref{corr:multiple_reduction}); due to space constraints, this result is deferred to the Supplement. Importantly, reduction can only occur from distinct pairs of artifacts $o$ and referents $o'$. Otherwise, when multiple artifacts inform the same observation, their information becomes redundant.    


\subsection{Memory Beyond the Agent Boundary}\label{sec:externalization}
An agent is said to externalize memory if achieving a goal requires greater internal capacity in the absence of environmental artifacts than it does when those artifacts are available. To quantify this condition, our method compares performance across two settings: an artifactual environment $\xi$ and a corresponding control environment $\xi'$, defined as a copy of $\xi$ with all artifactual properties removed. We formally define this control setting and prove its existence.
\begin{restatable}[Existence of an Artifactless Copy]{proposition}{indet}
    For every artifactual environment $\xi$ and any $\epsilon \in (0,1)$, there exists a $\xi'$, called an \textbf{artifactless copy} of $\xi$, such that $\xi'$ has the same observations, actions, rewards, and transition topology, but differs in its randomness, such that for all pairs $(o,o')$, with $o\in\Omega_\xi$, and non-zero time-steps $t'<t$, we have 
    \begin{align*}
       \Pbb(O_{t'}=o' \mid O_{t}=o) \leq 1-\epsilon.
    \end{align*}
\end{restatable}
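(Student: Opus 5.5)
The idea is to build $\xi'$ by mixing $\xi$ with a small amount of uniform noise: the noise keeps the transition topology and the sets $\Ocal,\Acal,\Rcal$ unchanged, but breaks every certainty relation that makes an observation an artifact. Concretely, fix $\epsilon\in(0,1)$ and a mixing weight $\delta\in(0,1)$, to be chosen later. For every environment history $g\in\Gcal$ define
\begin{align*}
\xi'(\cdot\mid g) = (1-\delta)\,\xi(\cdot\mid g) + \delta\,\nu(\cdot\mid g),
\end{align*}
where $\nu(\cdot\mid g)$ is uniform over the observations reachable from $g$ in the transition graph of $\xi$. Restricting $\nu$ to reachable observations preserves the topology. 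The reward is attached to the observation, or is a deterministic function of the history, so rewards are unchanged. If the topology itself forces a referent (for example, when $o$ has only one possible predecessor $o'$), mixing alone cannot help. In that case I would widen $\nu$ to the smallest support that keeps the same graph while allowing, at each step, an alternative to the referent. "Same transition topology" then has to be read as sharing the graph skeleton while differing in edge probabilities.

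The key step is a Bayes computation. Take $o\in\Omega_\xi$, any $o'$, and non-zero $t'<t$. I would show that under $\xi'$ there is a history of positive probability that reaches $O_t=o$ without ever passing through $o'$ at time $t'$. For example, the noise can place at time $t'$ some observation $o''\ne o'$, which exists because $o$ and $o'$ are distinct and the uniform component has support on at least two observations at that step. Let $p_{t'}$ be the probability that the noise is invoked at step $t'$ and chooses $o''$. This gives
\begin{align*}
\Pbb_{\xi'}(O_{t'}\neq o'\mid O_t=o)\ \ge\ \frac{\Pbb_{\xi'}(O_{t'}=o'',\,O_t=o)}{\Pbb_{\xi'}(O_t=o)}.
\end{align*}
I then lower-bound this ratio uniformly in $(o,o',t',t)$ by a constant $c(\delta)>0$. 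Choosing $\delta$ so that $c(\delta)\ge\epsilon$ yields the bound claimed in the proposition. By the contrapositive of \autoref{lemma:prob}, $\xi'$ then has no conditional probability equal to $1$, so it is indeed artifactless.

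The main obstacle is uniformity. The times $t,t'$ are unbounded, and $\Pbb(O_t=o)$ can be exponentially small, so a naive bound $c(\delta)$ may vanish as $t$ grows. Also, an arbitrary $\epsilon$ close to $1$ requires the conditional probability of the referent to be at most $1-\epsilon$, which is close to $0$. To deal with this, I would first use $\delta$ close to $1$, so that $\xi'$ is nearly uniform over reachable successors. The conditional probability of $O_{t'}=o'$ given $O_t=o$ is then governed by a nearly uniform chain on the fixed finite graph, so it should be bounded by a constant depending only on $|\Ocal|$ and the graph. If that constant cannot be pushed below $1-\epsilon$ for large $\epsilon$, I would retreat to the weaker reading in which some $\epsilon$ exists. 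I would also state explicitly which reading of "any $\epsilon$" is achievable, and note the finite-graph assumption this relies on.
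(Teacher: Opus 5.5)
\textbf{There is a genuine gap.} It lies in the construction itself, not only in the uniformity estimate you postpone.

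Whether $\Pbb(O_{t'}=o'\mid O_t=o)=1$ holds is a \emph{support} property: it says that no positive-probability history has $O_t=o$ and $O_{t'}\neq o'$. Your $\nu(\cdot\mid g)$ is supported on the observations reachable from $g$, which is exactly how it preserves the topology. So the mixture satisfies $\xi'(x\mid g)>0$ if and only if $\xi(x\mid g)>0$, for every $g$ and $x$. Hence, for any fixed agent, $\xi'$ and $\xi$ give positive probability to exactly the same finite histories. Every artifact relation of $\xi$ therefore survives in $\xi'$ unchanged, for every $\delta\in(0,1)$ including $\delta$ near $1$, and $\Omega_{\xi'}=\Omega_\xi$.

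Concretely, in your Bayes step, a $\xi'$-positive history with $O_{t'}=o''\neq o'$ and $O_t=o$ would also be $\xi$-positive, contradicting $o\in\Omega_\xi$. The numerator is identically $0$, so no $c(\delta)>0$ exists. The ``forced referent'' situation you set aside as exceptional is in fact the only case. Your repair for it (widen $\nu$ while keeping the same graph) is self-contradictory: any alternative to the referent that still leads to $o$ at time $t$ is a new positive-probability history, i.e.\ a change of graph.

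\textbf{What is missing, and how the paper handles it.} The noise must \emph{enlarge} the support, so ``same transition topology'' has to be read loosely: same $\Ocal,\Acal,\Rcal$ and underlying dynamics, but noisier emissions, as in the No Path control. The paper's proof does this, though by fiat. For each tuple $(i,j,k,t)\in\Zcal$ it prescribes $\Pbb_{\xi'}(X_{t-k}=i\mid X_t=j)=1-\epsilon$ and puts the leftover mass on other values of $X_{t-k}$. This necessarily creates histories of $\xi$-probability zero. It never exhibits a forward kernel realizing all these prescriptions simultaneously.

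\textbf{A forward construction that works.} At each step, with probability $\eta$, replace the emitted observation by a uniform draw from all of $\Ocal$, independent of everything else. This also removes your uniformity worry:
\begin{itemize}
\item Since $\Pbb(O_t=o\mid\text{past})\geq\eta/|\Ocal|$ for every past, Bayes gives $\Pbb(O_{t'}\neq o'\mid O_t=o)\geq\eta^2(|\Ocal|-1)/|\Ocal|^2$, uniformly in $t'<t$. The exponentially small marginal cancels.
\item Letting $\eta\to1$ drives the conditional to $1/|\Ocal|$.
\end{itemize}

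\textbf{Large $\epsilon$.} Your hesitation here is warranted. Since $\sum_{o'}\Pbb(O_{t'}=o'\mid O_t=o)=1$, the bound for all $o'$ is impossible once $\epsilon>1-1/|\Ocal|$. The paper's proof does not confront this either. It leaves non-artifactual conditionals untouched, though these may lie in $(1-\epsilon,1)$, yet concludes the bound ``by construction'' for all distinct pairs. What is actually provable is one of two things:
\begin{itemize}
\item $\Omega_{\xi'}=\emptyset$, with the referent pairs pushed down to $1-\epsilon$; or
\item the full bound, but only for $\epsilon<1-1/|\Ocal|$.
\end{itemize}
State one of these explicitly rather than retreating to ``some $\epsilon$'' at the end.
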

Artifactless copies model common settings where observations provide no guarantee about what occurred in the past. Mathematically, the proof shows that an artifact can be obscured by adding noise to the observation distribution of $\xi$ such that $\xi'$ contains no artifacts: $\Omega_{\xi'}=\emptyset$. 

We define the externalization of memory as a performance-matching condition on an agent's internal capacity. Let $P$ be a scalar performance measure (such as average, discounted, or total reward), and recall that an agent's capacity is a scalar $C>0$ proportional to the number of action-value parameters. Our definition applies to agents that share the same design and vary only in their capacity.   
\begin{boxdef}{Externalizes Memory}{externalized_memory}\label{def:externalized_compute}
Suppose agent $\pi$ with capacity $C$ achieves performance $P$ in an artifactual environment $\xi$. Let $\xi'$ be an artifactless copy of $\xi$. Then $\pi$ \textbf{externalizes memory} to $\xi$ if any agent $\pi'$ with the same design as $\pi$ and capacity $C' \leq C$ achieves performance $P' < P$ in $\xi'$.
\end{boxdef}
The residual $C'-C$ serves as an upper bound on the amount of externalized memory.

\section{Experiments}
In this section, we present results from three experiments. All the experiments provide evidence that RL agents externalize memory in accordance with \hyperref[def:externalized_compute]{Definition 3}. Data is gathered in a simulated domain, where agents learn to navigate while observing different spatial artifacts. In the first experiment, we consider the effect of learning in the presence of a shortest path. Here we find the strongest evidence of externalization. Our second experiment studies other artifacts of varying optimality. We find externalization is present with some but not others. In the third experiment, agents learn in a non-stationary environment, where a path is dynamically generated throughout interaction. Each experiment evaluates linear Q-learning and DQN agents over a range of capacities.  
\paragraph{Environments.}
We consider simulated domains for spatial navigation, as in \autoref{fig:base_env}, all sharing common dynamics. An agent explores a two-dimensional space with the goal of finding an unknown location. Locations come from a $13\times 13$ grid. Each grid cell emits an $8\times 8$ binary image, which only contains a small amount of salt and pepper noise \citep{boncelet2005saltandpepper}. The noise patterns provide subtle but distinct markers to identify the states. Observations are composite images of $24\times 24$ pixels, providing an allocentric view of the $3\times 3$ region of cells surrounding the agent's current location. The agent cannot observe walls; at boundary states, observations are padded by additional images to prevent the locations from appearing visually distinct from any other state. Transitions are deterministic and occur along the four cardinal directions. When the agent reaches the goal, it is rewarded with a bonus of +1. For all other transitions, it receives a zero-valued reward. Episodes terminate when the agent reaches the goal. Subsequently, the agent resets to the starting location to begin a new episode. Additional details are provided in \autoref{sec:environment_details} of the Supplement.

\subsection{Methodology}
All three experiments follow a similar methodology. Learning performance is compared across two settings where the observability of an artifact differs. In one, an agent observes empty space with no visible artifacts (No Path, \autoref{subfig:no_path}), and in the other a fixed artifact is observable (e.g.~\autoref{subfig:opt_path}). We consider various spatial paths as artifacts, and we compare performance across these settings for a range of system capacities.   

Performance is quantified with both average and total reward. Average reward measures an agent's reward rate at any given time $t>0$; we calculate this as $\frac{1}{t}\sum_{n=1}^t r_n$ and note that is only achieved in the limit, as $t$ approaches infinity. Total reward provides an aggregate measure summarizing the performance across an agent's entire lifetime. The total reward over $N$ time steps is $\sum_{n=1}^Nr_n$. 

Following \hyperref[def:externalized_compute]{Definition 3}, externalization is established by the relative performance of two agents: $\pi$ and $\pi'$. Suppose $\pi$ learns when artifacts are present, with capacity $C$ and total reward of $P$. Separately, let $\pi'$ be a learner restricted to the No Path setting, with capacity $C'$ and total reward $P'$. Our experiments test the following condition.
\begin{condition}{Effective Memory Externalization}{externalization}
Whenever $P \geq P'$ and $C < C'$, we conclude that $\pi$ has effectively externalized memory in accordance with \hyperref[def:externalized_compute]{Definition 3}.
\end{condition}

We make thirty independent observations of performance, corresponding to different random initializations of the agent's parameters. We use \cref{condi:externalization} to define the alternative hypothesis of a one-sided statistical test (\autoref{app:stat_tests}). We reject at the 0.05 level then conclude there is sufficient evidence for memory externalization. We choose the most performant step-size for each capacity from a uniform grid search and report statistics from a separate evaluation. See the Supplement for additional details.

We consider linear Q-learning agents that can have 16, 64, 256, 400, or 576 weights $w_a$. These correspond to images of size $4\times 4, 8\times 8, 16\times 16, 20\times 20, 24\times 24$, respectively. A capacity constraint is imposed on the input channel, through the transduction function, defined as a projection that selects the sub-image centered on the agent's location. 

DQN agents use fully-connected ReLU networks of two or three layers. Specifically, we consider the set of networks produced by $\{2,3\}\times\{4,8,16,32\}$, where the first set specifies the number of layers and the second is the number of hidden units per layer. The final layer for all networks has an output dimension equal to the number of actions. The transduction function is the identity.
\begin{figure}[h]
    \centering
    \begin{subfigure}[t]{0.49\textwidth}
        \centering
        \includegraphics[width=\textwidth]{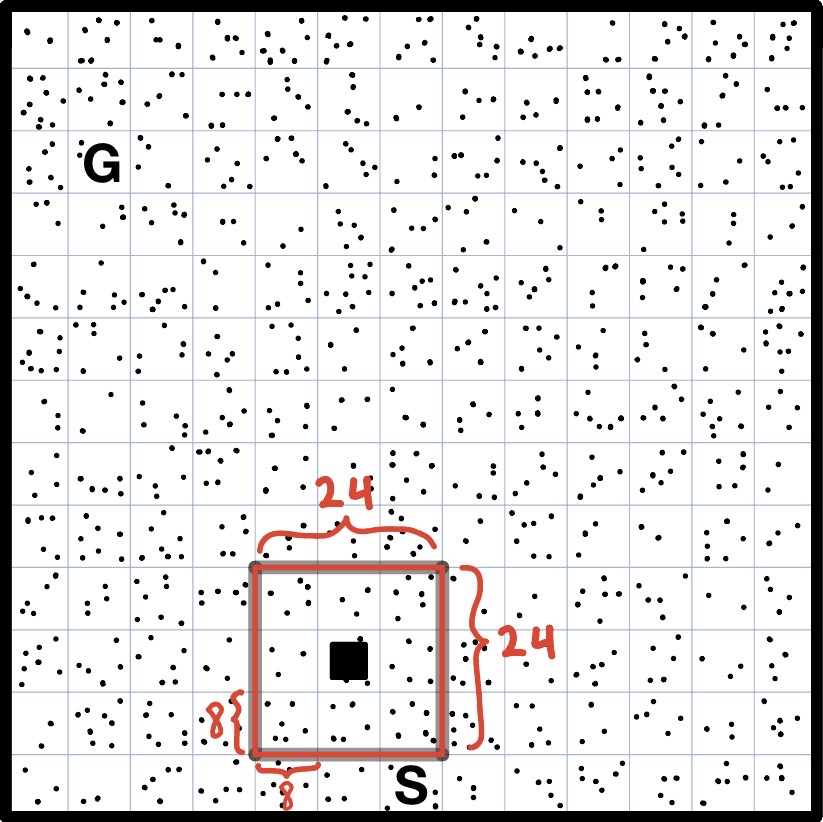}
        \caption{No Path}
        \label{subfig:no_path}
    \end{subfigure}
    \begin{subfigure}[t]{0.488\textwidth}
        \centering
        \includegraphics[width=\textwidth]{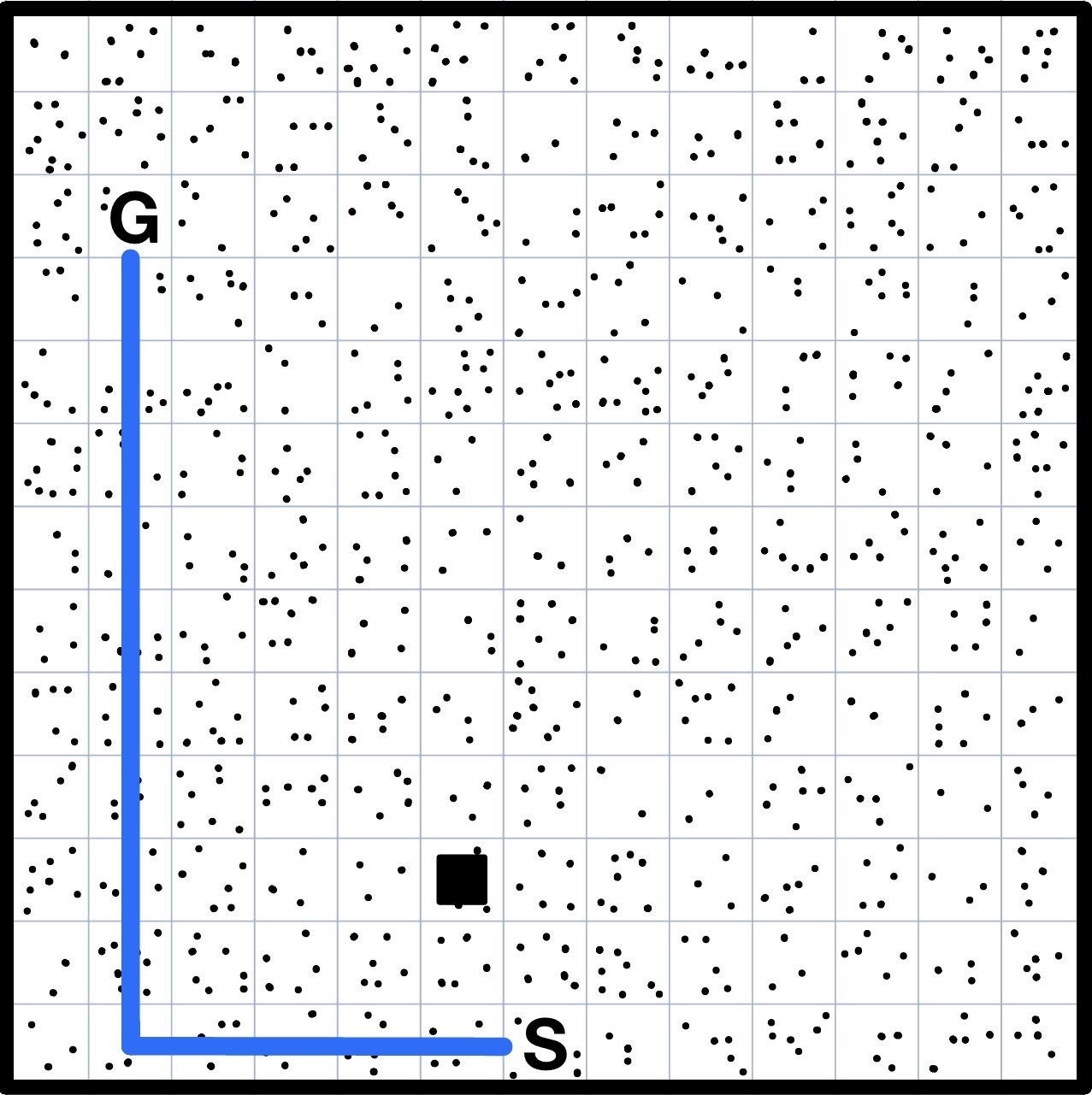}
        \caption{Optimal Path}
        \label{subfig:opt_path}
    \end{subfigure}
    \caption{The base environment used throughout experiments.}
    \label{fig:base_env}
\end{figure}
\begin{figure}[h]
    \centering
    \includegraphics[width=0.3\linewidth]{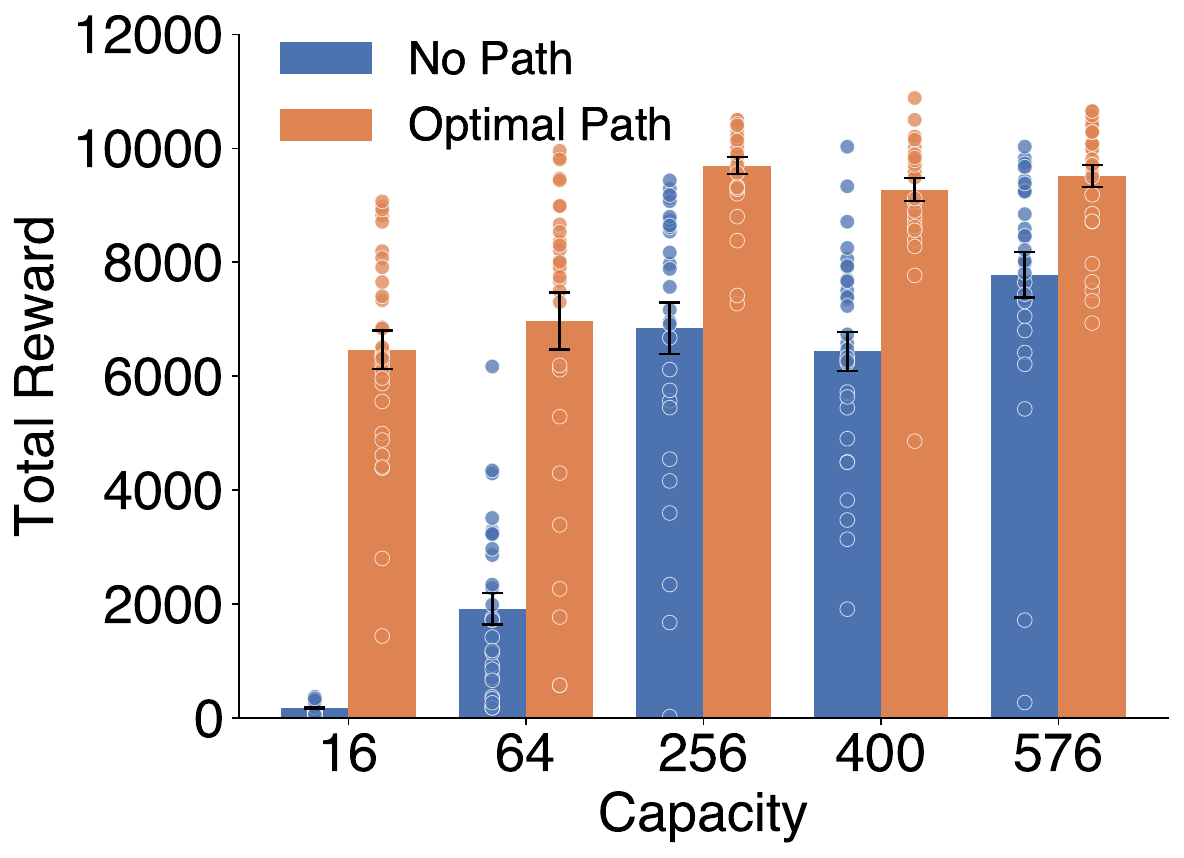}
    \includegraphics[width=.6\linewidth]{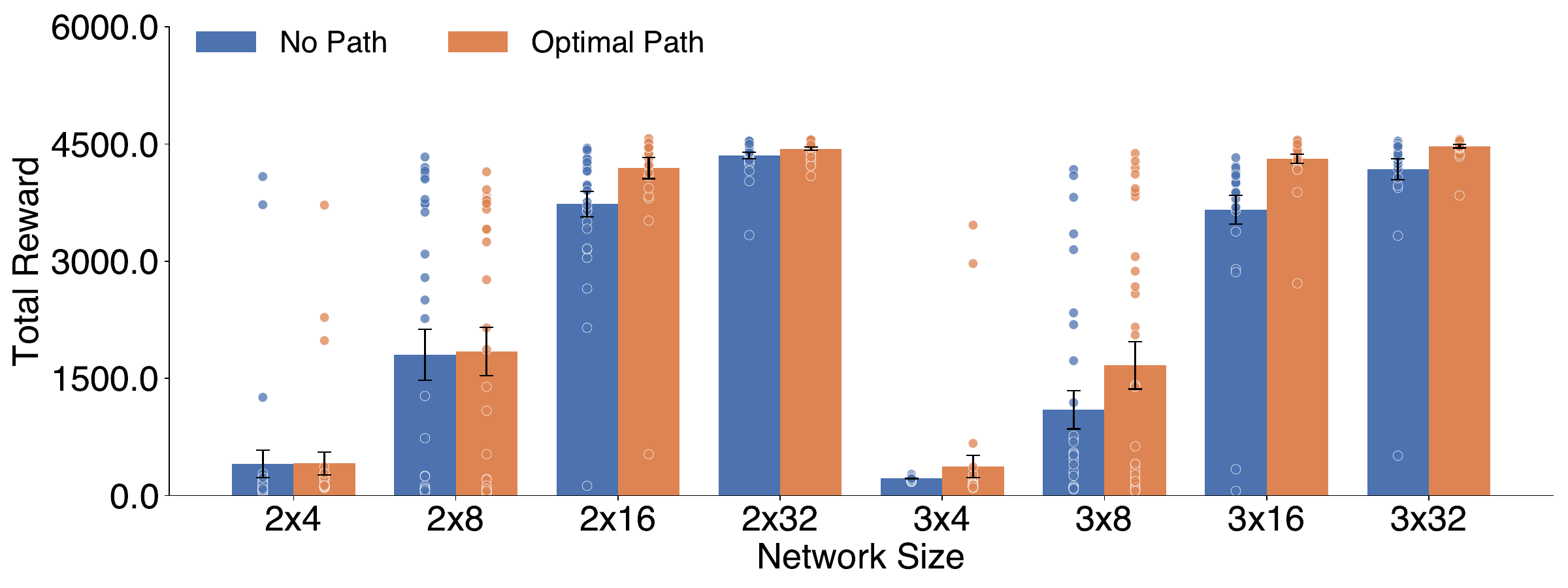}
    \caption{\textbf{Observing a spatial path reduces the necessary capacity to navigate.} Averages of total reward for Linear Q-learning (left) and DQN (right) are shown, along with standard-error bars. We find that learning in presence of a shortest path improves performance for nearly every agent and capacity. Many improvements are statistically significant (see Supplement \ref{app:stat_tests}).  }
    \label{fig:e1-total_reward}
\end{figure}
\begin{figure}[h]
    \centering
    \includegraphics[width=0.49\linewidth]{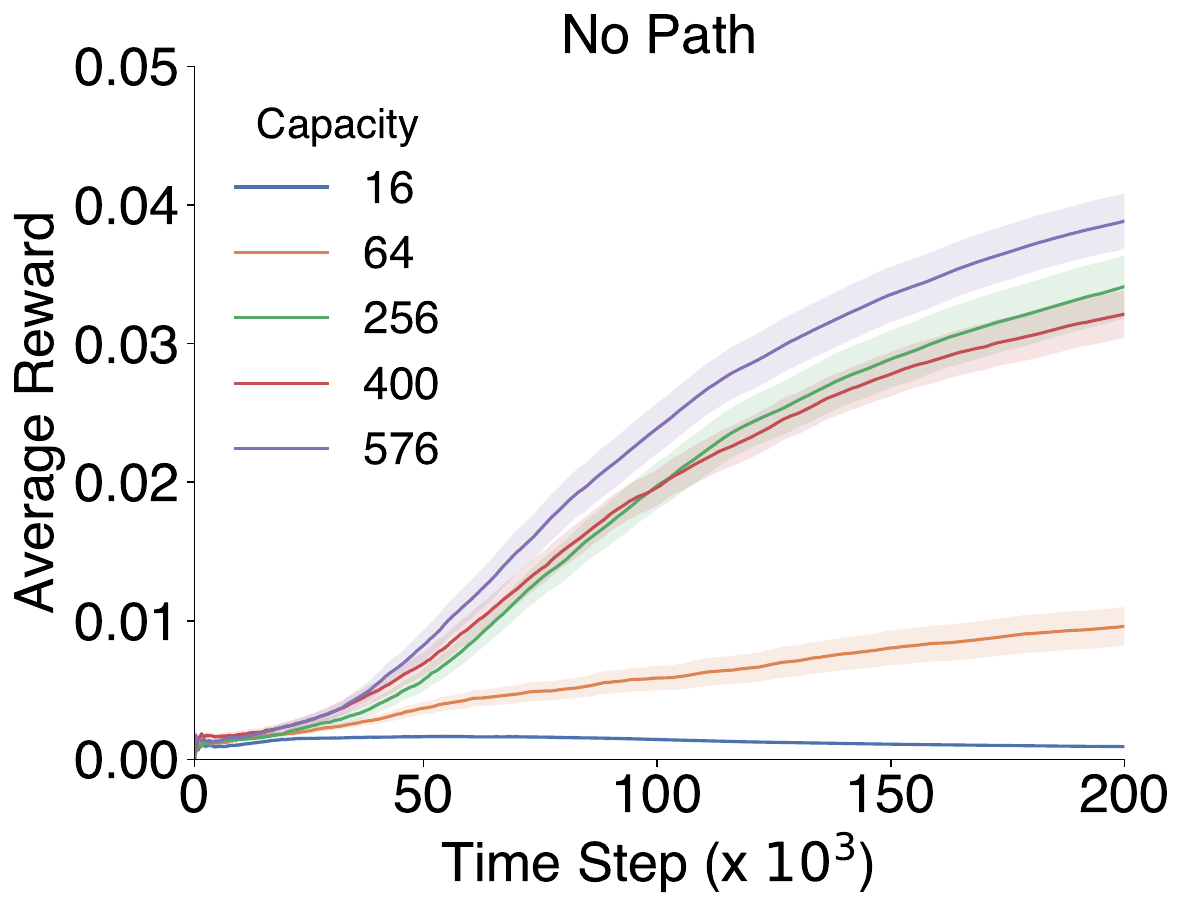}
    \includegraphics[width=0.49\linewidth]{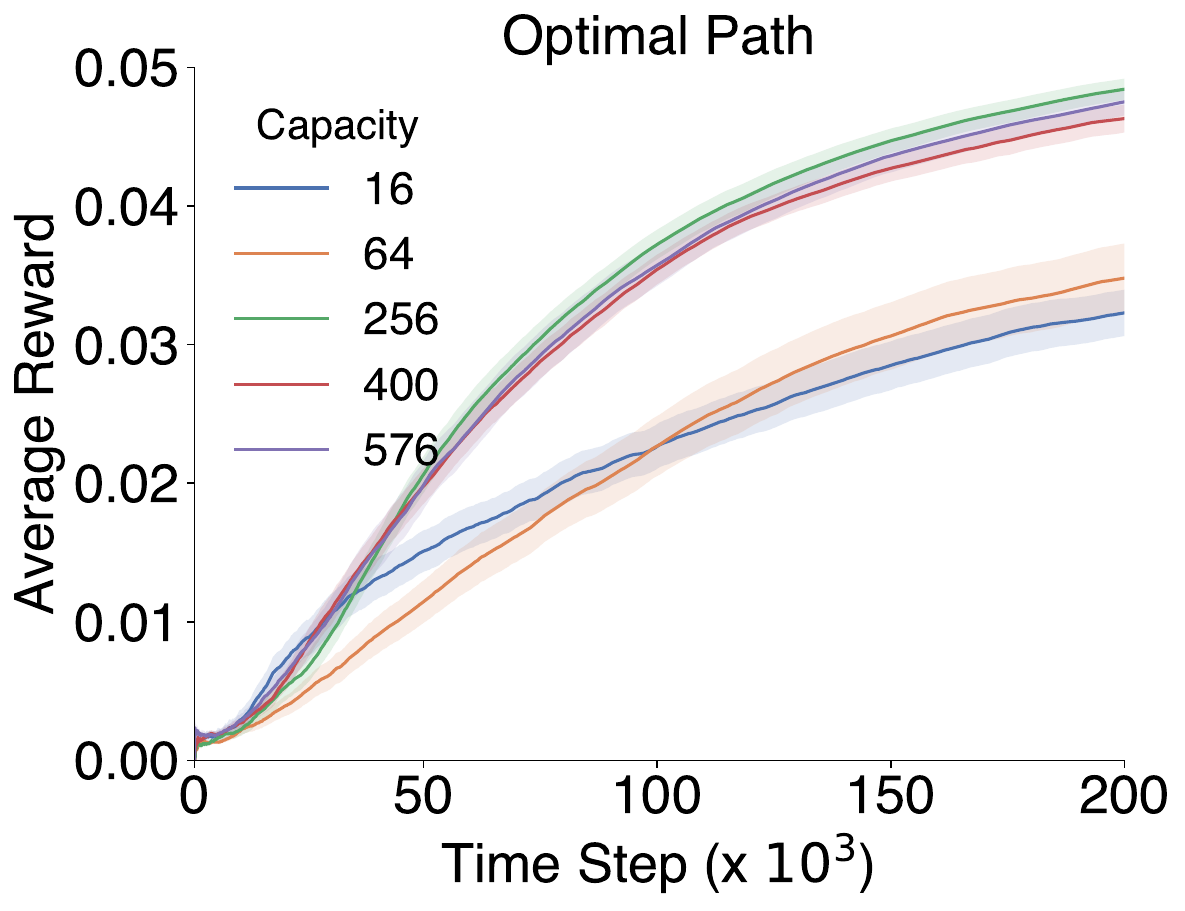}\\
    \includegraphics[width=0.49\linewidth]{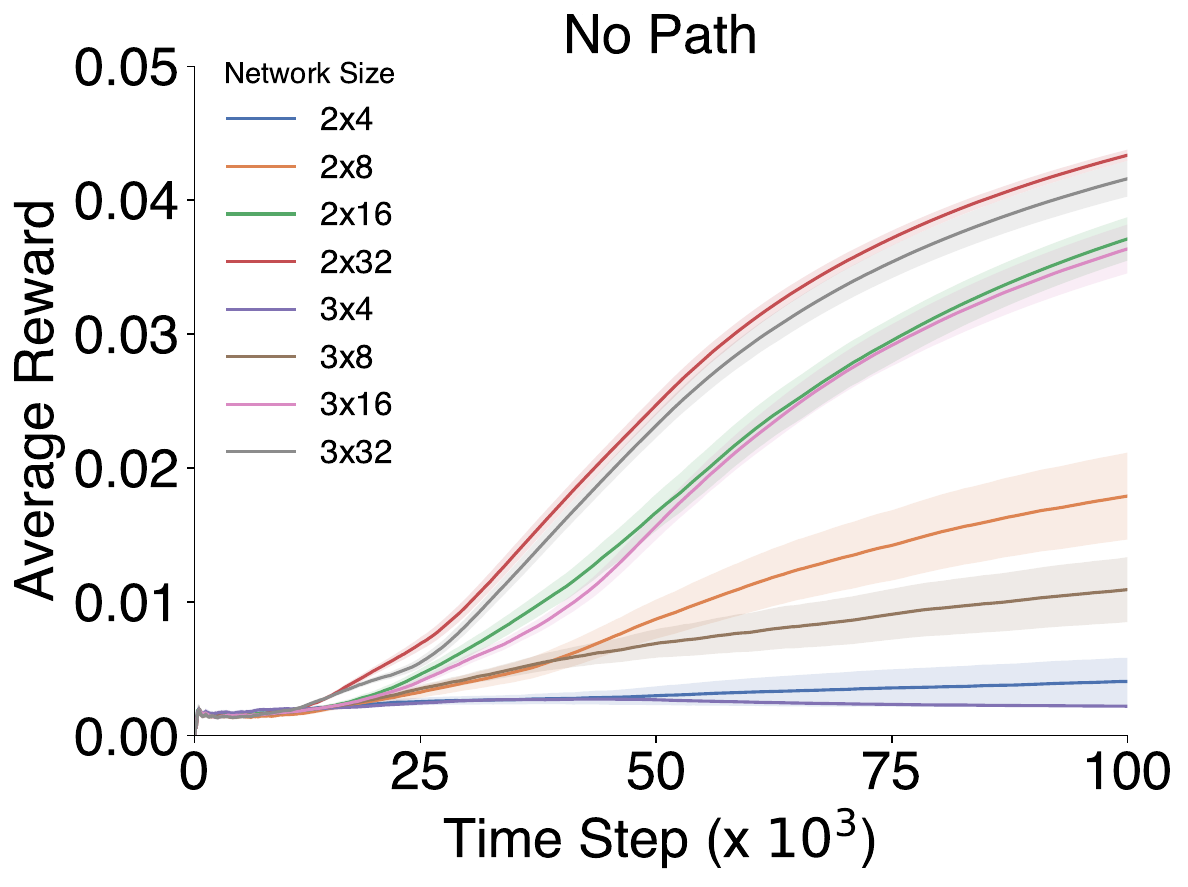}
    \includegraphics[width=0.49\linewidth]{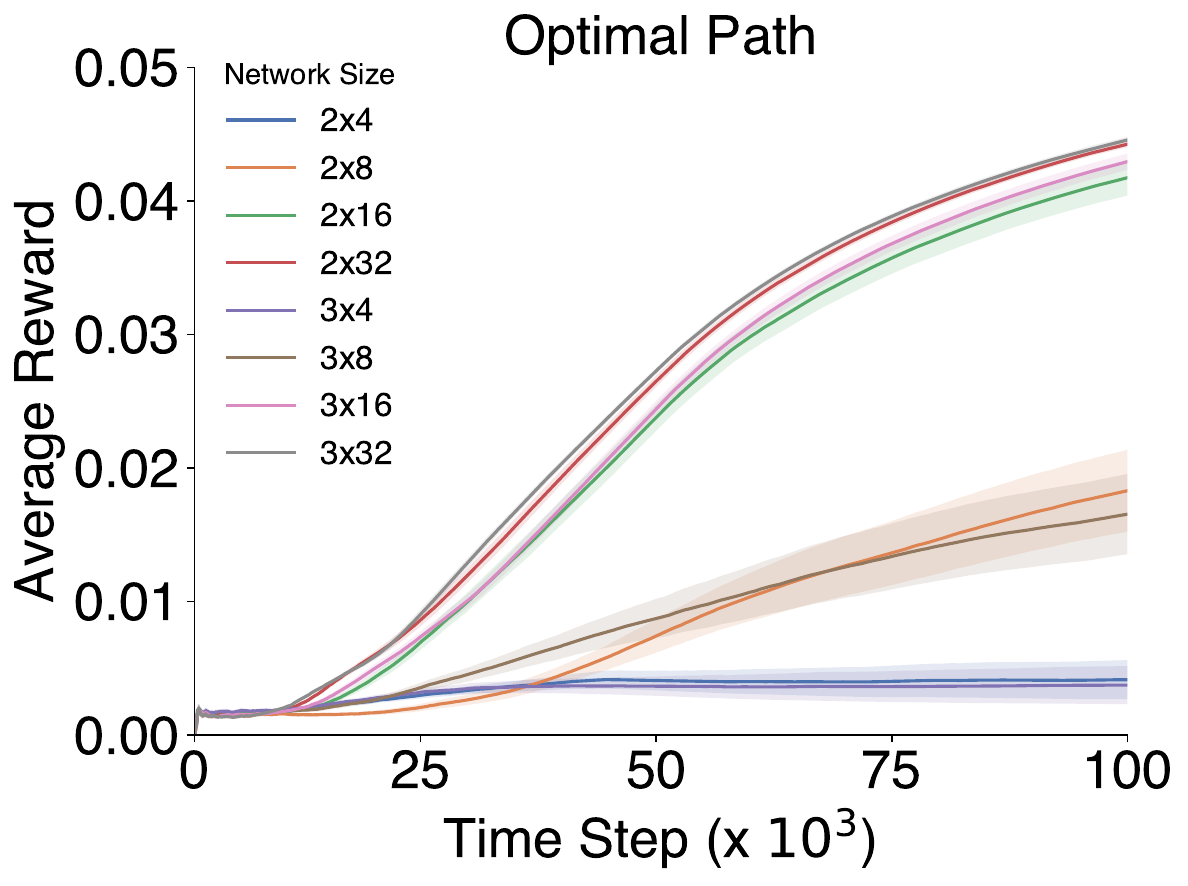}
    \caption{\textbf{Performance improves when agents observe the shortest path:} Average reward tends to increase when the shortest path is visible. This can be observed for nearly every capacity of Linear-Q and DQN; it appears most significant for higher capacity systems, but also has a stark affect in the low capacity regime. Averages and standard error regions are computed with 30 seeds. }
    \label{fig:e1-avg_reward}
\end{figure}
\subsection{Learning in the Presence of a Minimum-length Path}
This experiment compares performance across two domains: one in which the shortest path is visible (Optimal Path, \autoref{subfig:opt_path}), and one in which no path is visible (No Path, \autoref{subfig:no_path}). 

\autoref{fig:e1-total_reward} shows plots of total reward for all agents and capacities. We find \cref{condi:externalization} is satisfied in several cases. Consider the No Path linear agent with $C'=64$ weights per action-value and the Optimal Path agent with $C=16$ weights; we observe $P > P'$ while $C < C'$. In other words, the necessary capacity to achieve comparable performance is reduced when the optimal path is visible. The amount of memory externalized is at most $48=C'-C$ weights per action-value. Interestingly, this occurs below the theoretical minimum of $169=13^2$ dimensions, as predicted by the rank of the matrix with every vectorized image. In this case, the agent can represent a relatively simple policy that indexes on the optimal path; in particular, conjunctions of a few horizontal and vertical pixels suffice to represent a reliably rewarding action. The effect is also apparent with other capacities and in the deep RL setting, e.g. DQN with $3\times 16$ and $3\times 32$. Due to space constraints, we defer further analysis to Supplement \ref{app:exp1}.

Average reward is plotted in \autoref{fig:e1-avg_reward}. This quantity shows how the shortest path affects learning performance throughout an agent's lifetime. The effect is stark in the low-capacity regime; without a path, linear agents are unable to reach the goal with less than 256 weights; with a path, they reliably reach the goal. DQN also experiences a performance boost with near uniformity across the range of the considered network sizes. All the points at which externalization is significant are tabulated in Supplement \ref{app:experimental_details}.

\subsection{Learning in the Presence of Other Fixed Artifacts}
This experiment repeats the previous performance comparison with four additional artifacts. Now we ask whether the impact on performance varies with the quality of behavior expressed by a given path. In addition, we evaluate learning in the presence of a non-behavioral artifact: a set of geometric landmarks. These help us to understand if externalization is possible without an overt behavioral signal, like a goal-directed path. Each artifact is listed below and illustrated in \autoref{fig:e2-artifacts}.
\begin{itemize}
    \item Random: a path generated with uniform random actions (\autoref{subfig:random}).
    \item Suboptimal: a path that reaches the goal with a few more steps than optimal (\autoref{subfig:suboptimal}).
    \item Misleading: a path that steers toward the goal then veers off (\autoref{subfig:misleading}). 
    \item Landmarks: geometric structures of various sizes, shapes, and locations (\autoref{subfig:landmarks}).
\end{itemize}
\begin{figure}[H]
    \centering
    \begin{subfigure}[t]{0.24\linewidth}
        \centering
        \includegraphics[width=\linewidth]{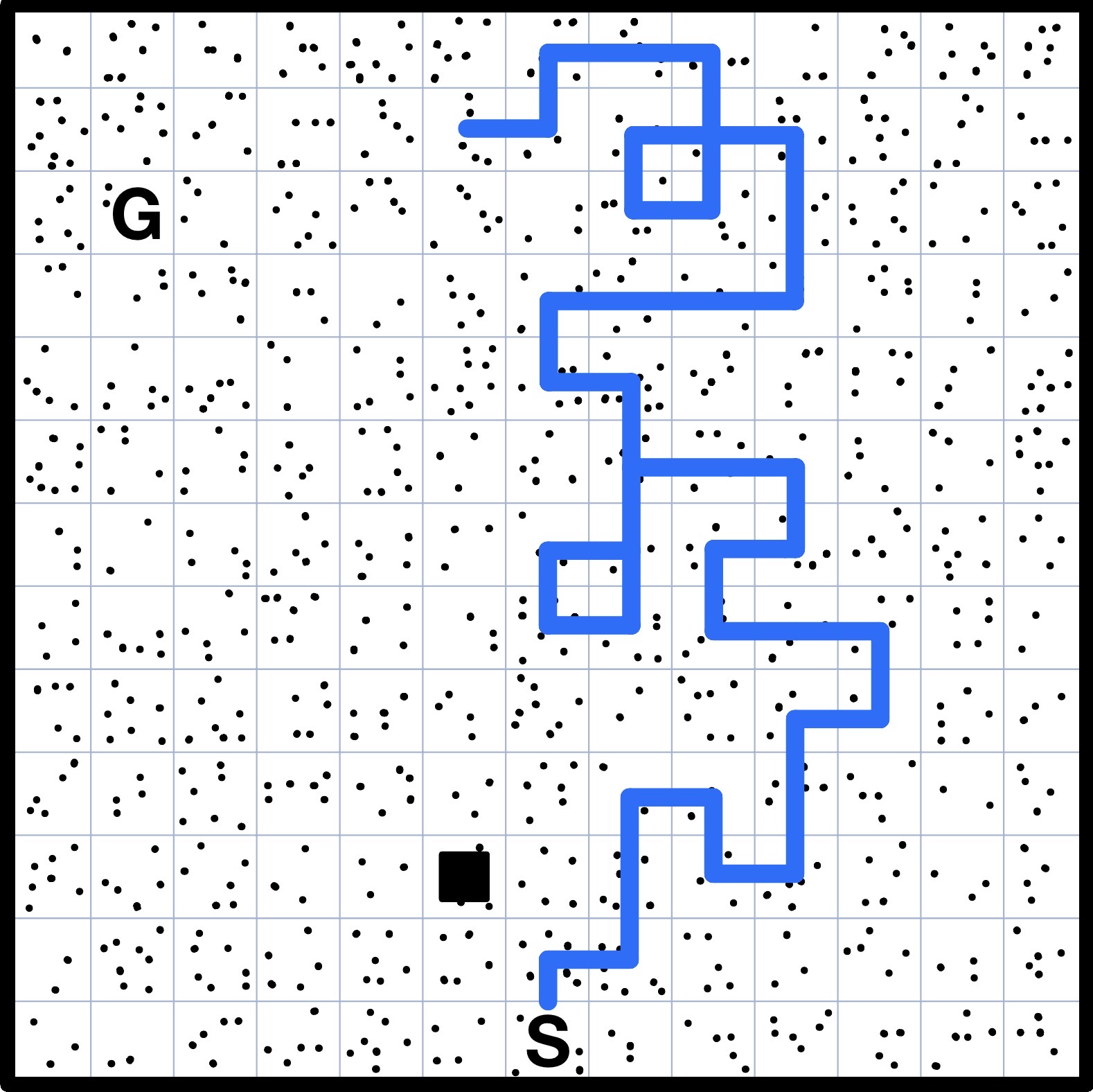}
        \caption{Random}
        \label{subfig:random}
    \end{subfigure}
    \hfill
    \begin{subfigure}[t]{0.24\linewidth}
        \centering
        \includegraphics[width=\linewidth]{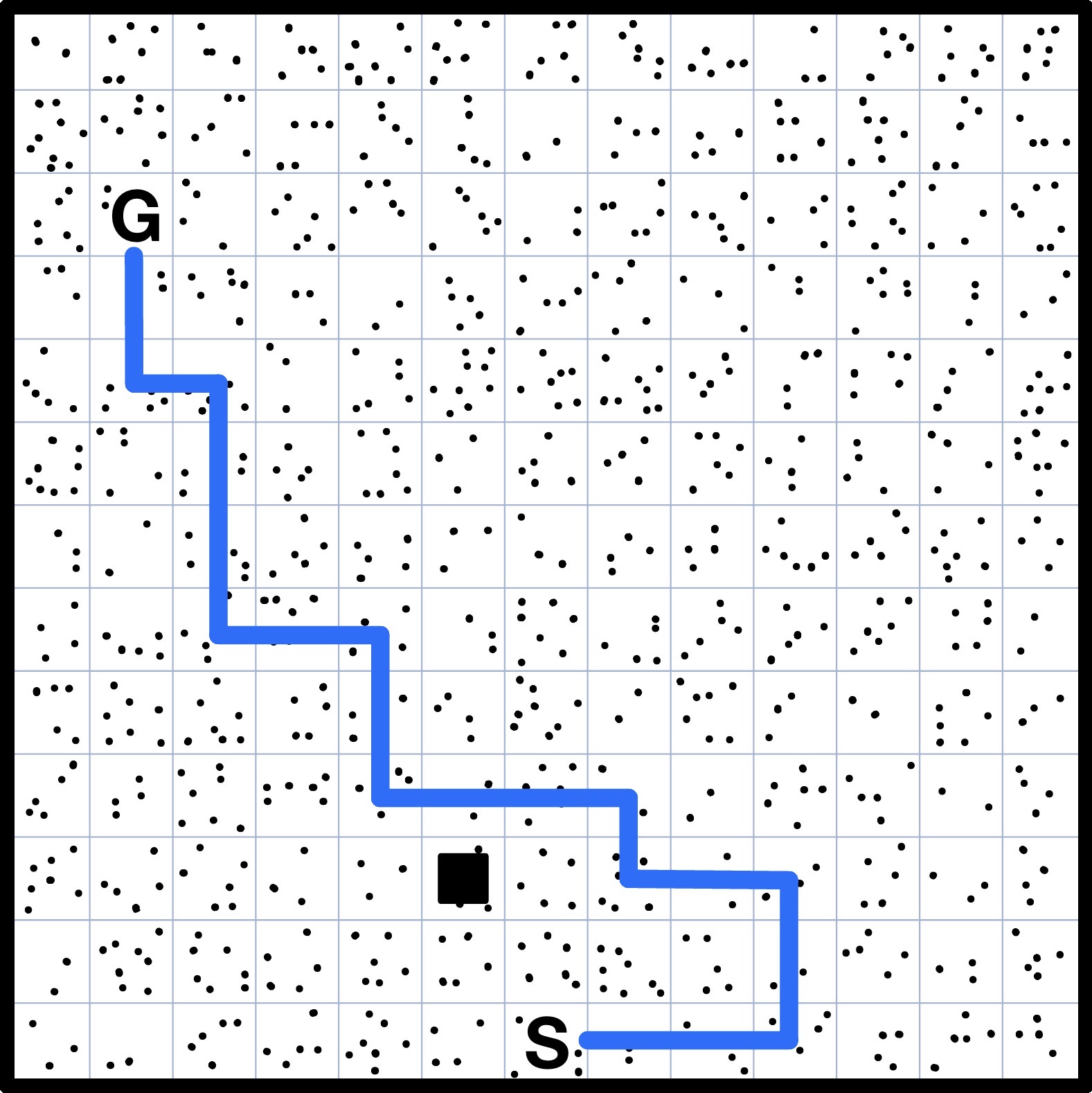}
        \caption{Suboptimal}
        \label{subfig:suboptimal}
    \end{subfigure}
    \hfill
    \begin{subfigure}[t]{0.24\linewidth}
        \centering
        \includegraphics[width=\linewidth]{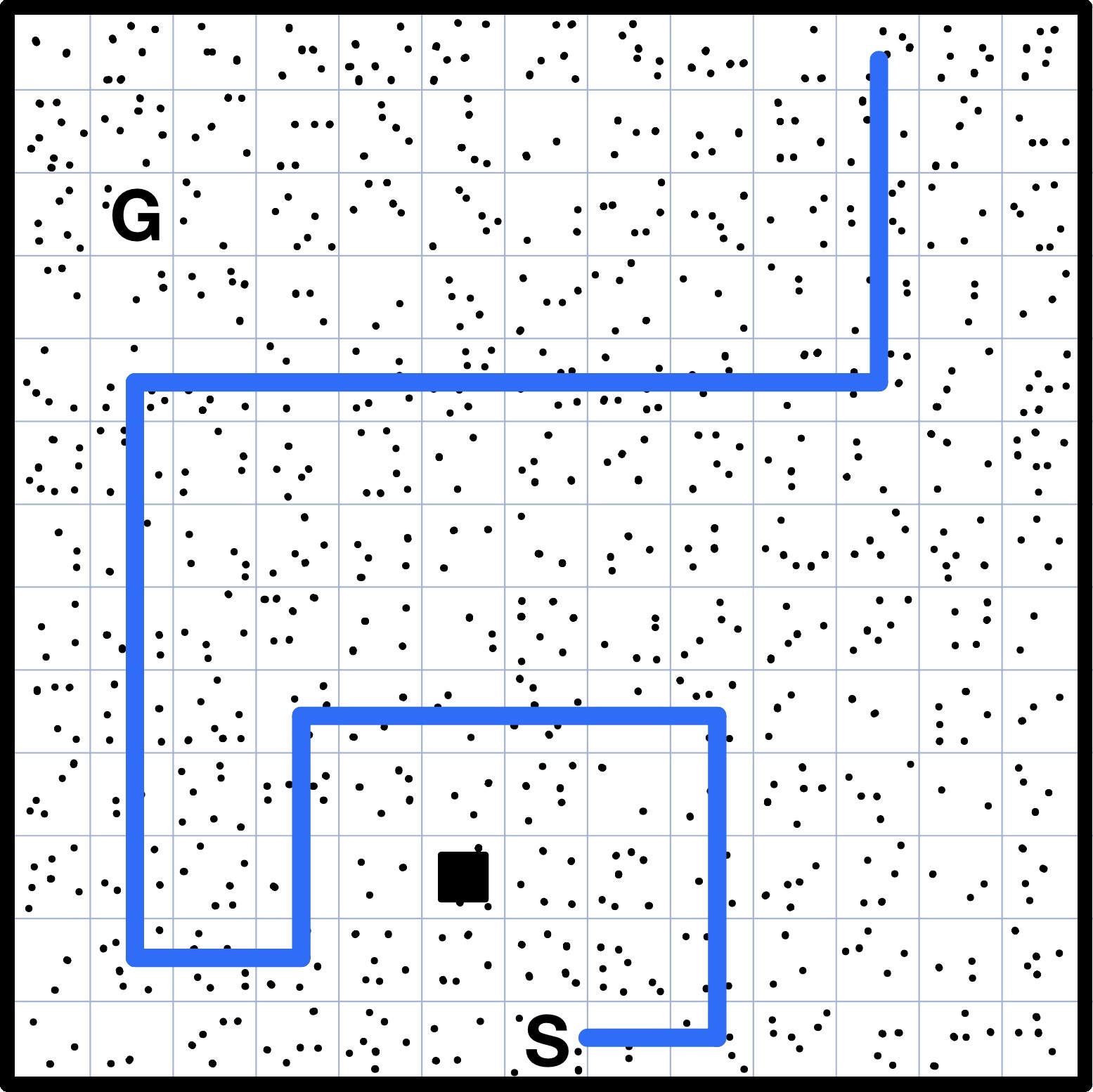}
        \caption{Misleading}
        \label{subfig:misleading}
    \end{subfigure}
    \hfill
    \begin{subfigure}[t]{0.24\linewidth}
        \centering
        \includegraphics[width=\linewidth]{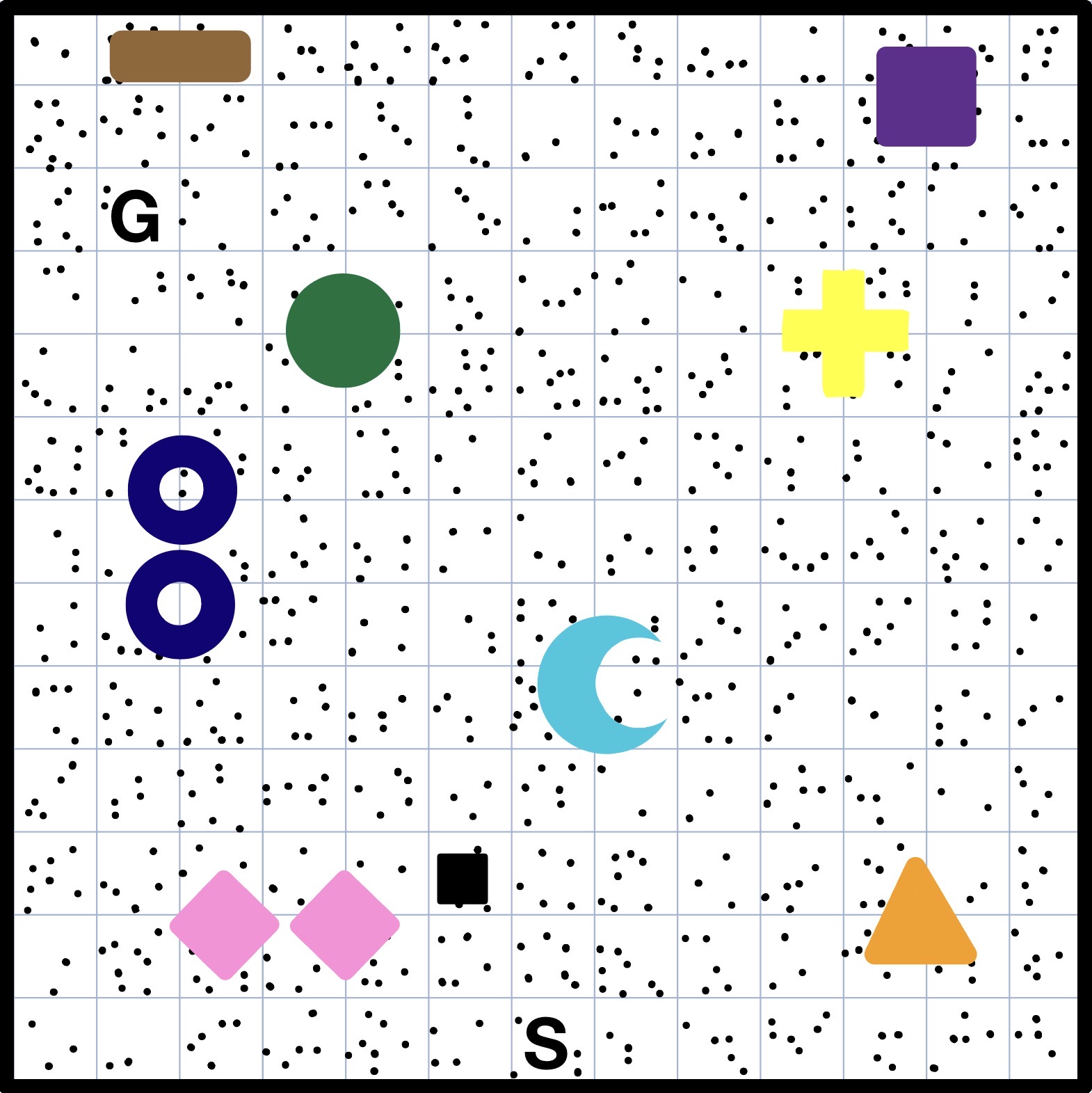}
        \caption{Landmarks}
        \label{subfig:landmarks}
    \end{subfigure}
    \caption{Environments considered for learning in the presence of other fixed artifacts.}
    \label{fig:e2-artifacts}
\end{figure}
\begin{figure}[h]
    \centering
    \includegraphics[width=\linewidth]{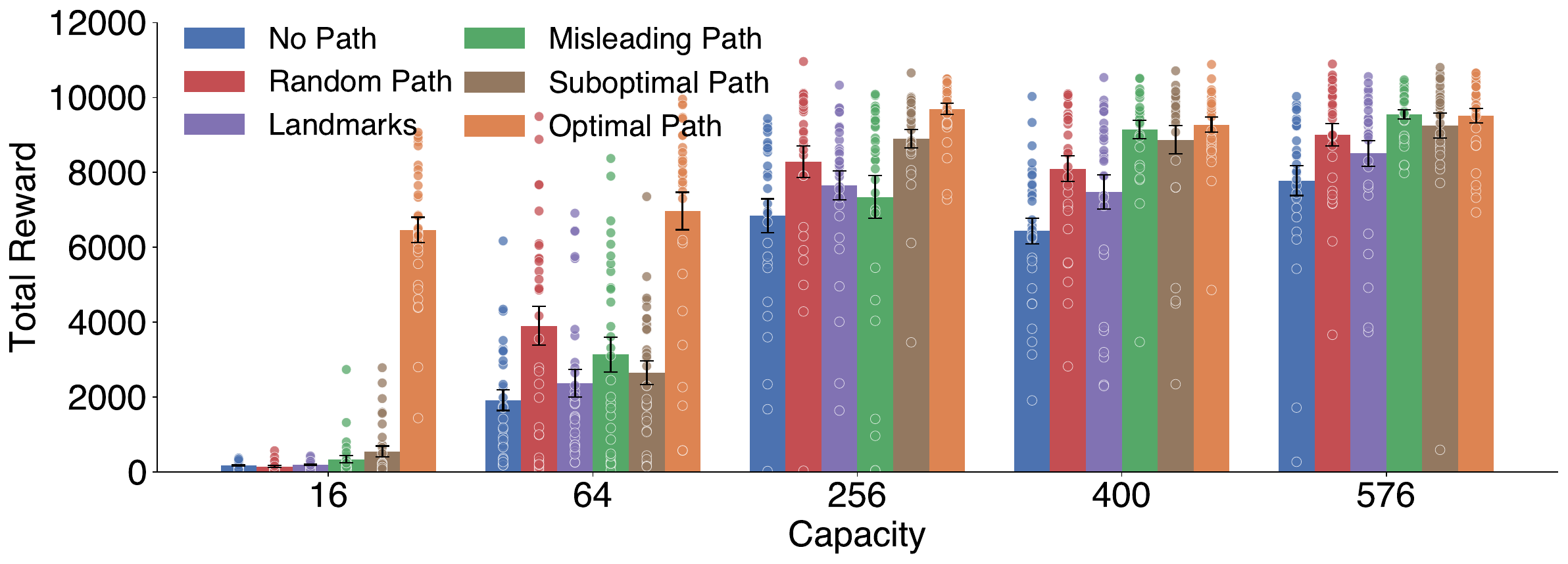}
    \includegraphics[width=\linewidth]{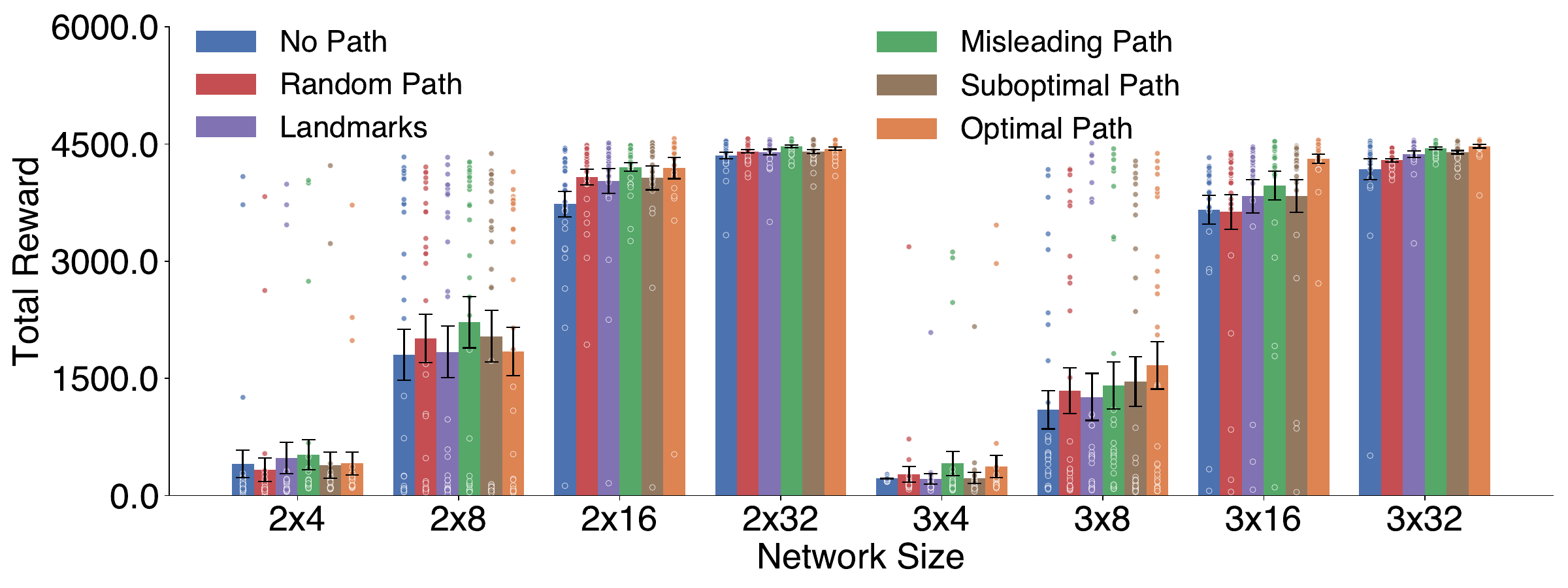}
    \caption{\textbf{Externalization arises with other fixed artifacts:} Average total reward is observed for three paths and one set of geometric landmarks. We find evidence of externalizing memory across all artifacts and with the following number of instances for linear agents: Suboptimal (3), Misleading (2), Random (2), Landmarks (1). For DQN, Suboptimal (2), Landmarks (2), Random (1), Misleading (0). Each bar presents an average and standard-error from the 30 seeds shown.}
    \label{fig:e2-total_reward}
\end{figure}

\autoref{fig:e2-total_reward} shows plots of total reward for every artifact and across the previous range of capacities. We find linear agents externalize memory with all four artifacts, though to varying degrees. For instance, Random Path with $C=256$  vs No Path with $C' =400$. Such examples rule out the simple hypothesis that agents merely follow paths, because performance would otherwise be comparable to No Path. Our Landmarks baseline provides further support for this interpretation: specifically for $C=256$ and $C'=400$. The same story applies to DQN, which we find externalizes memory with the Suboptimal, Random, and Landmarks baselines. We evaluate an expanded range of capacities for the linear agent in \autoref{fig:total_reward_lines}, provide plots of average reward and tabulated results in \autoref{app:experimental_details}.

\subsection{Learning in the Presence of a Dynamic Path}\label{sec:exp3}
This experiment moves to a more naturalistic setting in which the artifact is generated by an agent's own behavior (\autoref{fig:e3-total_reward}). Specifically, as the agent moves through the environment, a noisy path appears at visited locations and gradually fades until it is indistinguishable from the background. Given the non-stationary nature of this environment, we restrict our analysis to linear Q-learning, which is capable of tracking non-stationarity; conventional DQN, relying on a replay buffer, was unable to learn a performant policy in such settings. We repeat our test for externalized memory.

\autoref{fig:e3-total_reward} shows plots of total reward. \cref{condi:externalization} is satisfied at $C=256$ weights per action value. Weaker evidence of externalization occurs for $C=400$ at the 0.11 level. Similar to our other experiments, the presence of an artifact path appears to uniformly increase total reward across the range of capacities. Average reward curves and tabulated results are provided in \autoref{app:experimental_details}.

\begin{figure}[]
    \centering
    \includegraphics[width=0.33\linewidth]{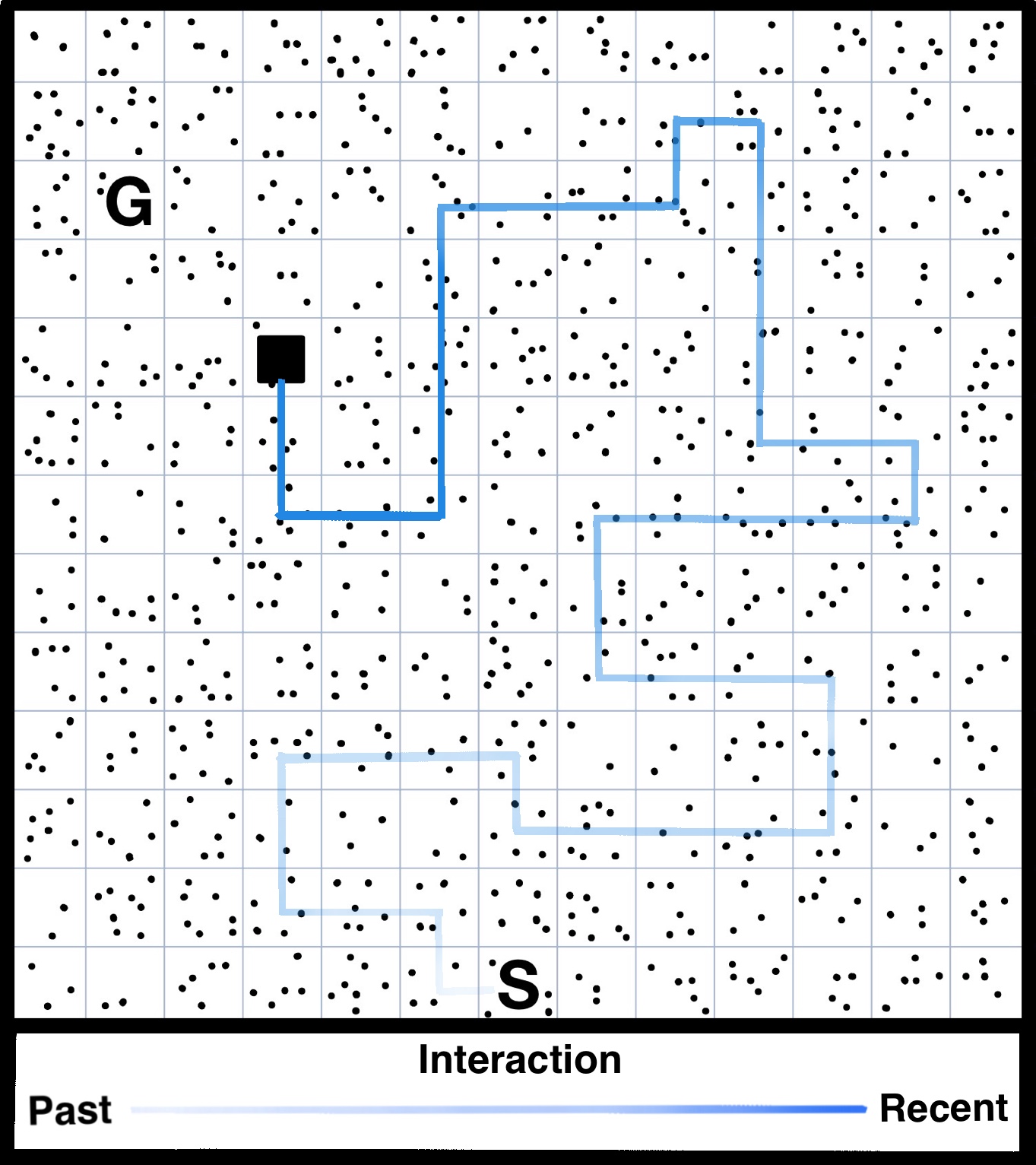}
    \includegraphics[width=0.5\linewidth]{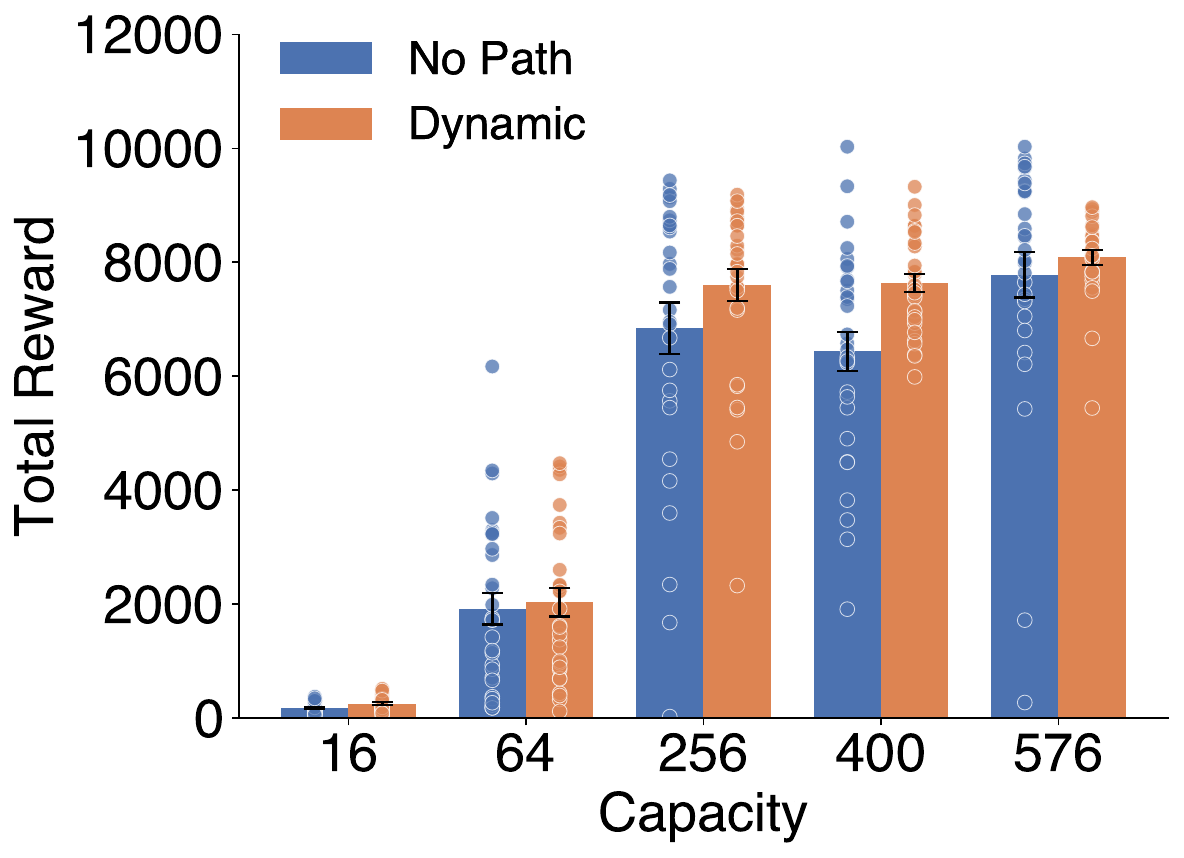}
    \caption{\textbf{Externalization with a dynamic path:} The left panel illustrates the environment, in which the current policy generates a path vanishing over time. We observe performance uniformly increase over nearly all capacities and \cref{condi:externalization} satisfied for $C=256$. }
    \label{fig:e3-total_reward}
\end{figure}

\section{Discussion}\label{sec:criteria}

\paragraph{Artifactual Environments and Classical Memory.} Some readers may wonder how artifactual environments express the basic encode-store-retrieve functionality of classic memory models, such as those \cite{klein2015memory} and \cite{sep-memory} describe. Artifacts meet the first requirement by virtue of being observations that provide information about an agent’s history (\autoref{def:artifact}). In the environments we study, information is anchored to locations in space; data is written upon first visiting a location, then later retrieved by returning to it. In this way, spatial artifacts function as memory records with an access protocol dictated by the environment and interface $(\Ocal,\Acal)$. 

\paragraph{Artifacts as Situated Memory.} Situated accounts of memory enrich the classical model by grounding memory's purpose in service of decision-making \citep{clark1998extended}. In regards to external memory, \cite{michaelian2012external} argues that a model must satisfy certain criteria to capture the essential functionality of natural memory. Michaelian requires agents have constant access to an information-bearing resource and some process to determine the information’s relevance. Following \cite{sims2022externalized}, we summarize these in three points:
\begin{enumerate}
    \item Survival relevant: a memory should bring positive value to decision making.
    \item Susceptible to change: memories are mutable.
    \item Selection: a memory's relevance is determined through some selection process.  
\end{enumerate}
The first requirement underscores the cost of storage. As \cite{sims2022externalized} put it: memory must be ``worth its weight in terms of long-term fitness benefits.'' The second point preserves the basic functionality of the encode-store-retrieve model, while the third requires the existence of a process to determine a memory's relevance in a given scenario.  

\cite{sims2022externalized} use these desiderata to argue the spatial trails left behind by slime mold \citep{reid2012slime} function as external memory. We similarly argue that the artifacts from our empirical study satisfy these desiderata. In support of (1), note that an artifact's value is immediately apparent from total reward (see Figures  \ref{fig:e1-total_reward}, \ref{fig:e2-total_reward}, and \ref{fig:e3-total_reward}); agents in artifactual environments consistently accumulate more reward than in artifactless environments. Support for (2) follows directly from the encode-store-retrieve model, to which artifacts from the Dynamic Path conform (see \autoref{fig:e3-total_reward}). Fixed artifacts provide read-only information and yet still produce an external memory effect, suggesting that reading is more fundamental than writing when learning to navigate and the desiderata may need further refinement. Support for (3) comes from the learning process. Through repeated credit assignment, policies that read and write on each step gradually improve and bias navigation toward goal-relevant locations. With these properties in place, we conclude the artifacts from our study support the same arguments and conclusions as previous accounts of external memory. 

\paragraph{Unintentional Memory.} Our experiments demonstrate that an agent can read and write information to the environment without any explicit objective directing it to do so. In each experiment, agents were given a standard navigation objective: a sparse reward signal providing a bonus for reaching the goal, but no explicit incentive to follow a path. Still, we observe path-following behavior, as performance would otherwise match the No Path baseline. Moreover, in the Dynamic Path environment (\autoref{fig:e3-total_reward}), agents record traces of their previous interactions without explict direction. These artifacts go on to guide future behavior. Remarkably, this form of emergent behavior requires no explicit design or human involvement; it emerges naturally as a consequence of reinforcement learning in a sufficiently complex environment.

\paragraph{Implications for Agent Design.} A popular line of research pursues designs whose performance scales with the number of trainable parameters. This direction is motivated by milestone developments \citep{silver2016mastering, brown2020language, fawzi2022discovering}, historical arguments for the primacy of computation \citep{sutton2019bitter}, and empirical findings of power-law relationships between system capacity and performance \citep{kaplan2020scaling}. Our results hint at another path: rather than scaling system resources, performance gains may instead arise from environments that coevolve with the agent. It is possible that current designs are already sufficient for competent, human-level performance, but require judicious pairing with an appropriate environment to scaffold problem solving \citep{sterelny2010minds}. More work is needed to understand the laws governing the relationship between environment and agent design.

\paragraph{Limitations.}A primary aim of this paper was to offer a purely observational account of external memory. However, our formalism raises the question of whether artifacts can be defined to encompass action as well. Such an extension would be relatively straightforward, though validating it would require considering artifacts that encode previous actions. In a gridworld setting, this amounts to providing directional information, which our bidirectional spatial paths do not capture. 

Some may view the total certainty conveyed by artifacts as a limitation, arguing that a more general theory would incorporate a stochastic condition. We agree that not all forms of external memory need to guarantee knowledge transfer with total certainty. Yet even this simple assumption suffices to prove that artifacts reduce the information needed to represent the past (\autoref{thm:artifact_reduction}). Characterizing what guarantees can be made when artifacts provide only partial information remains an interesting direction for future work.




\section{Related Work}
\textbf{Memory in RL.} Classical memory is any state or process resulting from the sequence of encoding, storing, and retrieving information \citep{klein2015memory, gazzaniga2009cognitive, bernecker2017routledge}. In RL, the term `memory' refers to several distinct concepts and processes, such as the agent's functional dependence on the past \citep{littman1994memoryless, singh1994learning, mnih2015human, abel2023convergence, Icarte2020-bd}, the representational capacity of its internal state \citep{dong2022simple,sutton2018reinforcement, tamborski2025memory}, even the specific use of recurrent architectures \citep{hausknecht2015deep} or replay buffers \citep{lin1992self, schaul2016prioritized}. In other contexts, the term applies more broadly to any learnable parameters or writable storage \citep{oh2016control, khan2017memory}. Our work studies memory as it pertains to the representational capacity of a value function. We argue that a certain, fixed amount of capacity is required to achieve a goal, and when this memory footprint is reduced in the presence of artifacts, the deficit must be compensated for by the environment (Figures \ref{fig:e1-total_reward}, \ref{fig:e2-total_reward}, \ref{fig:e3-total_reward}).

\paragraph{Artifacts are Episodic Memories.} Memory can be taxonomized by the content it provides. Model-free RL agents acquire \textit{procedural memory}, also known as habit memory \citep{russell1921analysis, sep-memory}. We study procedural memory in the context of a value function. In our work, the content of an artifact couples with the weights of a value function to inform the agent how to navigate. Recall an environment is artifactual when a given observation $o$ determines one of the past $\Pbb(O_{t'}=o' \mid O_{t}=o) = 1$. Thus, on their own, artifacts constitute a kind of \textit{episodic memory}: knowledge of ones personal past \citep{Tulving1972}. Several works argue for the centrality of episodic memory in natural agents \citep{gershman2017reinforcement, lengyel2007hippocampal} and in AI systems \citep{blundell2016model, hu2021generalizable, pritzel2017neural, lin2018episodic, zhu2020episodic}. Thus, artifacts are episodic memories that reduce procedural memory.

\paragraph{The Memory of Artificial Agents.}
The notion of individuated memory has long structured computational models of agency. Early models treat agents as unified systems with centralized memory encoding beliefs about the world \citep{newell1990unified, anderson1993rules, rao1995bdi}. Later models reconceive agents as networks of reactive units \citep{brooks1986robust, brooks1991intelligence, arkin1998behavior}. In these architectures, memory is distributed but remains individuated: there is no notion of memory existing outside the agent boundary. Distribution across units reflects an internal architectural choice rather than any dissolution of that boundary. Reinforcement learning agents, similarly, are bounded systems that acquire knowledge through trial and error \citep{kaelbling1998planning, sutton2022quest, dong2022simple, abel2023convergence}. While committing to few assumptions about centralization, they maintain localized memory in the form of value functions, state representations, world models, and other sources of computational overhead.

\textbf{Relation to Stigmergy.} Stigmergy is a mechanism of behavioral coordination arising from interactions between a decision-maker and artifacts in their environment \citep{heylighen2015stigmergy, thierry1995joint, ricci2006cognitive}. The concept of stigmergy was introduced by \cite{grasse1959reconstruction} to explain the highly coordinated behavior of termites. Traditionally, stigmergy research has focused on the self-organization of many simple agents operating under fixed policies. A canonical example is the formation of pheromone trails, which guide ants to food sources \citep{wilson1962chemical, sumpter2003nonlinearity}. \autoref{def:artifactual_env} formalizes a type of environment that bears resemblance to those studied in stigmergic research. In such settings, we study individual RL agents adapting to the context afforded by spatial artifacts. \cite{mariomartin-thesis} and \cite{peshkin1999learning} study RL in stigmergic settings, both assuming that the environment exposes explicit memory states for agents to manipulate through action. In contrast, we study settings where memory effects are implicitly realized through environment dynamics and the conventional, unprivileged sensory stream.

\paragraph{The Artifacts of Other Agents.}
The setting from our fixed-artifact experiments is similar to that studied by \cite{Borsa2017-ze}, in which a deep RL agent learns from observing another agent without explicitly modeling it or having access to its internal state. The fixed artifacts we consider could plausibly originate from another agent's behavior. In our dynamic-path experiments, by contrast, agents learn from traces of their own prior behavior. Importantly, we focus on how behavioral artifacts affect memory, and therefore restrict our analysis to architectures that condition only on the current observation; \cite{Borsa2017-ze}, by contrast, employs recurrent architectures.


\section{Conclusion}
We investigate the relationship between an RL agent's computational resources and its environment, providing evidence that, in artifactual settings, the environment can serve as a form of external memory. This effect holds across multiple capacities, algorithms, and environments. We formalize artifactual environments and prove that such settings afford a reduction in the number of observations required to represent a history. We further argue that the externalized memory observed in our experiments satisfies the criteria established by prior accounts of external memory. Together, our results suggest that a memory process is not confined to one side of the conventional agent-environment boundary; rather, its data and functionality can cut across this boundary and reside in the environment.

Future work can progress naturally along two directions: agent design and agent-environment relationships. A concrete next step would be to investigate whether agents can adapt their capacity to the presence of artifacts---to modulate plasticity between different processes. Our work studied unintentional externalization, so a natural question is whether agents can intentionally generate artifacts from which they later benefit. One might also formalize artifacts differently; we studied only artifacts that determine a single past observation, but in principle artifacts are richer structures that may determine multiple past observations. More broadly, we anticipate that further work in this area could reveal principled ways to exploit the environment as a substitute for explicit internal memory, and shed additional light on what memory means for artificial agents.

\appendix


\subsubsection*{Acknowledgments}
\label{sec:ack}
The authors would like to thank Will Dabney, Shruti Mishra, Joseph Modayil, and Matt Taylor for their comments on an early draft of this paper, and the members of the Openmind Research Institute for thoughtful discussions that guided this project.


\bibliography{ref}

\begin{thebibliography}{71}
\providecommand{\natexlab}[1]{#1}
\providecommand{\url}[1]{\texttt{#1}}
\expandafter\ifx\csname urlstyle\endcsname\relax
  \providecommand{\doi}[1]{DOI: #1}\else
  \providecommand{\doi}{DOI: \begingroup \urlstyle{rm}\Url}\fi

\bibitem[Abel et~al.(2023)Abel, Barreto, van Hasselt, Van~Roy, Precup, and Singh]{abel2023convergence}
David Abel, Andr{\'e} Barreto, Hado van Hasselt, Benjamin Van~Roy, Doina Precup, and Satinder Singh.
\newblock On the convergence of bounded agents.
\newblock \emph{arXiv preprint arXiv:2307.11044}, 2023.

\bibitem[Anderson(1993)]{anderson1993rules}
John~R. Anderson.
\newblock \emph{Rules of the Mind}.
\newblock Lawrence Erlbaum Associates, Hillsdale, NJ, 1993.
\newblock ISBN 978-0805812343.

\bibitem[Arkin(1998)]{arkin1998behavior}
Ronald~C. Arkin.
\newblock \emph{Behavior-based Robotics}.
\newblock The MIT Press, Cambridge, MA, 1998.

\bibitem[Bernecker \& Michaelian(2017)Bernecker and Michaelian]{bernecker2017routledge}
Sven Bernecker and Kourken Michaelian (eds.).
\newblock \emph{The Routledge Handbook of Philosophy of Memory}.
\newblock Routledge, New York, 1st edition, 2017.
\newblock ISBN 9781138909366.

\bibitem[Bickel \& Doksum(2015)Bickel and Doksum]{bickel2015mathematical}
Peter~J. Bickel and Kjell~A. Doksum.
\newblock \emph{Mathematical Statistics: Basic Ideas and Selected Topics, Volumes I--II Package}.
\newblock Chapman and Hall/CRC, 1st edition, 2015.
\newblock ISBN 9781498740319.

\bibitem[Blundell et~al.(2016)Blundell, Uria, Pritzel, Li, Ruderman, Leibo, Rae, Wierstra, and Hassabis]{blundell2016model}
Charles Blundell, Benigno Uria, Alexander Pritzel, Yazhe Li, Avraham Ruderman, Joel~Z. Leibo, Jack Rae, Daan Wierstra, and Demis Hassabis.
\newblock Model-free episodic control.
\newblock \emph{arXiv preprint arXiv:1606.04460}, 2016.

\bibitem[Boncelet(2005)]{boncelet2005saltandpepper}
Charles~G. Boncelet.
\newblock Image noise models.
\newblock In Al~Bovik (ed.), \emph{Handbook of Image and Video Processing}, pp.\  397--409. Academic Press, 2nd edition, 2005.
\newblock ISBN 9780121197926.
\newblock \doi{10.1016/B978-012119792-6/50087-5}.

\bibitem[Borsa et~al.(2019)Borsa, Heess, Piot, Liu, Hasenclever, Munos, and Pietquin]{Borsa2017-ze}
Diana Borsa, Nicolas Heess, Bilal Piot, Siqi Liu, Leonard Hasenclever, Remi Munos, and Olivier Pietquin.
\newblock Observational learning by reinforcement learning.
\newblock In \emph{Proceedings of the 18th International Conference on Autonomous Agents and Multi-agent Systems}, AAMAS '19, pp.\  1117–1124, Richland, SC, 2019. International Foundation for Autonomous Agents and Multiagent Systems.
\newblock ISBN 9781450363099.

\bibitem[Bowling et~al.(2023)Bowling, Martin, Abel, and Dabney]{bowling2023settling}
Michael Bowling, John~D Martin, David Abel, and Will Dabney.
\newblock Settling the reward hypothesis.
\newblock In Andreas Krause, Emma Brunskill, Kyunghyun Cho, Barbara Engelhardt, Sivan Sabato, and Jonathan Scarlett (eds.), \emph{Proceedings of the 40th International Conference on Machine Learning}, volume 202 of \emph{Proceedings of Machine Learning Research}, pp.\  3003--3020. PMLR, 23--29 Jul 2023.

\bibitem[Bradtke(1992)]{bradtke1992reinforcement}
Steven Bradtke.
\newblock Reinforcement learning applied to linear quadratic regulation.
\newblock In S.~Hanson, J.~Cowan, and C.~Giles (eds.), \emph{Advances in Neural Information Processing Systems}, volume~5. Morgan-Kaufmann, 1992.

\bibitem[Brockett \& Liberzon(2000)Brockett and Liberzon]{brockett2000quantized}
Roger~W. Brockett and Daniel Liberzon.
\newblock Quantized feedback stabilization of linear systems.
\newblock \emph{IEEE Transactions on Automatic Control}, 45\penalty0 (7):\penalty0 1279--1289, July 2000.
\newblock \doi{10.1109/9.867021}.

\bibitem[Brooks(1986)]{brooks1986robust}
Rodney~A. Brooks.
\newblock A robust layered control system for a mobile robot.
\newblock \emph{IEEE Journal on Robotics and Automation}, 2\penalty0 (1):\penalty0 14--23, 1986.
\newblock \doi{10.1109/JRA.1986.1087032}.

\bibitem[Brooks(1991)]{brooks1991intelligence}
Rodney~A. Brooks.
\newblock Intelligence without representation.
\newblock \emph{Artificial Intelligence}, 47\penalty0 (1):\penalty0 139--159, 1991.
\newblock ISSN 0004-3702.
\newblock \doi{https://doi.org/10.1016/0004-3702(91)90053-M}.

\bibitem[Brown et~al.(2020)Brown, Mann, Ryder, Subbiah, Kaplan, Dhariwal, Neelakantan, Shyam, Sastry, Askell, et~al.]{brown2020language}
Tom Brown, Benjamin Mann, Nick Ryder, Melanie Subbiah, Jared Kaplan, Prafulla Dhariwal, Arvind Neelakantan, Pranav Shyam, Girish Sastry, Amanda Askell, et~al.
\newblock Language models are few-shot learners.
\newblock \emph{Advances in Neural Information Processing Systems}, 33:\penalty0 1877--1901, 2020.

\bibitem[Clark(1998)]{clark1998being}
Andy Clark.
\newblock \emph{Being there: Putting brain, body, and world together again}.
\newblock MIT press, 1998.

\bibitem[Clark \& Chalmers(1998)Clark and Chalmers]{clark1998extended}
Andy Clark and David Chalmers.
\newblock The extended mind.
\newblock \emph{Analysis}, 58\penalty0 (1):\penalty0 7--19, 1998.

\bibitem[Delchamps(1990)]{delchamps1990stabilizing}
David~F. Delchamps.
\newblock Stabilizing a linear system with quantized state feedback.
\newblock \emph{IEEE Transactions on Automatic Control}, 35\penalty0 (8):\penalty0 916--924, August 1990.

\bibitem[Dong et~al.(2022)Dong, Van~Roy, and Zhou]{dong2022simple}
Shi Dong, Benjamin Van~Roy, and Zhengyuan Zhou.
\newblock Simple agent, complex environment: efficient reinforcement learning with agent states.
\newblock \emph{Journal of Machine Learning Research}, 23\penalty0 (1), January 2022.
\newblock ISSN 1532-4435.

\bibitem[Fawzi et~al.(2022)Fawzi, Balog, Huang, Hubert, Romera-Paredes, Barekatain, Novikov, R.~Ruiz, Schrittwieser, Swirszcz, et~al.]{fawzi2022discovering}
Alhussein Fawzi, Matej Balog, Aja Huang, Thomas Hubert, Bernardino Romera-Paredes, Mohammadamin Barekatain, Alexander Novikov, Francisco~J R.~Ruiz, Julian Schrittwieser, Grzegorz Swirszcz, et~al.
\newblock Discovering faster matrix multiplication algorithms with reinforcement learning.
\newblock \emph{Nature}, 610\penalty0 (7930):\penalty0 47--53, 2022.

\bibitem[Gazzaniga(2009)]{gazzaniga2009cognitive}
Michael~S. Gazzaniga (ed.).
\newblock \emph{The Cognitive Neurosciences}.
\newblock The MIT Press, 9 2009.
\newblock ISBN 9780262303101.
\newblock \doi{10.7551/mitpress/8029.001.0001}.

\bibitem[Gershman \& Daw(2017)Gershman and Daw]{gershman2017reinforcement}
Samuel~J. Gershman and Nathaniel~D. Daw.
\newblock Reinforcement learning and episodic memory in humans and animals: An integrative framework.
\newblock \emph{Annual Review of Psychology}, 68\penalty0 (Volume 68, 2017):\penalty0 101--128, 2017.
\newblock ISSN 1545-2085.
\newblock \doi{https://doi.org/10.1146/annurev-psych-122414-033625}.

\bibitem[Grass{\'e}(1959)]{grasse1959reconstruction}
Pierre-Paul Grass{\'e}.
\newblock La reconstruction du nid et les coordinations interindividuelles chez bellicositermes natalensis et cubitermes sp. la th{\'e}orie de la stigmergie: Essai d'interpr{\'e}tation du comportement des termites constructeurs.
\newblock \emph{Insectes Sociaux}, 6\penalty0 (1):\penalty0 41--80, March 1959.
\newblock ISSN 1420-9098.
\newblock \doi{10.1007/BF02223791}.

\bibitem[Hausknecht \& Stone(2015)Hausknecht and Stone]{hausknecht2015deep}
Matthew Hausknecht and Peter Stone.
\newblock Deep recurrent q-learning for partially observable mdps.
\newblock In \emph{AAAI Fall Symposium Series: Sequential Decision Making for Intelligent Agents}, pp.\  29--37, 2015.

\bibitem[Heersmink(2021)]{heersmink2021varieties}
Richard Heersmink.
\newblock Varieties of artifacts: Embodied, perceptual, cognitive, and affective.
\newblock \emph{Topics in Cognitive Science}, 13\penalty0 (4):\penalty0 573--596, 2021.
\newblock \doi{10.1111/tops.12549}.

\bibitem[Heylighen(2015)]{heylighen2015stigmergy}
Francis Heylighen.
\newblock Stigmergy as a universal coordination mechanism: Components, varieties and applications.
\newblock \url{https://pespmc1.vub.ac.be/Papers/Stigmergy-Springer.pdf}, 2015.

\bibitem[Hu et~al.(2021)Hu, Ye, Zhu, Ren, and Zhang]{hu2021generalizable}
Hao Hu, Jianing Ye, Guangxiang Zhu, Zhizhou Ren, and Chongjie Zhang.
\newblock Generalizable episodic memory for deep reinforcement learning.
\newblock In Marina Meila and Tong Zhang (eds.), \emph{Proceedings of the 38th International Conference on Machine Learning}, volume 139 of \emph{Proceedings of Machine Learning Research}, pp.\  4380--4390. PMLR, 18--24 Jul 2021.

\bibitem[Hutchins(1995)]{hutchins1995cognition}
Edwin Hutchins.
\newblock \emph{Cognition in the Wild}.
\newblock The MIT Press, 02 1995.
\newblock ISBN 9780262275972.

\bibitem[Hutchins(2001)]{hutchins2001cognitive}
Edwin Hutchins.
\newblock Cognitive artifacts.
\newblock In \emph{The {MIT} Encyclopedia of the Cognitive Sciences}, pp.\  126--127. MIT Press, Cambridge, MA, 2001.

\bibitem[Hutter(2005)]{hutter2004universal}
Marcus Hutter.
\newblock \emph{Universal Artificial Intelligence}.
\newblock Texts in Theoretical Computer Science. An EATCS Series. Springer, Berlin, Heidelberg, 1st edition, 2005.
\newblock ISBN 978-3-540-22139-5.
\newblock \doi{10.1007/b138233}.

\bibitem[Icarte et~al.(2020)Icarte, Valenzano, Klassen, Christoffersen, massoud Farahmand, and McIlraith]{Icarte2020-bd}
Rodrigo~Toro Icarte, Richard Valenzano, Toryn~Q. Klassen, Phillip Christoffersen, Amir massoud Farahmand, and Sheila~A. McIlraith.
\newblock The act of remembering: a study in partially observable reinforcement learning, 2020.

\bibitem[Jaeger(2000)]{jaeger2000observable}
Herbert Jaeger.
\newblock Observable operator models for discrete stochastic time series.
\newblock \emph{Neural Computation}, 12\penalty0 (6):\penalty0 1371--1398, 06 2000.
\newblock ISSN 0899-7667.
\newblock \doi{10.1162/089976600300015411}.

\bibitem[Kaelbling et~al.(1998)Kaelbling, Littman, and Cassandra]{kaelbling1998planning}
Leslie~Pack Kaelbling, Michael~L. Littman, and Anthony~R. Cassandra.
\newblock Planning and acting in partially observable stochastic domains.
\newblock \emph{Artificial Intelligence}, 101\penalty0 (1-2):\penalty0 99--134, 1998.
\newblock ISSN 0004-3702.
\newblock \doi{https://doi.org/10.1016/S0004-3702(98)00023-X}.

\bibitem[Kaplan et~al.(2020)Kaplan, McCandlish, Henighan, Brown, Chess, Child, Gray, Radford, Wu, and Amodei]{kaplan2020scaling}
Jared Kaplan, Sam McCandlish, Tom Henighan, Tom~B. Brown, Benjamin Chess, Rewon Child, Scott Gray, Alec Radford, Jeffrey Wu, and Dario Amodei.
\newblock Scaling laws for neural language models.
\newblock \emph{arXiv preprint arXiv:2001.08361}, 2020.

\bibitem[Khan et~al.(2018)Khan, Zhang, Atanasov, Karydis, Kumar, and Lee]{khan2017memory}
Arbaaz Khan, Clark Zhang, Nikolay Atanasov, Konstantinos Karydis, Vijay Kumar, and Daniel~D. Lee.
\newblock Memory augmented control networks.
\newblock In \emph{International Conference on Learning Representations}, 2018.

\bibitem[Klein(2015)]{klein2015memory}
Stanley~B. Klein.
\newblock What memory is.
\newblock \emph{WIREs Cognitive Science}, 6\penalty0 (1):\penalty0 1--38, 2015.
\newblock \doi{https://doi.org/10.1002/wcs.1333}.

\bibitem[Lengyel \& Dayan(2007)Lengyel and Dayan]{lengyel2007hippocampal}
M\'{a}t\'{e} Lengyel and Peter Dayan.
\newblock Hippocampal contributions to control: The third way.
\newblock In J.~Platt, D.~Koller, Y.~Singer, and S.~Roweis (eds.), \emph{Advances in Neural Information Processing Systems}, volume~20, pp.\  889--896, 2007.

\bibitem[Lin(1992)]{lin1992self}
Long-Ji Lin.
\newblock Self-improving reactive agents based on reinforcement learning, planning and teaching.
\newblock \emph{Machine Learning}, 8\penalty0 (3):\penalty0 293--321, May 1992.
\newblock ISSN 1573-0565.
\newblock \doi{10.1007/BF00992699}.

\bibitem[Lin et~al.(2018)Lin, Zhao, Yang, and Zhang]{lin2018episodic}
Zichuan Lin, Tianqi Zhao, Guangwen Yang, and Lintao Zhang.
\newblock Episodic memory deep q-networks.
\newblock In \emph{Proceedings of the Twenty-Seventh International Joint Conference on Artificial Intelligence, {IJCAI-18}}, pp.\  2433--2439. International Joint Conferences on Artificial Intelligence Organization, 7 2018.
\newblock \doi{10.24963/ijcai.2018/337}.

\bibitem[Littman(1994)]{littman1994memoryless}
Michael~L. Littman.
\newblock Memoryless policies: theoretical limitations and practical results.
\newblock In \emph{Proceedings of the Third International Conference on Simulation of Adaptive Behavior: From Animals to Animats 3: From Animals to Animats 3}, SAB94, pp.\  238–245, Cambridge, MA, USA, 1994. The MIT Press.
\newblock ISBN 0262531224.

\bibitem[Littman et~al.(2001)Littman, Sutton, and Singh]{littman2001predictive}
Michael~L. Littman, Richard~S. Sutton, and Satinder Singh.
\newblock Predictive representations of state.
\newblock In T.~Dietterich, S.~Becker, and Z.~Ghahramani (eds.), \emph{Advances in Neural Information Processing Systems}, volume~14, pp.\  1555--1561, 2001.

\bibitem[Mart{\'\i}n~Mu{\~n}oz(1998)]{mariomartin-thesis}
Mario Mart{\'\i}n~Mu{\~n}oz.
\newblock \emph{Reinforcement Learning for Embedded Agents Facing Complex Tasks}.
\newblock Phd thesis, Universitat Polit{\`e}cnica de Catalunya, Barcelona, Spain, 1998.

\bibitem[Menary(2010)]{menary2010cognitive}
Richard Menary.
\newblock Cognitive integration and the extended mind.
\newblock \emph{The extended mind}, pp.\  227--243, 2010.

\bibitem[Michaelian(2012)]{michaelian2012external}
Kourken Michaelian.
\newblock Is external memory memory? biological memory and extended mind.
\newblock \emph{Consciousness and Cognition}, 21\penalty0 (3):\penalty0 1154--1165, 2012.
\newblock ISSN 1053-8100.
\newblock \doi{https://doi.org/10.1016/j.concog.2012.04.008}.

\bibitem[Michaelian \& Sutton(2017)Michaelian and Sutton]{sep-memory}
Kourken Michaelian and John Sutton.
\newblock {Memory}.
\newblock In Edward~N. Zalta (ed.), \emph{The {Stanford} Encyclopedia of Philosophy}. Metaphysics Research Lab, Stanford University, {S}ummer 2017 edition, 2017.

\bibitem[Mnih et~al.(2015)Mnih, Kavukcuoglu, Silver, Rusu, Veness, Bellemare, Graves, Riedmiller, Fidjeland, Ostrovski, Petersen, Beattie, Sadik, Antonoglou, King, Kumaran, Wierstra, Legg, and Hassabis]{mnih2015human}
Volodymyr Mnih, Koray Kavukcuoglu, David Silver, Andrei~A. Rusu, Joel Veness, Marc~G. Bellemare, Alex Graves, Martin Riedmiller, Andreas~K. Fidjeland, Georg Ostrovski, Stig Petersen, Charles Beattie, Amir Sadik, Ioannis Antonoglou, Helen King, Dharshan Kumaran, Daan Wierstra, Shane Legg, and Demis Hassabis.
\newblock Human-level control through deep reinforcement learning.
\newblock \emph{Nature}, 518\penalty0 (7540):\penalty0 529--533, February 2015.
\newblock ISSN 1476-4687.
\newblock \doi{10.1038/nature14236}.

\bibitem[Newell(1990)]{newell1990unified}
Allen Newell.
\newblock \emph{Unified Theories of Cognition}.
\newblock Harvard University Press, Cambridge, MA, 1990.
\newblock ISBN 9780674920996.

\bibitem[Oh et~al.(2016)Oh, Chockalingam, Singh, and Lee]{oh2016control}
Junhyuk Oh, Valliappa Chockalingam, Satinder Singh, and Honglak Lee.
\newblock Control of memory, active perception, and action in minecraft.
\newblock In Maria~Florina Balcan and Kilian~Q. Weinberger (eds.), \emph{Proceedings of The 33rd International Conference on Machine Learning}, volume~48 of \emph{Proceedings of Machine Learning Research}, pp.\  2790--2799, New York, New York, USA, 20--22 Jun 2016. PMLR.

\bibitem[Patterson et~al.(2024)Patterson, Neumann, White, and White]{patterson2024empirical}
Andrew Patterson, Samuel Neumann, Martha White, and Adam White.
\newblock Empirical design in reinforcement learning.
\newblock \emph{Journal of Machine Learning Research}, 25\penalty0 (318):\penalty0 1--63, 2024.
\newblock URL \url{https://jmlr.org/papers/v25/23-0183.html}.

\bibitem[Peshkin et~al.(1999)Peshkin, Meuleau, and Kaelbling]{peshkin1999learning}
Leonid Peshkin, Nicolas Meuleau, and Leslie~Pack Kaelbling.
\newblock Learning policies with external memory.
\newblock In \emph{Proceedings of the 16th International Conference on Machine Learning}, ICML '99, pp.\  307–314, San Francisco, CA, USA, 1999. Morgan Kaufmann Publishers Inc.
\newblock ISBN 1558606122.

\bibitem[Pritzel et~al.(2017)Pritzel, Uria, Srinivasan, Badia, Vinyals, Hassabis, Wierstra, and Blundell]{pritzel2017neural}
Alexander Pritzel, Benigno Uria, Sriram Srinivasan, Adri{\`a}~Puigdom{\`e}nech Badia, Oriol Vinyals, Demis Hassabis, Daan Wierstra, and Charles Blundell.
\newblock Neural episodic control.
\newblock In Doina Precup and Yee~Whye Teh (eds.), \emph{Proceedings of the 34th International Conference on Machine Learning}, volume~70 of \emph{Proceedings of Machine Learning Research}, pp.\  2827--2836. PMLR, 06--11 Aug 2017.

\bibitem[Rao \& Georgeff(1995)Rao and Georgeff]{rao1995bdi}
Anand~S. Rao and Michael~P. Georgeff.
\newblock {BDI} agents: From theory to practice.
\newblock In \emph{Proceedings of the First International Conference on Multi-Agent Systems (ICMAS-95)}, pp.\  312--319, 1995.

\bibitem[Reid et~al.(2012)Reid, Latty, Dussutour, and Beekman]{reid2012slime}
Chris~R Reid, Tanya Latty, Audrey Dussutour, and Madeleine Beekman.
\newblock Slime mold uses an externalized spatial “memory” to navigate in complex environments.
\newblock \emph{Proceedings of the National Academy of Sciences}, 109\penalty0 (43):\penalty0 17490--17494, 2012.

\bibitem[Ricci et~al.(2007)Ricci, Omicini, Viroli, Gardelli, and Oliva]{ricci2006cognitive}
Alessandro Ricci, Andrea Omicini, Mirko Viroli, Luca Gardelli, and Enrico Oliva.
\newblock Cognitive stigmergy: Towards a framework based on agents and artifacts.
\newblock In Danny Weyns, H.~Van~Dyke Parunak, and Fabien Michel (eds.), \emph{Environments for Multi-Agent Systems III}, pp.\  124--140. Springer, 2007.
\newblock ISBN 978-3-540-71103-2.

\bibitem[Russell(1921)]{russell1921analysis}
Bertrand Russell.
\newblock \emph{The Analysis of Mind}.
\newblock G. Allen \& Unwin, London, 1921.

\bibitem[Schaul et~al.(2016)Schaul, Quan, Antonoglou, and Silver]{schaul2016prioritized}
Tom Schaul, John Quan, Ioannis Antonoglou, and David Silver.
\newblock Prioritized experience replay.
\newblock In \emph{International Conference on Learning Representations}, 2016.

\bibitem[Silver et~al.(2016)Silver, Huang, Maddison, Guez, Sifre, Van Den~Driessche, Schrittwieser, Antonoglou, Panneershelvam, Lanctot, et~al.]{silver2016mastering}
David Silver, Aja Huang, Chris~J Maddison, Arthur Guez, Laurent Sifre, George Van Den~Driessche, Julian Schrittwieser, Ioannis Antonoglou, Veda Panneershelvam, Marc Lanctot, et~al.
\newblock Mastering the game of go with deep neural networks and tree search.
\newblock \emph{nature}, 529\penalty0 (7587):\penalty0 484--489, 2016.

\bibitem[Sims \& Kiverstein(2022)Sims and Kiverstein]{sims2022externalized}
Matthew Sims and Julian Kiverstein.
\newblock Externalized memory in slime mould and the extended (non-neuronal) mind.
\newblock \emph{Cognitive Systems Research}, 73:\penalty0 26--35, 2022.
\newblock ISSN 1389-0417.
\newblock \doi{https://doi.org/10.1016/j.cogsys.2021.12.001}.

\bibitem[Singh et~al.(2004)Singh, James, and Rudary]{singh2004predictive}
Satinder Singh, Michael~R. James, and Matthew~R. Rudary.
\newblock Predictive state representations: a new theory for modeling dynamical systems.
\newblock In \emph{Proceedings of the 20th Conference on Uncertainty in Artificial Intelligence}, UAI '04, pp.\  512–519, Arlington, Virginia, USA, 2004. AUAI Press.
\newblock ISBN 0974903906.

\bibitem[Singh et~al.(1994)Singh, Jaakkola, and Jordan]{singh1994learning}
Satinder~P. Singh, Tommi Jaakkola, and Michael~I. Jordan.
\newblock Learning without state-estimation in partially observable markovian decision processes.
\newblock In William~W. Cohen and Haym Hirsh (eds.), \emph{Machine Learning Proceedings 1994}, pp.\  284--292. Morgan Kaufmann, San Francisco (CA), 1994.
\newblock ISBN 978-1-55860-335-6.
\newblock \doi{https://doi.org/10.1016/B978-1-55860-335-6.50042-8}.

\bibitem[Sterelny(2010)]{sterelny2010minds}
Kim Sterelny.
\newblock Minds: extended or scaffolded?
\newblock \emph{Phenomenology and the Cognitive Sciences}, 9\penalty0 (4):\penalty0 465--481, December 2010.
\newblock ISSN 1572-8676.
\newblock \doi{10.1007/s11097-010-9174-y}.

\bibitem[Sumpter \& Beekman(2003)Sumpter and Beekman]{sumpter2003nonlinearity}
David~J.T Sumpter and Madeleine Beekman.
\newblock From nonlinearity to optimality: pheromone trail foraging by ants.
\newblock \emph{Animal Behaviour}, 66\penalty0 (2):\penalty0 273--280, 2003.
\newblock ISSN 0003-3472.
\newblock \doi{https://doi.org/10.1006/anbe.2003.2224}.

\bibitem[Sutton(2003)]{sutton2003constructive}
John Sutton.
\newblock Constructive memory and distributed cognition: Towards an interdisciplinary framework.
\newblock In B.~Kokinov and W.~Hirst (eds.), \emph{Constructive Memory}, pp.\  290--303. New Bulgarian University, 2003.

\bibitem[Sutton(2019)]{sutton2019bitter}
Richard~S. Sutton.
\newblock The bitter lesson.
\newblock \emph{Incomplete Ideas (blog)}, 2019.
\newblock URL \url{http://www.incompleteideas.net/IncIdeas/BitterLesson.html}.

\bibitem[Sutton(2022)]{sutton2022quest}
Richard~S. Sutton.
\newblock The quest for a common model of the intelligent decision maker.
\newblock In \emph{Proceedings of the 5th Multi-disciplinary Conference on Reinforcement Learning and Decision Making (RLDM 2022)}, Providence, Rhode Island, USA, 2022.

\bibitem[Sutton \& Barto(2018)Sutton and Barto]{sutton2018reinforcement}
Richard~S. Sutton and Andrew~G. Barto.
\newblock \emph{Reinforcement Learning: An Introduction}.
\newblock The MIT Press, Cambridge, MA, 2nd edition, 2018.

\bibitem[Tamborski \& Abel(2025)Tamborski and Abel]{tamborski2025memory}
Massimiliano Tamborski and David Abel.
\newblock Memory allocation in resource-constrained reinforcement learning.
\newblock \emph{arXiv preprint arXiv:2506.17263}, 2025.

\bibitem[Thierry et~al.(1995)Thierry, Theraulaz, Gautier, and Stiegler]{thierry1995joint}
B.~Thierry, G.~Theraulaz, J.Y. Gautier, and B.~Stiegler.
\newblock Joint memory.
\newblock \emph{Behavioural Processes}, 35\penalty0 (1):\penalty0 127--140, 1995.
\newblock ISSN 0376-6357.
\newblock \doi{https://doi.org/10.1016/0376-6357(95)00039-9}.
\newblock Cognition and Evolution.

\bibitem[Tulving(1972)]{Tulving1972}
Endel Tulving.
\newblock Episodic and semantic memory.
\newblock In \emph{Organization of Memory}, pp.\  381--403. Academic Press, London, UK, 1972.

\bibitem[Watkins \& Dayan(1992)Watkins and Dayan]{watkins1992q}
Christopher J. C.~H. Watkins and Peter Dayan.
\newblock Q-learning.
\newblock \emph{Machine Learning}, 8\penalty0 (3--4):\penalty0 279--292, May 1992.
\newblock ISSN 1573-0565.
\newblock \doi{10.1007/BF00992698}.

\bibitem[Wilson(1962)]{wilson1962chemical}
Edward~O. Wilson.
\newblock Chemical communication among workers of the fire ant solenopsis saevissima (fr. smith) 1. the organization of mass-foraging.
\newblock \emph{Animal Behaviour}, 10\penalty0 (1):\penalty0 134--147, 1962.
\newblock ISSN 0003-3472.
\newblock \doi{https://doi.org/10.1016/0003-3472(62)90141-0}.

\bibitem[Zhu* et~al.(2020)Zhu*, Lin*, Yang, and Zhang]{zhu2020episodic}
Guangxiang Zhu*, Zichuan Lin*, Guangwen Yang, and Chongjie Zhang.
\newblock Episodic reinforcement learning with associative memory.
\newblock In \emph{International Conference on Learning Representations}, 2020.

\end{thebibliography}
\bibliographystyle{rlj}

\beginSupplementaryMaterials
\section{Proofs}\label{app:proofs}

\prob*
\begin{proof}
    Necessity: Suppose $\xi$ is artifactual. Then $\Omega_\xi\neq \emptyset$, meaning there exists at least one $o\in\Omega_\xi$ satisfying \autoref{def:artifact}. According to \autoref{def:artifact}, for all times $t$ where $O_t =o$, there exists some non-zero $t' < t$ and $o'\neq o$ such that $O_{t'}=o'$. 
    
    In terms of probability, if the event $O_t = o$ logically necessitates that $O_{t'} = o'$, then the sample space is constrained such that there are no outcomes where $O_t = o$ and $O_{t'} \neq o'$. Mathematically, this implies:
    \begin{align*}
        \Pbb(O_{t'} = o' | O_t = o) = \frac{\Pbb(O_{t'} = o' \cap O_t = o)}{\Pbb(O_t = o)}
    \end{align*}
    Since $O_t = o$ implies $O_{t'} = o'$, the intersection $O_{t'} = o' \cap O_t = o$ is simply the event $O_t = o$:
    \begin{align*}
        \Pbb(O_{t'} = o' | O_t = o) = \frac{O_t = o)}{\Pbb(O_t = o)} = 1.
    \end{align*}

    Sufficiency: Suppose that for any $t > 0$, there exist distinct $o, o'$ and $t' < t$ such that $\Pbb(O_{t'} = o' | O_t = o) = 1$. This implies that the occurrence of $o$ at time $t$ provides total certainty about the occurrence of $o'$ at time $t'$. If the probability of the past state $o'$ given the current state $o$ is 1, then for every realization of the process where $O_t = o$, it must be that $O_{t'} = o'$. Since $o'$ is distinct from $o$, this satisfies the requirement in \autoref{def:artifact}. Because such an $o$ exists, the set $\Omega_\xi$ is non-empty. Per \autoref{def:artifactual_env}, the environment is therefore artifactual.    
\end{proof}

\orderreduction*
\begin{proof}
Since $\xi$ is artifactual, by \autoref{lemma:prob}, there exist distinct observations $o, o'$ and non-zero times $t,t'$ with $t' < t$ such that $\Pbb(O_{t'}=o' \mid O_t=o) = 1$. 

Let $H = O_{t},A_{t}, O_{t-1},A_{t-1}, \ldots, O_{t-m+1},A_{t-m+1}$ be an $m$-length history ending at time $t$. Assume $H$ contains both the observation $o'$ at time $t'=t-k$ and its artifact $o$ at time $t$, where $1 \leq k < m$.

Construct $H'$ by removing the observation at time $t-k$:
\begin{align*}
H' = O_{t},A_{t},\ldots O_{t-k+1},A_{t-k+1},A_{t-k}, O_{t-k-1},A_{t-k-1},\ldots  O_{t-m+1},A_{t-m+1}.
\end{align*}
Note that $H'$ has $m - 1$ observations.

By the chain rule for mutual information:
\begin{align*}
\Ibb(O_{t+1}; H) &= \Ibb(O_{t+1}; H') + \Ibb(O_{t+1}; O_{t-k} \mid H').
\end{align*}

Since $O_t = o$ is in $H'$ and $\Pbb(O_{t-k}=o' \mid O_t=o) = 1$, the observation $O_t = o$ completely determines $O_{t-k} = o'$. Therefore, $O_{t-k}$ is a deterministic function of the information already present in $H'$, which implies $\Ibb(O_{t+1}; O_{t-k} \mid H') = 0.$ Thus,
\begin{align*}
\Ibb(O_{t+1}; H) = \Ibb(O_{t+1}; H').
\end{align*}

This equality holds for all histories $H$ with positive probability under the environment dynamics where observation $o'$ precedes its artifact $o$.
\end{proof}

\indet*
\begin{proof}
        Let $\xi$ be an artifactual environment. We construct an artifactless copy $\xi'$ by adding noise to all the artifactual relationships.
        
        Consider the set of all artifactual relationships in $\xi$:
        \begin{align*}
            \Zcal = \{(i,j,k,t) : i,j\in\Ocal, i \neq j, k \geq 1, t > k, \Pbb(X_{t-k}=i \mid X_{t}=j) = 1\}
        \end{align*}
        For each $(i,j,k,t) \in \Zcal$, modify the transition dynamics to ensure that 
        when $X_t = j$, there is a small but non-zero probability $\epsilon > 0$ that 
        $X_{t-k} \neq i$. Specifically, define $\xi'$ such that:
        \begin{align*}
        \Pbb_{\xi'}(X_{t-k}=i \mid X_{t}=j) = 
        \begin{cases}
        1 - \epsilon &: (i,j,k,t) \in \Zcal \\
        \Pbb_\xi(X_{t-k}=i \mid X_{t}=j) &: \text{otherwise}
        \end{cases}
        \end{align*}
        where the remaining probability mass $\epsilon$ is distributed among other possible 
        values of $X_{t-k}$.
        
        By construction, for all distinct observations $i,j$ and all integers $k \geq 1$, $t$ with $t > k$:
        $$\Pbb_{\xi'}(X_{t-k}=i \mid X_{t}=j) \leq 1 - \epsilon < 1$$
        
        Therefore, $\xi'$ is non-artifactual, and the artifacts of $\xi$ are obscured in $\xi'$.
        \end{proof}

\subsection{Additional Results}
\begin{corollary}\label{corr:multiple_reduction}
Suppose $H$ is an $m$-length history containing $k<m$ artifacts. There exists a history $H'$ containing $m-k$ observations such that 
\begin{align*}
\Ibb(O_{t+1}; H) = \Ibb(O_{t+1}; H').
\end{align*}
\begin{proof}
    Starting with an $m$-length history $H_m$, define a history $H_{m-1}$ by removing the observation associated with a single artifact. According to the Artifact Reduction Theorem (\autoref{thm:artifact_reduction}), $H_m$ and $H_{m-1}$ have the same mutual information with $O_{t+1}$. Apply \autoref{thm:artifact_reduction} starting from the reduced history of the previous application. After $k$ applications, let $H=H_m$ and $H'=H_{m-k}$. We have  
    \begin{align*}
        \Ibb(O_{t+1}; H) = \Ibb(O_{t+1}; H').
    \end{align*}
\end{proof}
\end{corollary}

\section{Experimental Details} \label{app:experimental_details}
\subsection{Hyperparameter Selection}
Our experiments treat step-size as a hyperparameter, sweeping over a finite set of candidate values to select the value yielding the highest total reward averaged across 30 seeds. To correct for maximization bias, we use a two-stage approach \citep{patterson2024empirical}. Specifically, we report the average and standard error of 30 different seeds for the hyperparameters with maximum performance. Figures \ref{fig:exp1_stepsize_sweeps}, \ref{fig:exp2_sweep}, and \ref{fig:exp3_stepsize_dyn_lin} show the average total reward across the full range of step-sizes considered in each experiment. Note the plots exhibit distinct maximums most capacities.

\subsection{Statistical Tests} \label{app:stat_tests}
We use a two-sample model to test whether the mean total reward from one agent is higher than another. Let $\Pcal_i$ be a random sample of total rewards from an agent that learns in an artifactual environment. The subscript $i$ indexes the agent's capacity, e.g. $2\times 16$. Similarly, let $\Qcal_j$ denote the random sample of agent $j$ in the artifactless environment (No Path). The empirical averages of each sample are respectively denoted $\bar{p}_i$ and $\bar{q}_j$. We test every pair $(i,j)$ according to the null hypothesis $H_0(i,j): \bar{p}_i \leq \bar{q}_j$ at a significance-level of $\alpha=0.05$  For more information on two-sample models see \cite{bickel2015mathematical}.

\subsection{Environment Details} \label{sec:environment_details}
\begin{figure}[h]
    \centering
    \includegraphics[width=\linewidth]{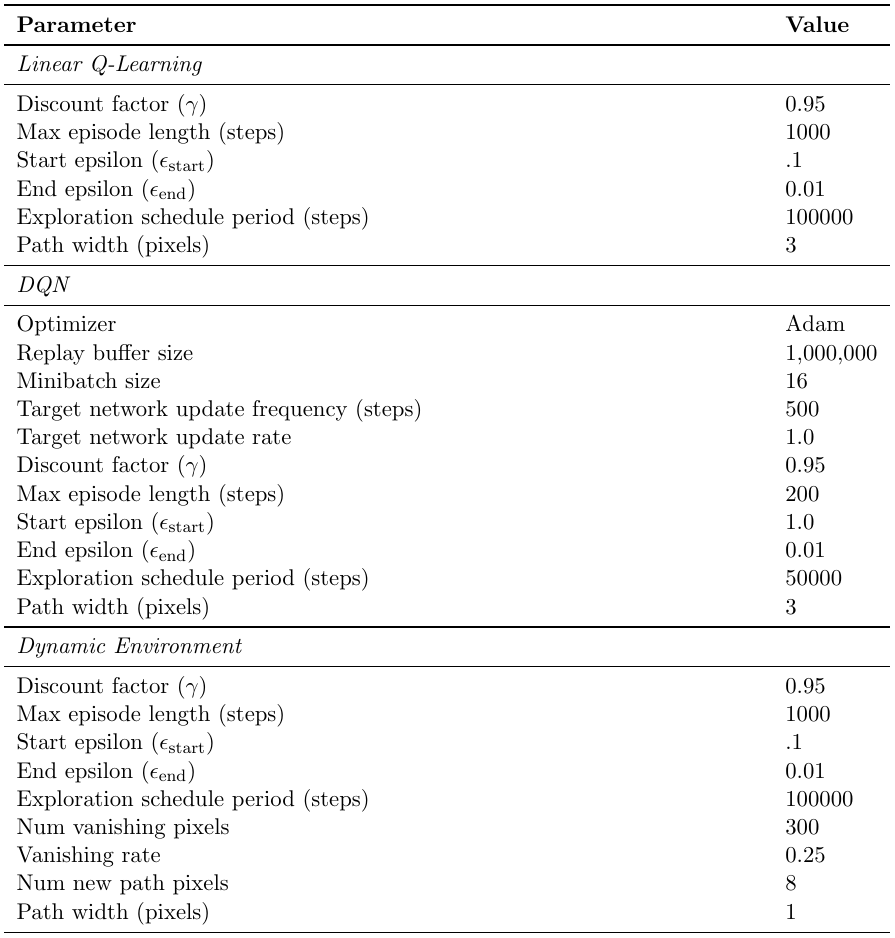}  
    \caption{\textbf{Experiment Configuration Details.} Hyperparameters used for linear agent (above), DQN agents (middle), dynamic experiments (below).}
    \label{fig:experiment_details}
\end{figure}
\paragraph{Observations}
\begin{itemize} 
    \item Tile textures: randomly generated black-and-white patterns with 10\% black pixels.
    \item The agent observes a 3$\times$3 portion of the grid centered on its current cell.
    \item Walls are not explicitly visible; they are only apparent through self-looping transitions.
    \item The agent's position and goal location are not visible in observations.
\end{itemize}

\paragraph{Artifacts}
\begin{itemize}
    \item \textbf{Optimal Path:} Shortest path from start to goal.
    \item \textbf{Suboptimal Path:} A path 8 steps longer than optimal.
    \item \textbf{Misleading Path:} Hand-crafted path that does not lead to the goal.
    \item \textbf{Random Path:} A fixed path generated via random walk.
    \item \textbf{Landmarks:} Multi-cell shapes (diamond, donut, circle, rectangle, triangle, square) distributed throughout the environment.
\end{itemize}
\paragraph{Dynamic Path Environment.}
Here, we describe the path dynamics (\autoref{fig:experiment_details}). A path is drawn at every transition: from the current cell to the next. Let $\Pcal$ be the set of pixels connecting the centers of both cells with a given thickness. Furthermore, let $\Qcal$ be set of all pixels. At every timestep, a subset of $\Pcal$ (Num new path pixels) is drawn without replacement and uniformly at random. These pixels are assigned values of one to mark the path. These values persist into the future, until they are randomly selected for removal. At each step, a subset of $\Qcal$ (Num vanishing pixels) is drawn uniformly at random and without replacement. With a fixed probability (Vanishing rate), these pixels are assigned values of zero. The path is applied as a mask to the full image maintained by the environment. Thus, from the agent's perspective, a value of zero removes the path and reveals the original background image of salt-and-pepper noise.

\subsection{Experiment 1. Learning in the Presence of a Shortest Path}\label{app:exp1}
\begin{figure}[H]
    \centering
    \includegraphics[width=0.49\linewidth]{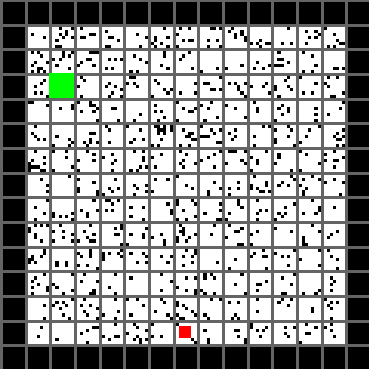}
    \includegraphics[width=0.49\linewidth]{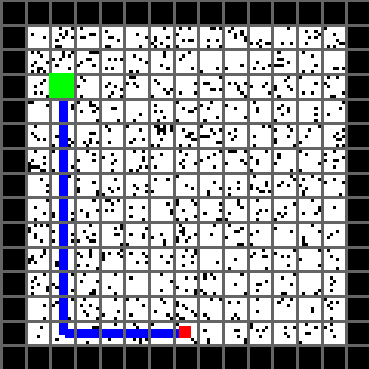}
    \caption{Environments considered in Experiment 1.}
    \label{fig:e1-envs-app}
\end{figure}

\begin{figure}[H]
    \centering
    \includegraphics[width=0.49\linewidth]{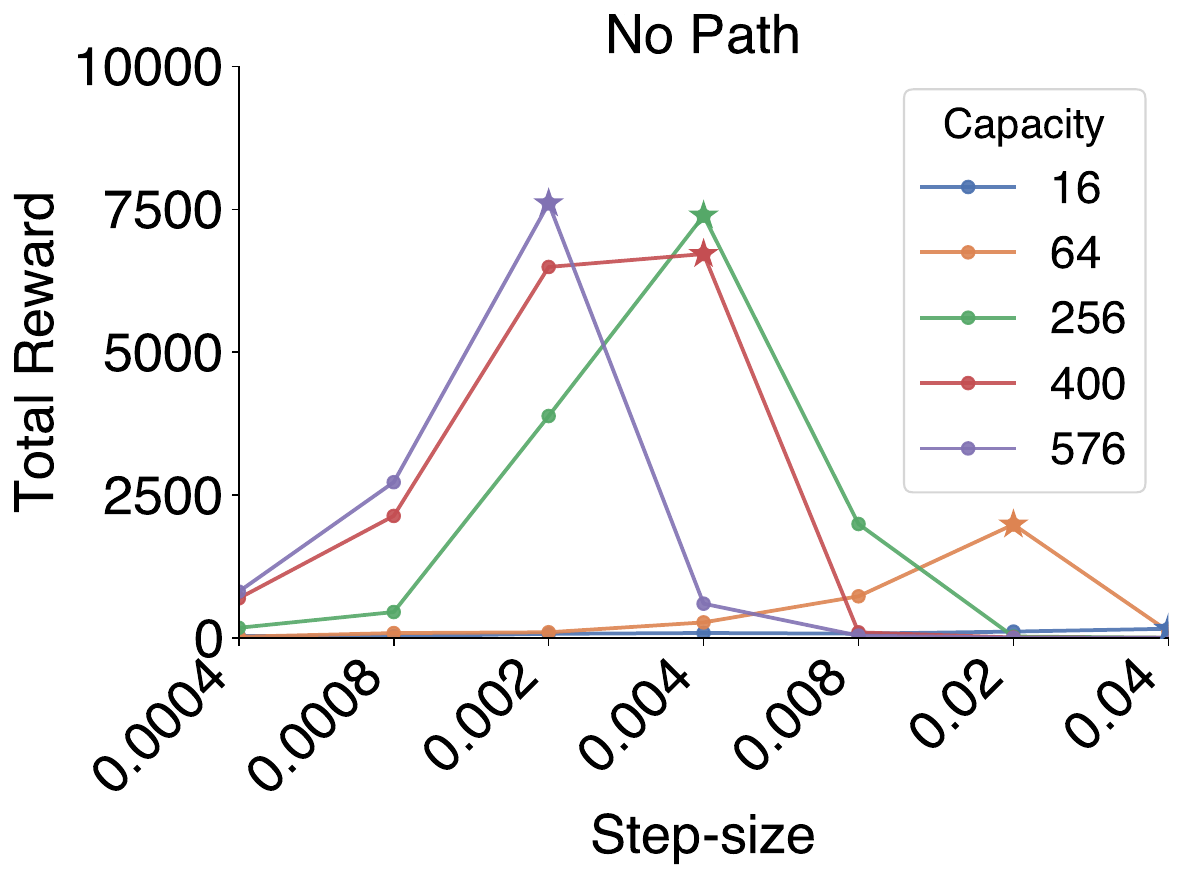}
    \includegraphics[width=0.49\linewidth]{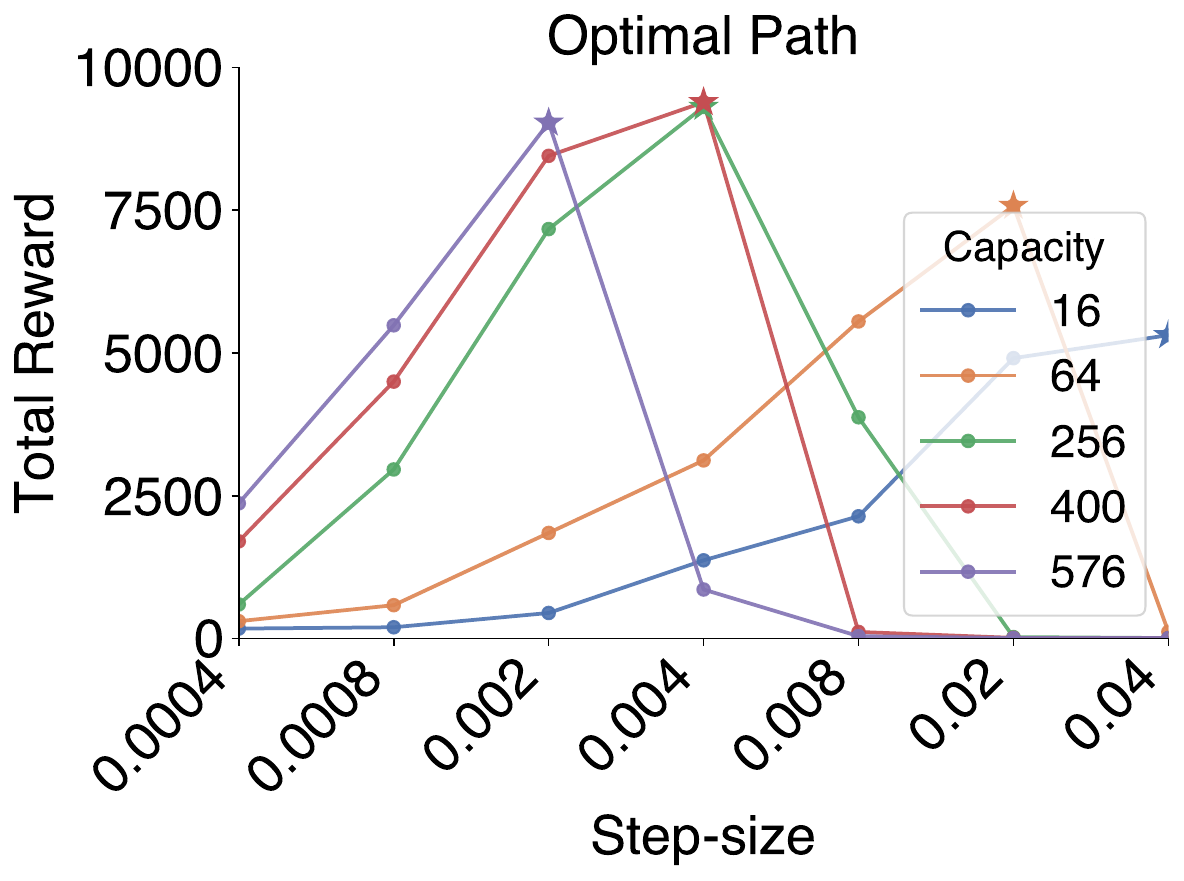}
    \includegraphics[width=0.49\linewidth]{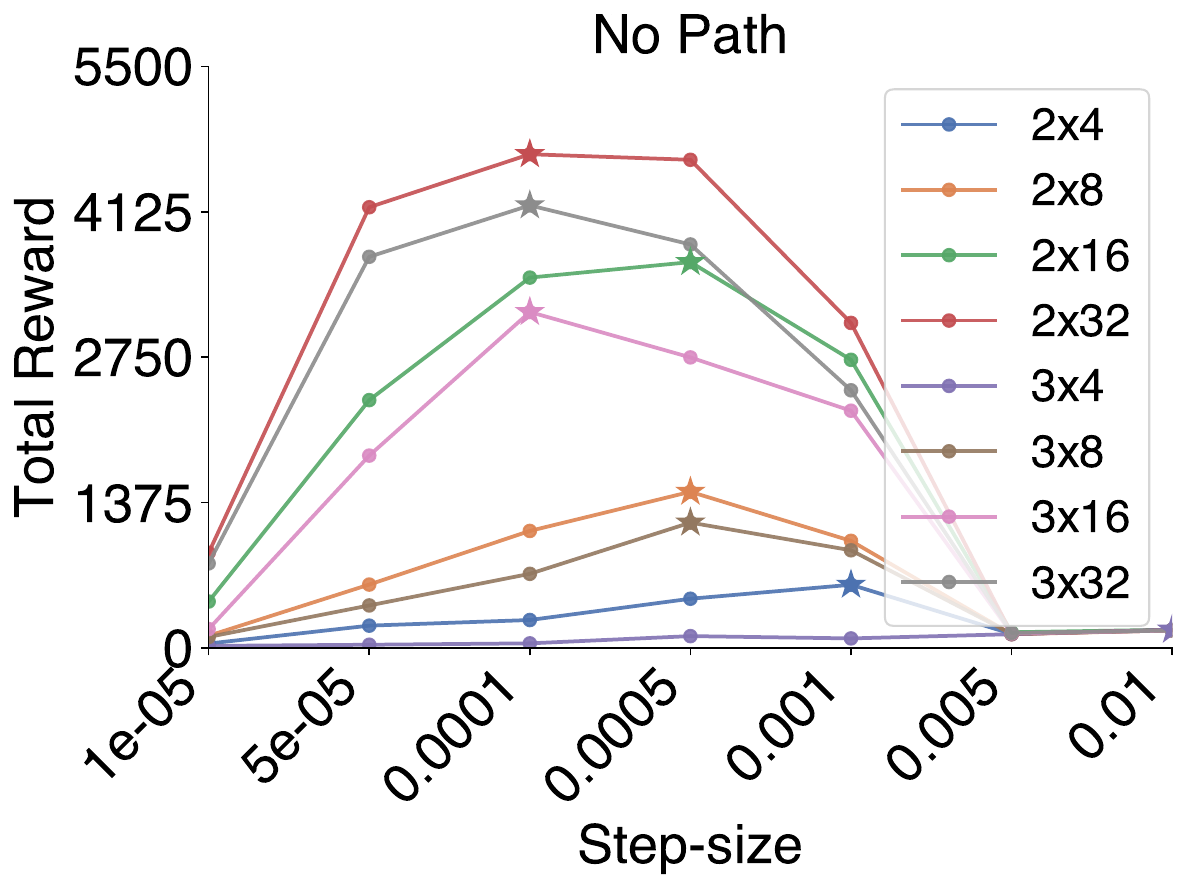}
    \includegraphics[width=0.49\linewidth]{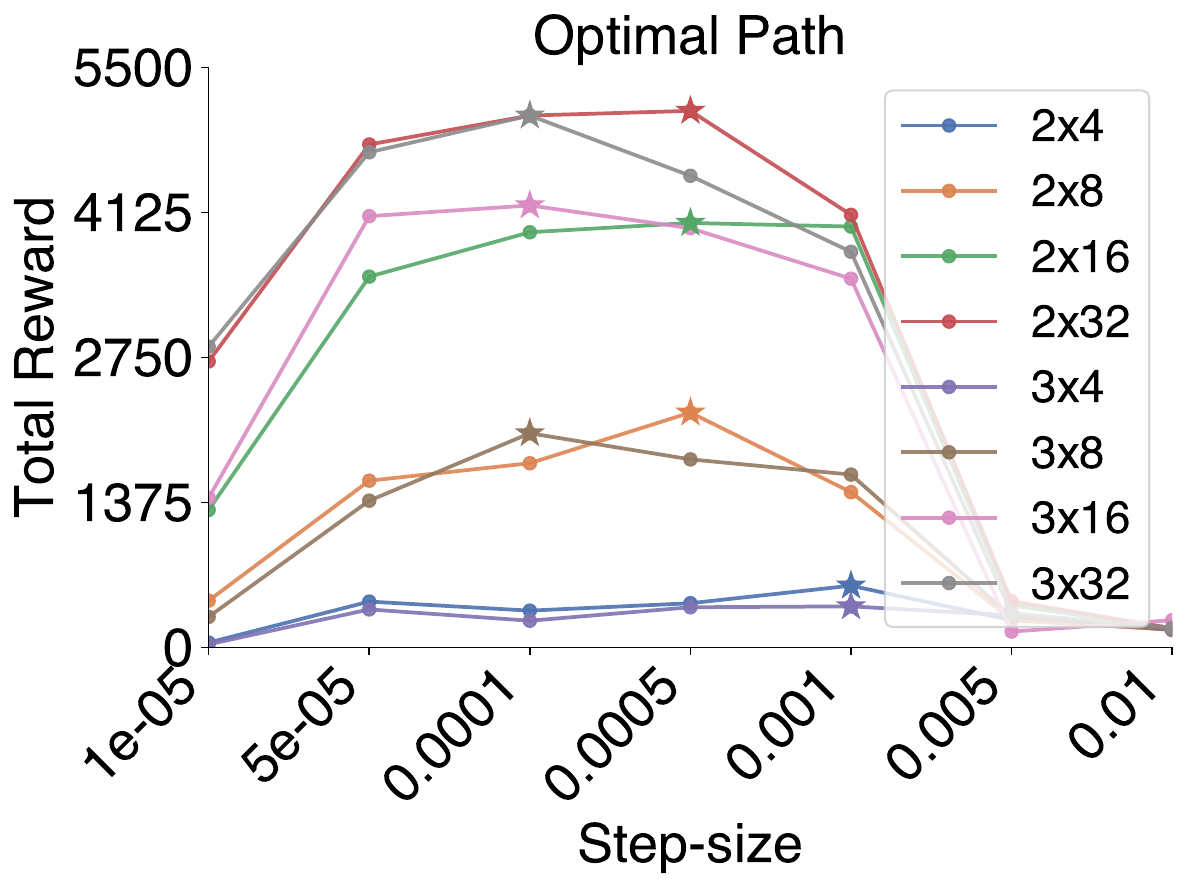}
    \caption{\textbf{Experiment 1. Step-size sweeps.} Linear Q-learning (top half), DQN (bottom half). Selected step-size  is marked with a star.}
    \label{fig:exp1_stepsize_sweeps}
\end{figure}

\begin{figure}[H]
    \centering
    \includegraphics[width=0.49\linewidth]{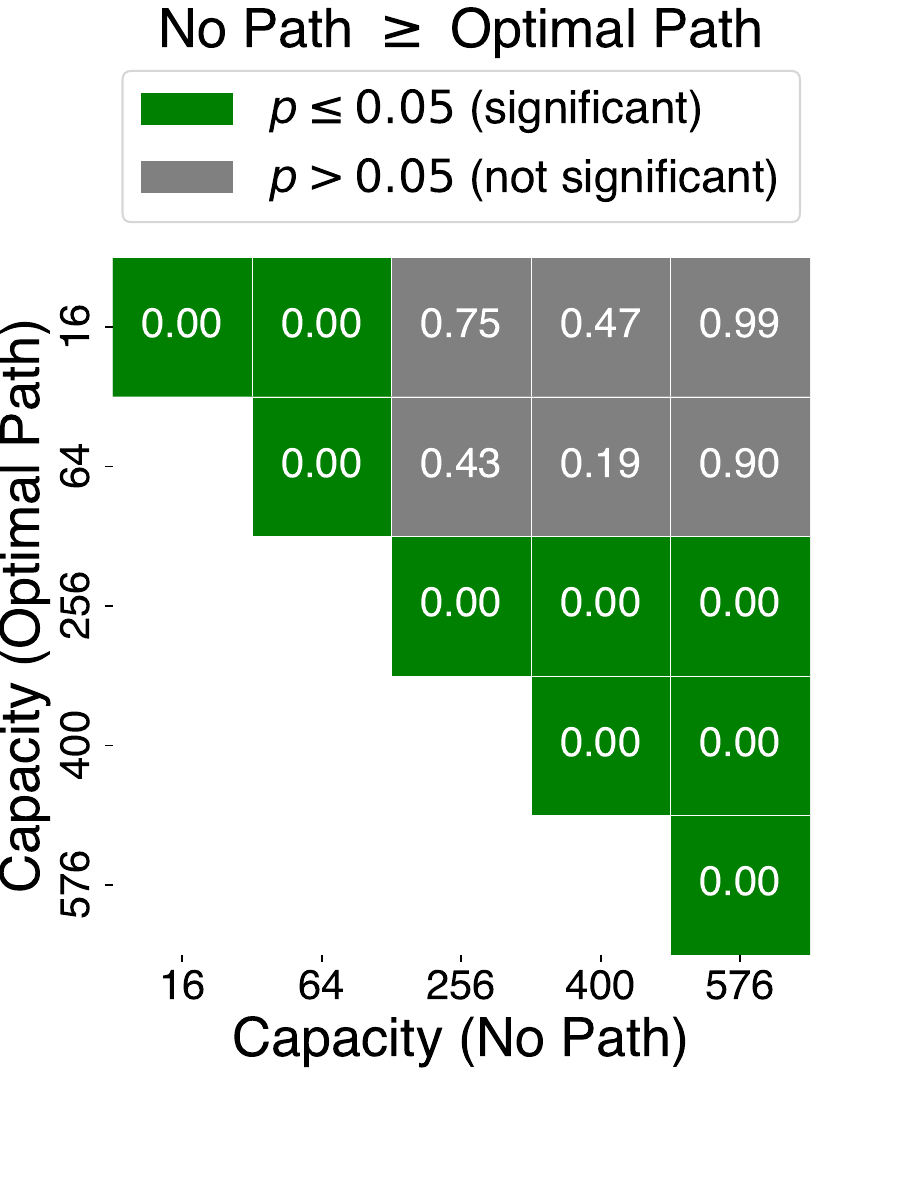}
    \includegraphics[width=0.49\linewidth]{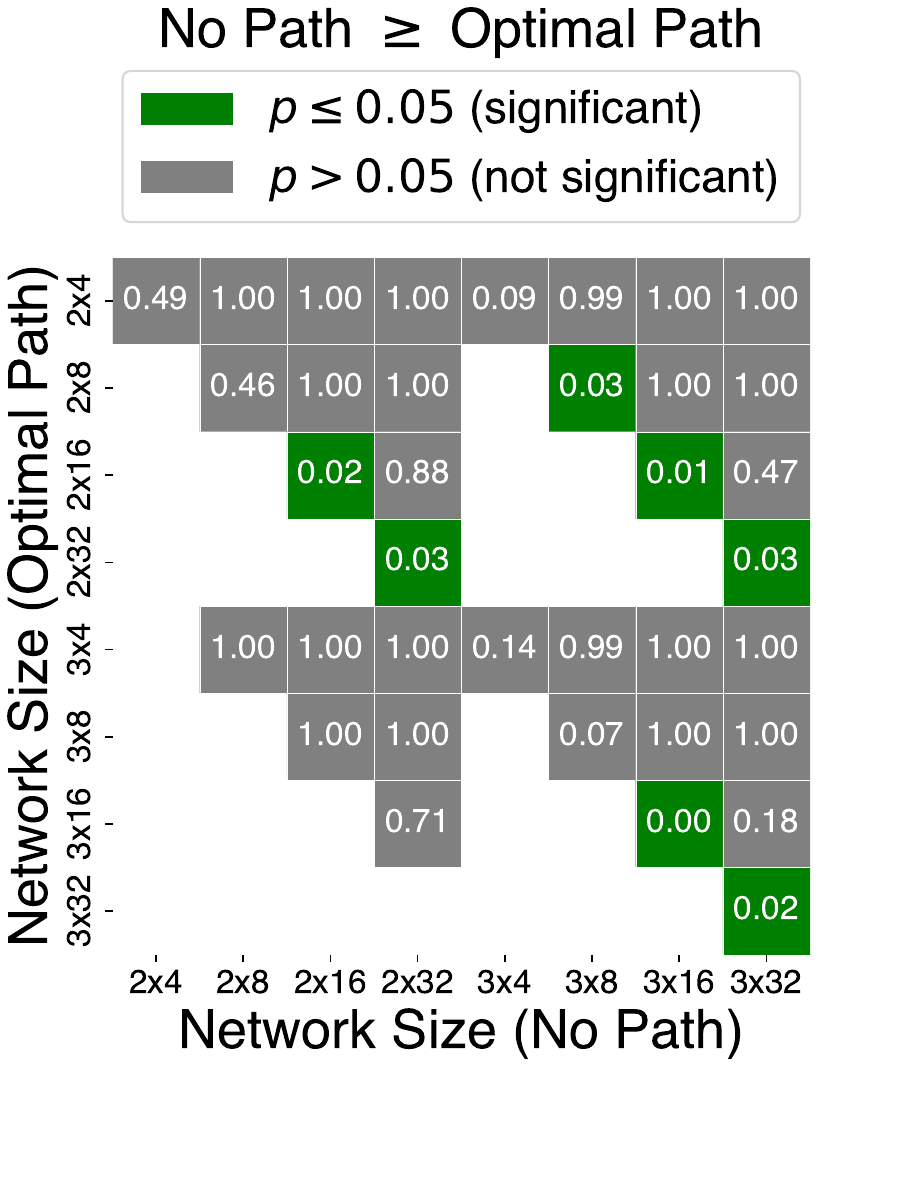}
    \caption{\textbf{Experiment 1. Significance Tests.} Let $P_i$ and $P_j$ be the performances associated with capacities of the row and column. The plot should be read row-wise: when the $(i,j)$-cell is green, $P_i$ is significantly higher than $P_j$.}
    \label{fig:exp1_sigtest}
\end{figure}

\subsection{Experiment 2. Learning in the Presence of Other Fixed Artifacts}
\begin{figure}[H]
    \centering
    \includegraphics[width=1.0\linewidth]{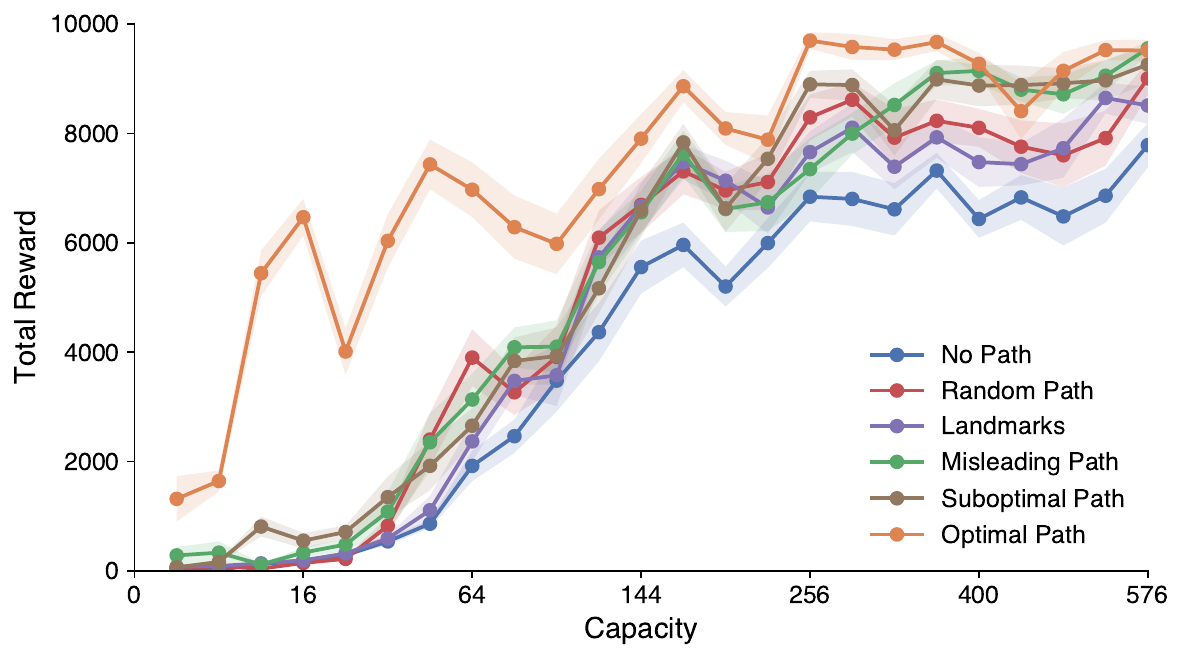}
    \caption{\textbf{Capacity vs performance of Linear-Q in the presence of fixed artifacts.} Total reward for each artifact type is shown. Each data point presents an average and standard-error from 30 seeds. Capacity ranges from $1^2$ to $24^2$ (1 to 576). }
    \label{fig:total_reward_lines}
\end{figure}

\begin{figure}[H]
    \centering
    \includegraphics[width=0.49\linewidth]{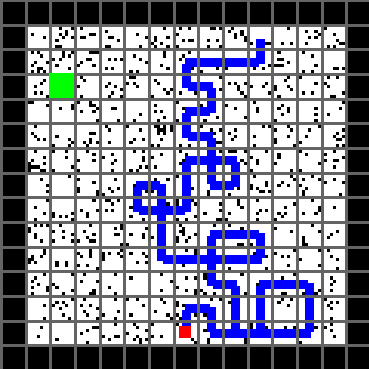}
    \includegraphics[width=0.49\linewidth]{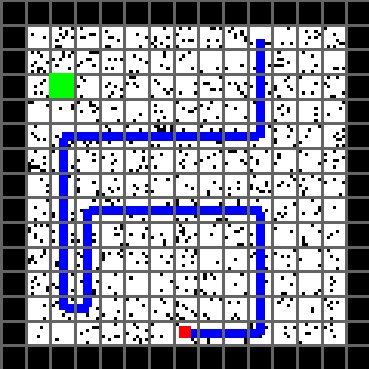}
    \includegraphics[width=0.49\linewidth]{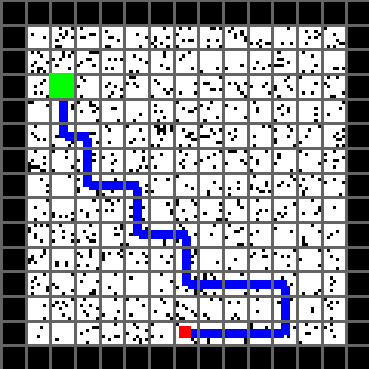}
    \includegraphics[width=0.49\linewidth]{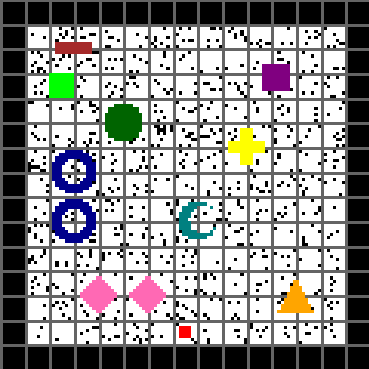}
    \caption{Environments considered in Experiment 2.: Random (top left), Misleading (top right), Suboptimal (bottom left), Landmarks (bottom right).}
    \label{fig:e2-envs-app}
\end{figure}

\begin{table}[H]
    \includegraphics[width=\linewidth]{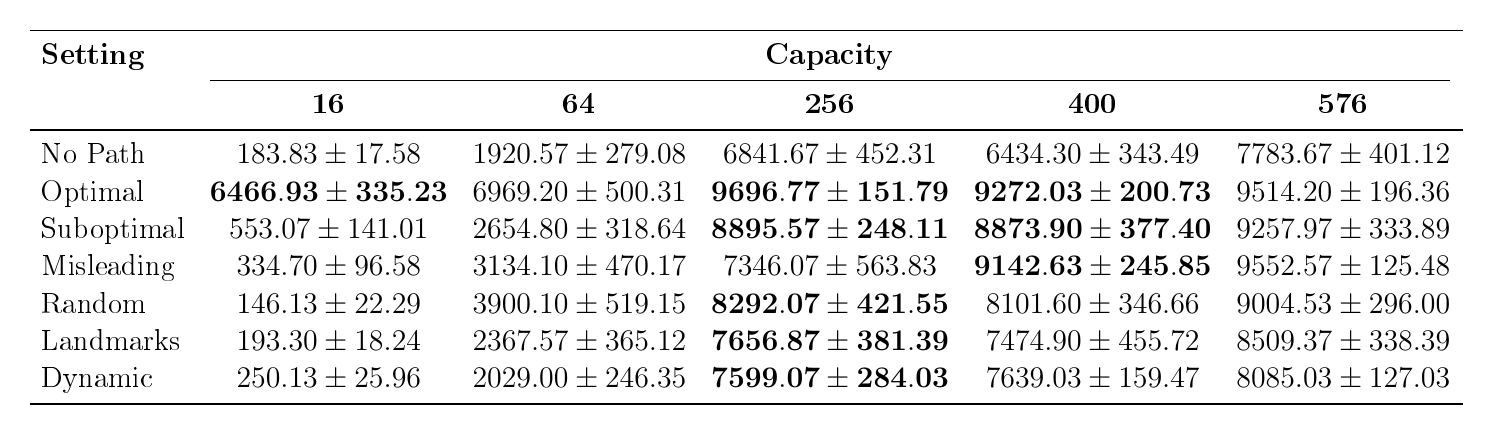}
    \includegraphics[width=\linewidth]{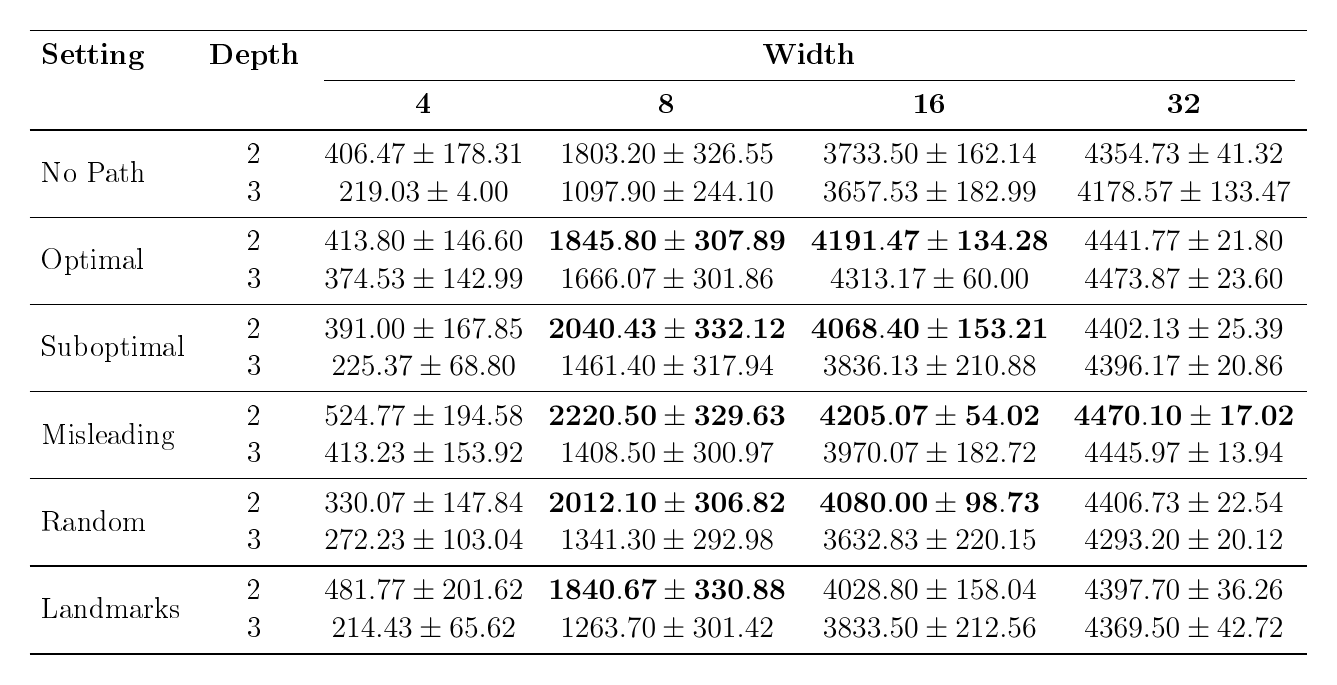}
    \caption{Average total reward with standard errors over thirty independent seeds, for all settings considered. Bold text indicates satisfaction of our empirical condition.}
    \label{tab:total_reward}
\end{table}

\begin{figure}[H]
    \centering
    \includegraphics[width=0.49\linewidth]{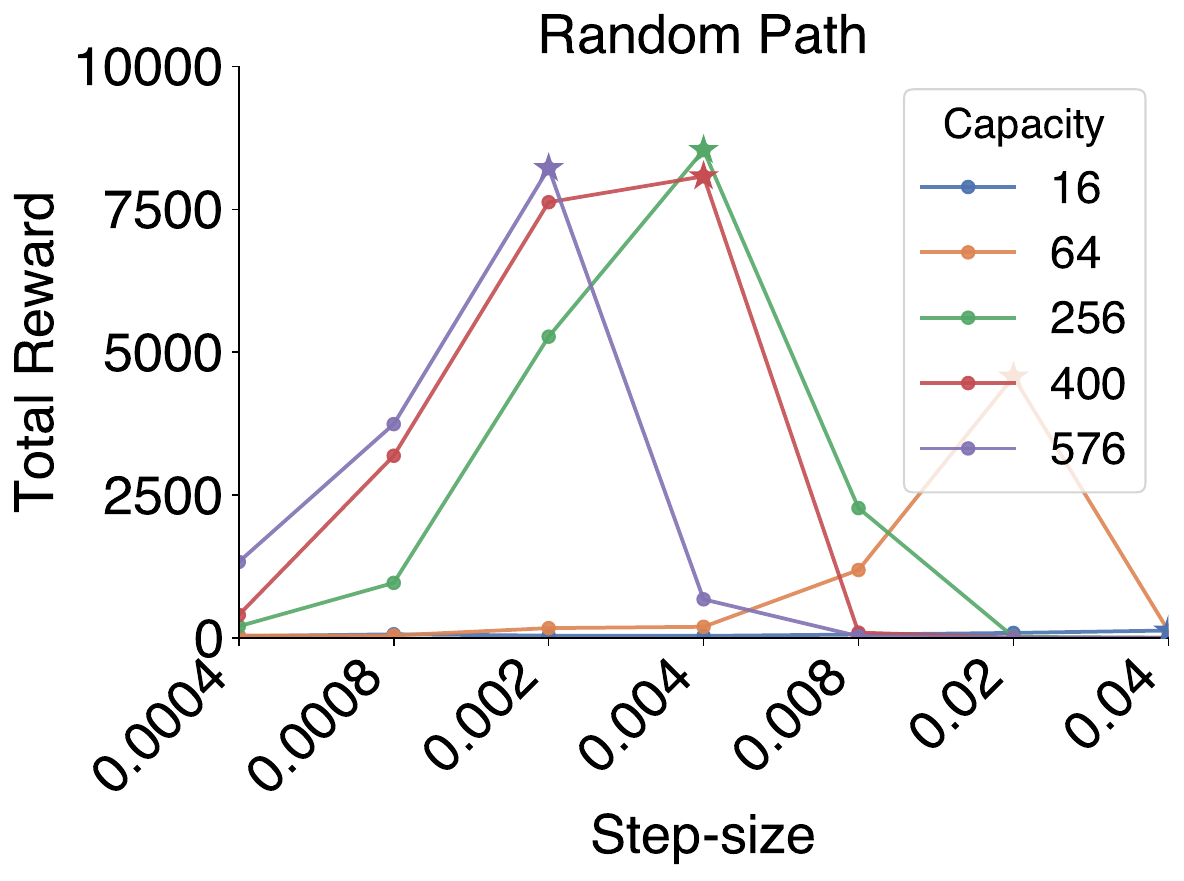}
    \includegraphics[width=0.49\linewidth]{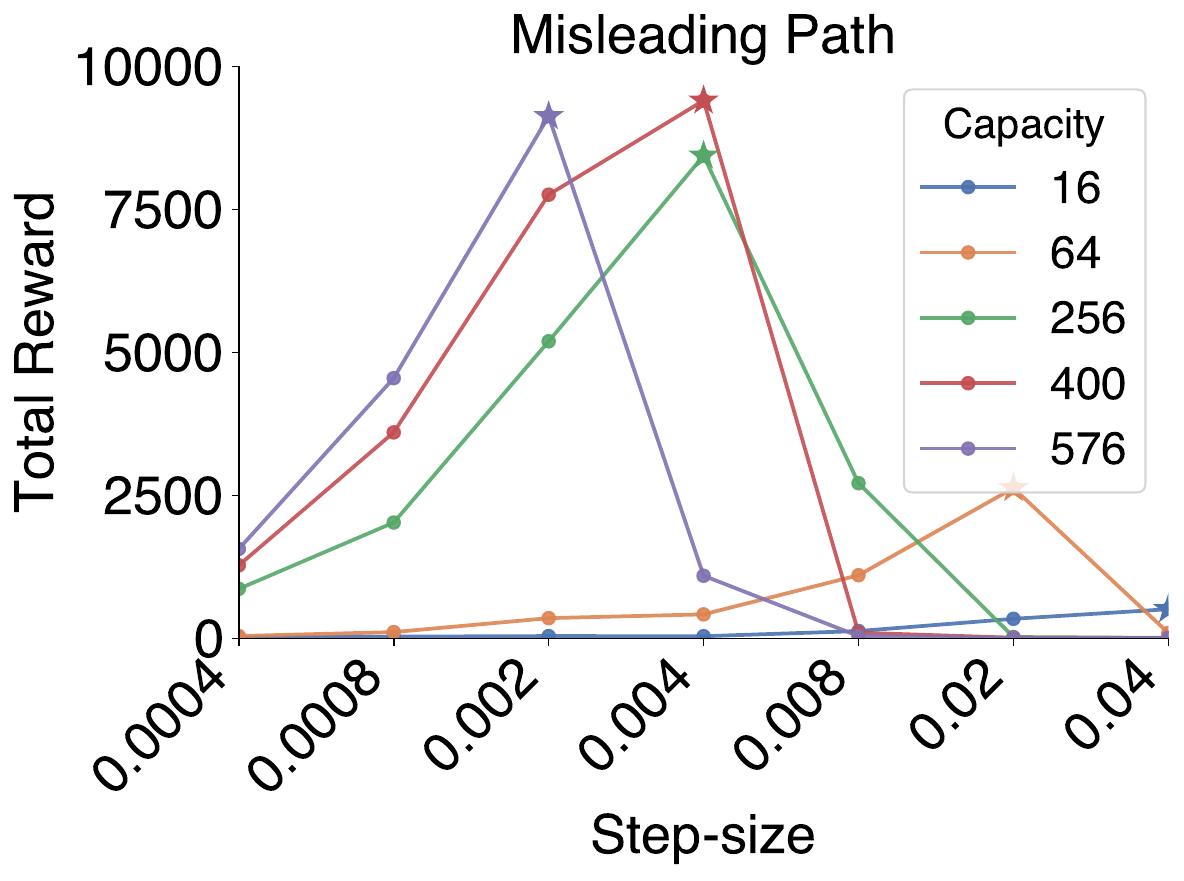}
    \includegraphics[width=0.49\linewidth]{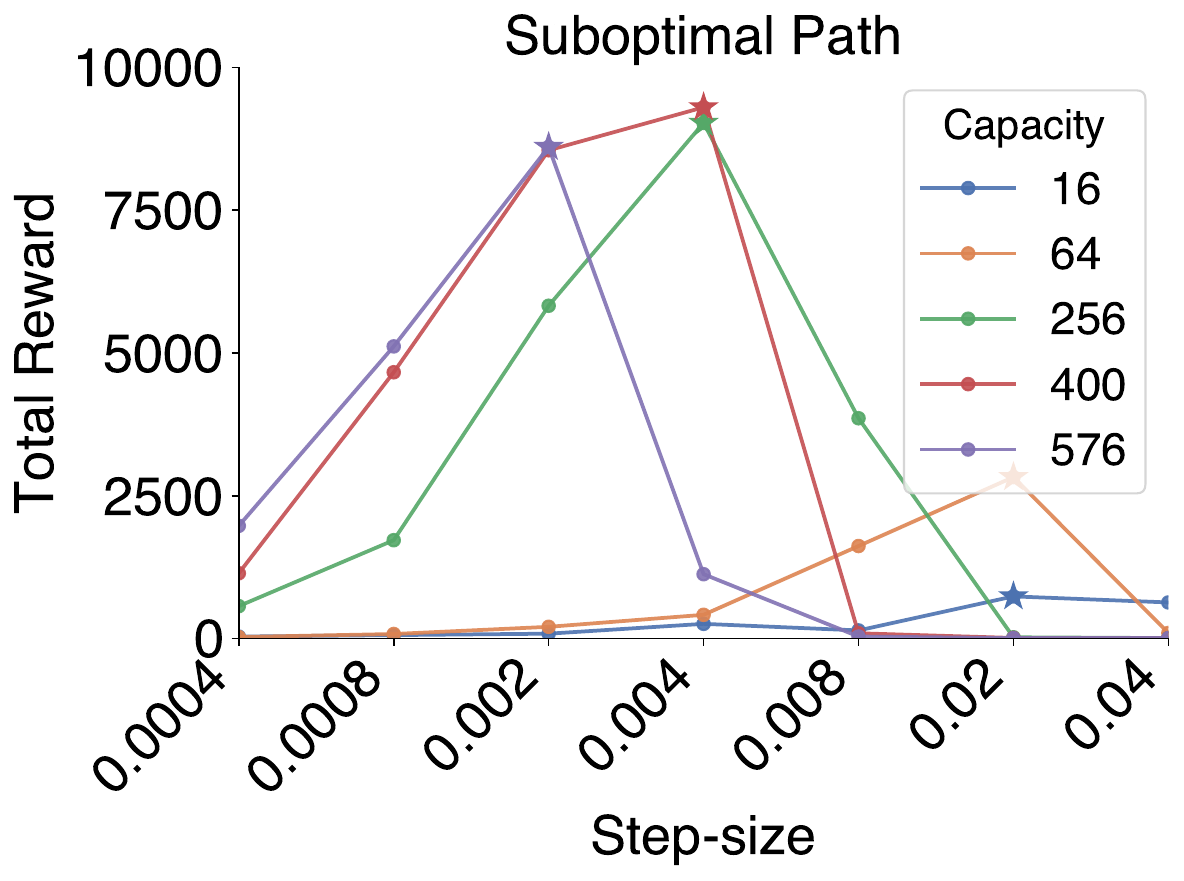}
    \includegraphics[width=0.49\linewidth]{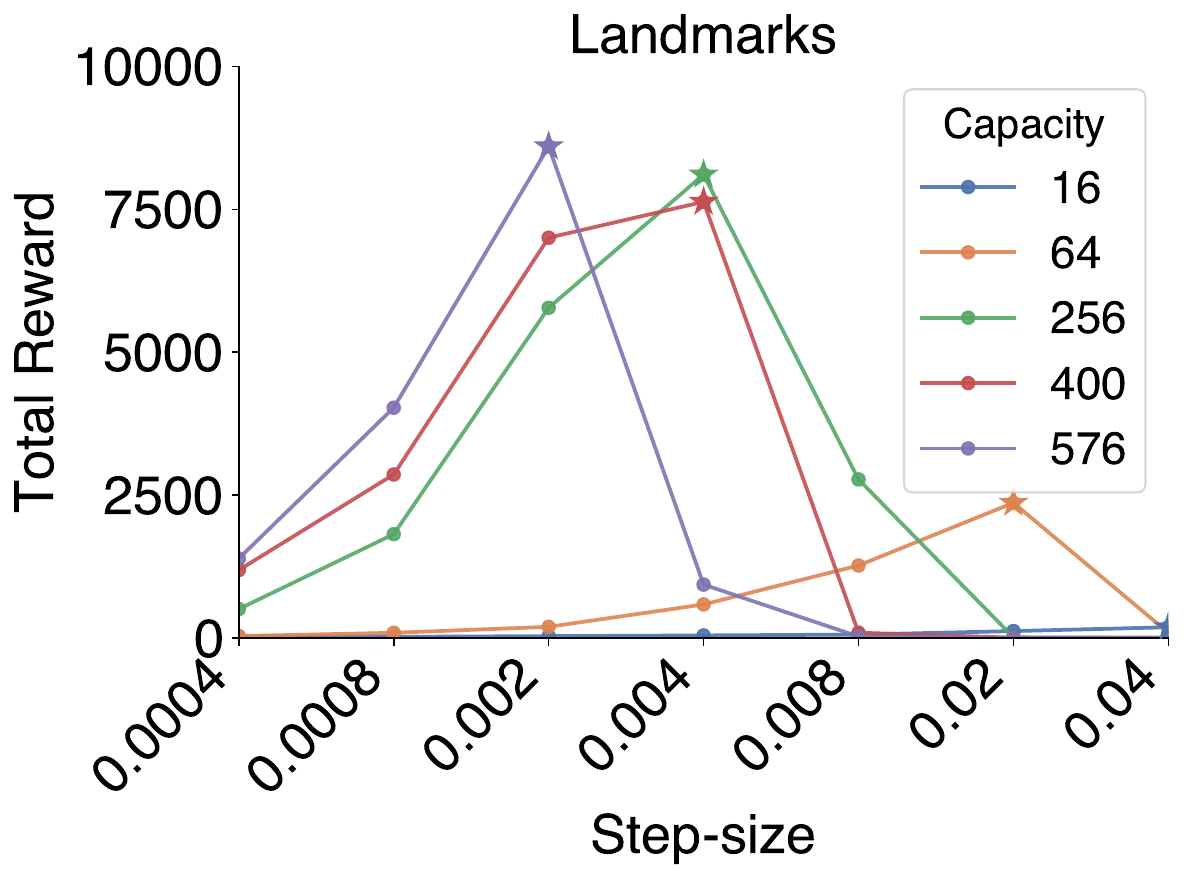}
    \includegraphics[width=0.49\linewidth]{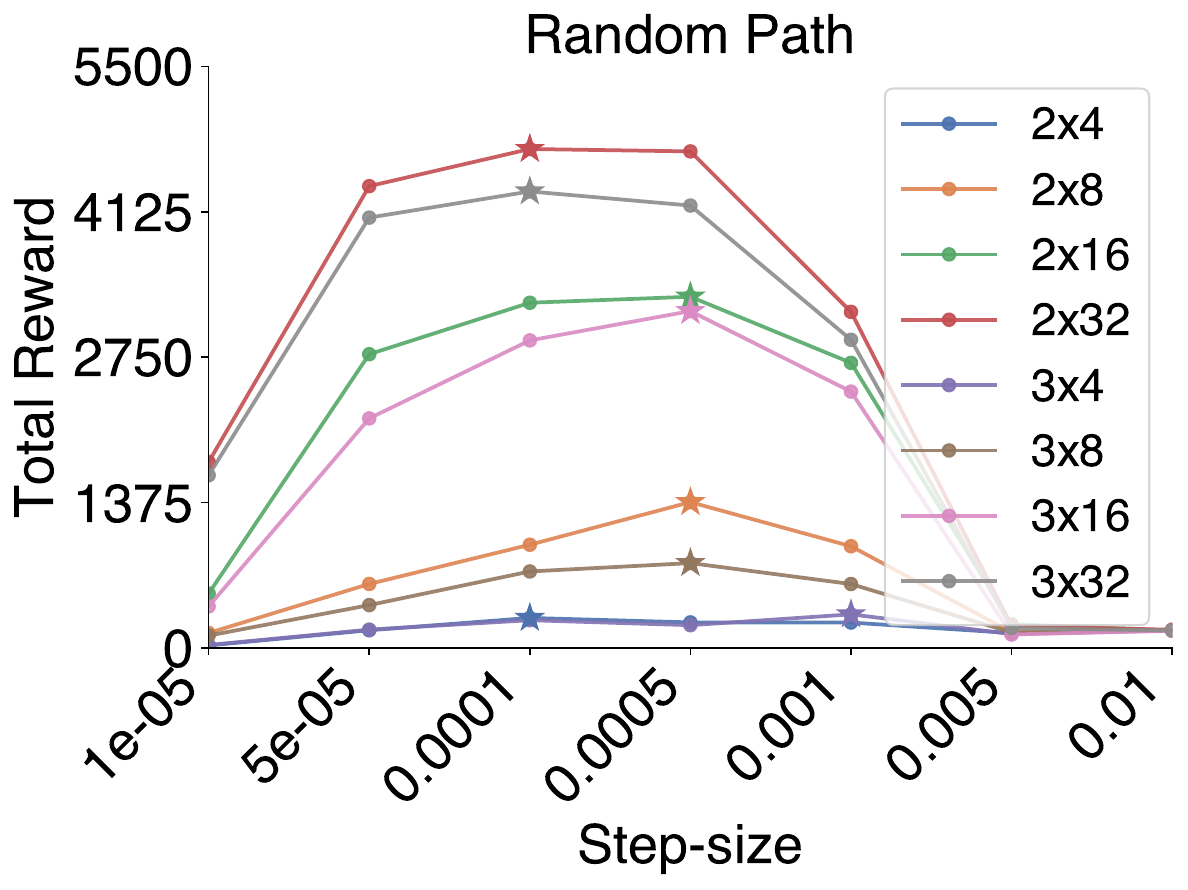}
    \includegraphics[width=0.49\linewidth]{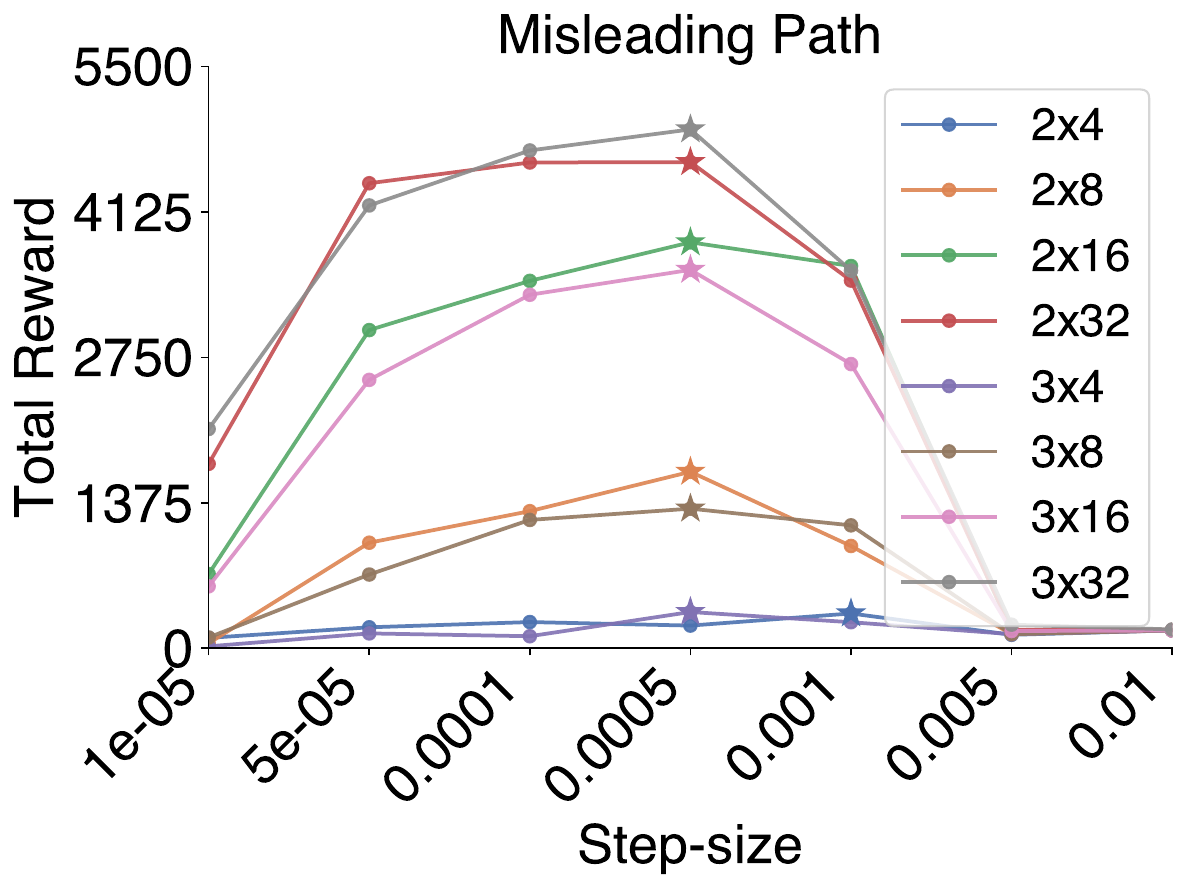}
    \includegraphics[width=0.49\linewidth]{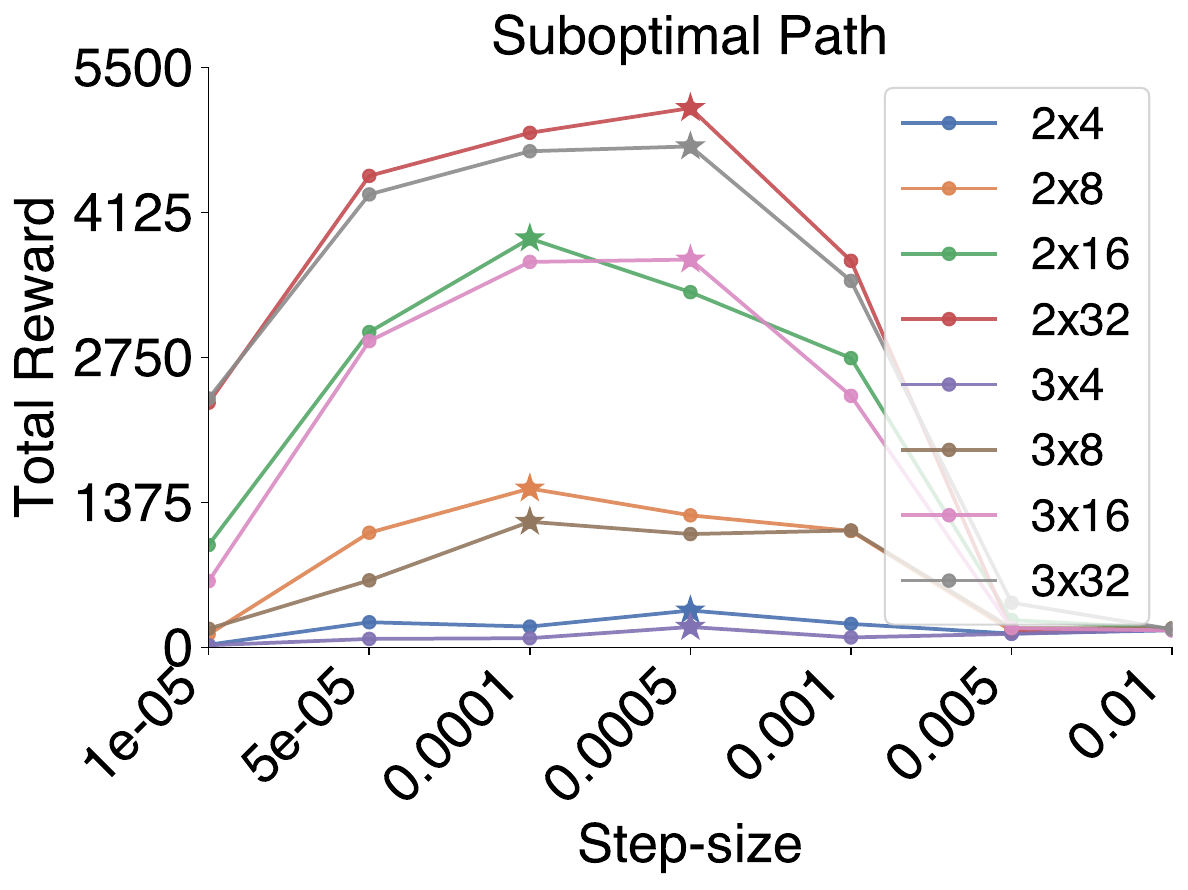}
    \includegraphics[width=0.49\linewidth]{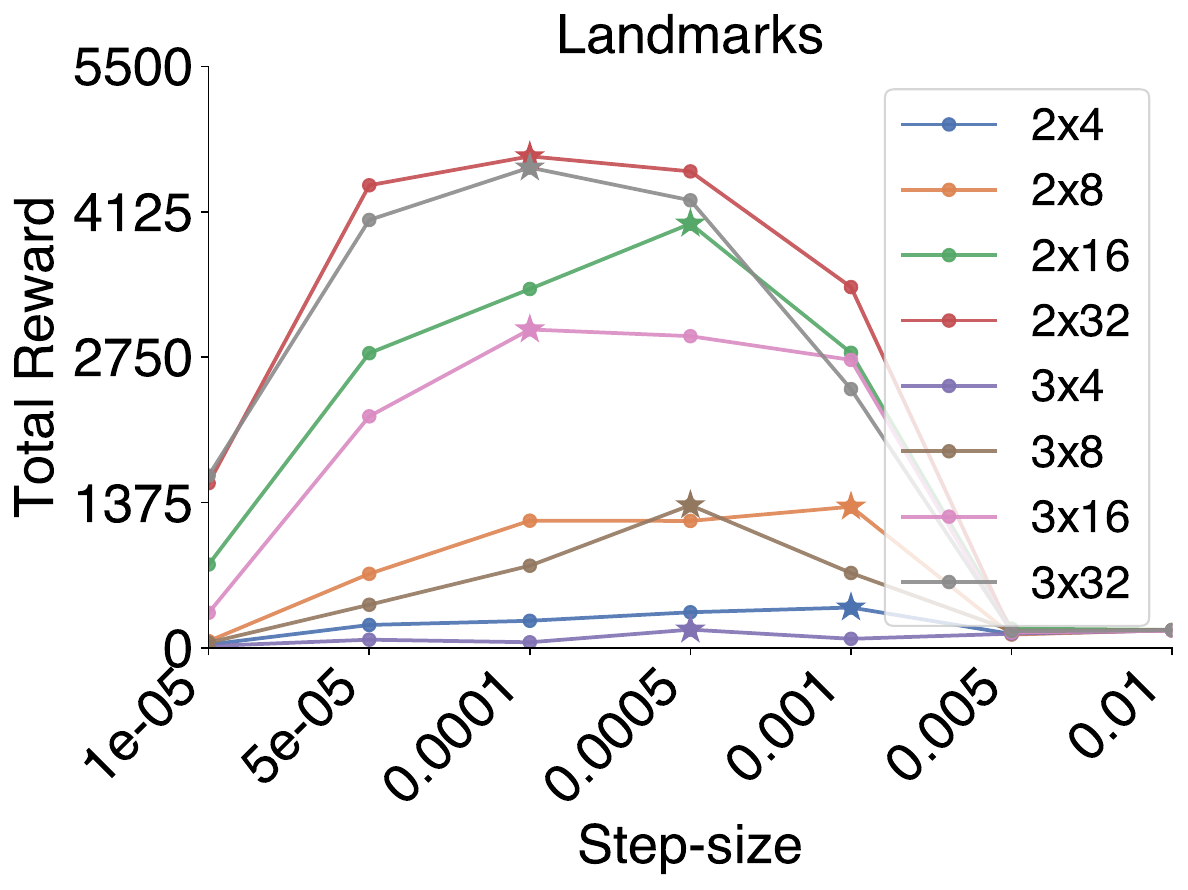}
    \caption{\textbf{Experiment 2. Step-size sweeps.} Linear Q-learning (top half), DQN (lower half).}
    \label{fig:exp2_sweep}
\end{figure}

\begin{figure}[H]
    \centering
    \includegraphics[width=0.49\linewidth]{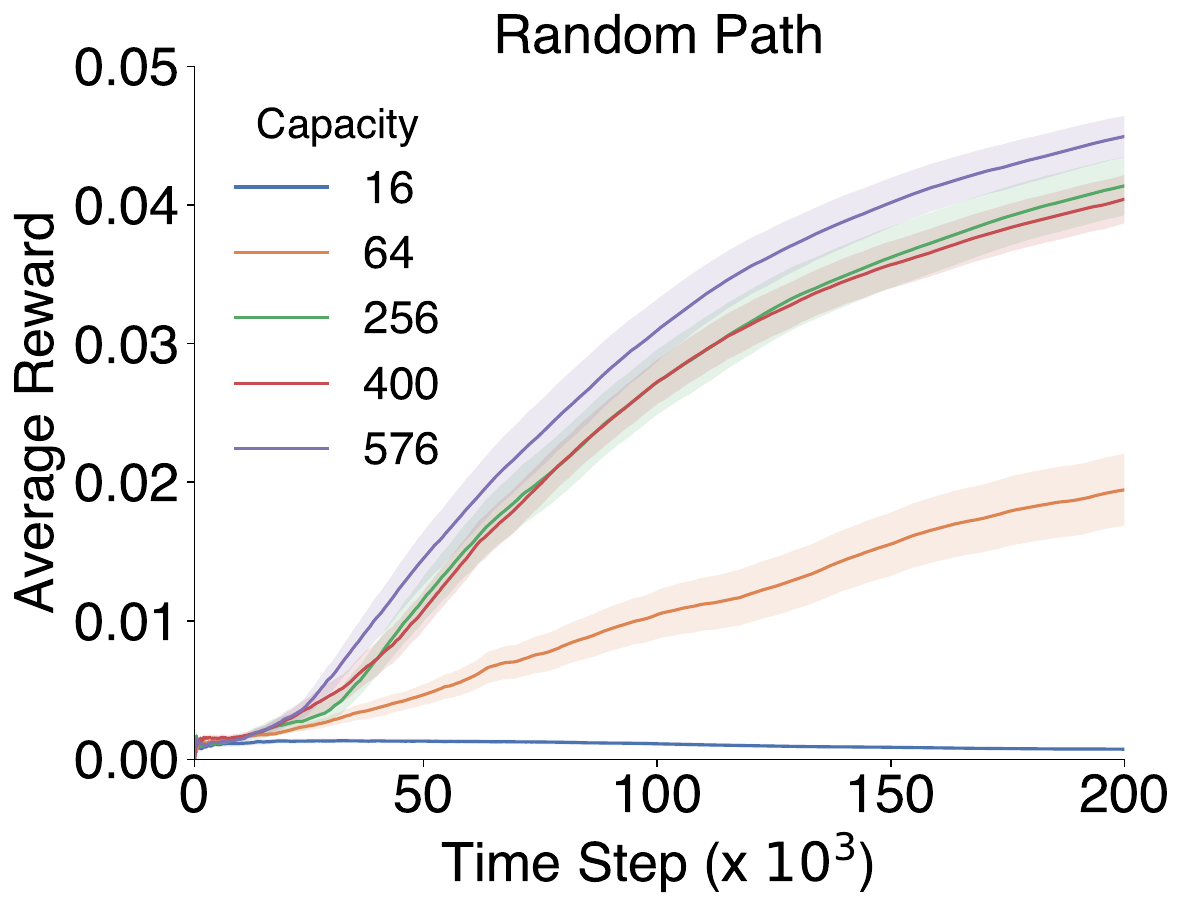}
    \includegraphics[width=0.49\linewidth]{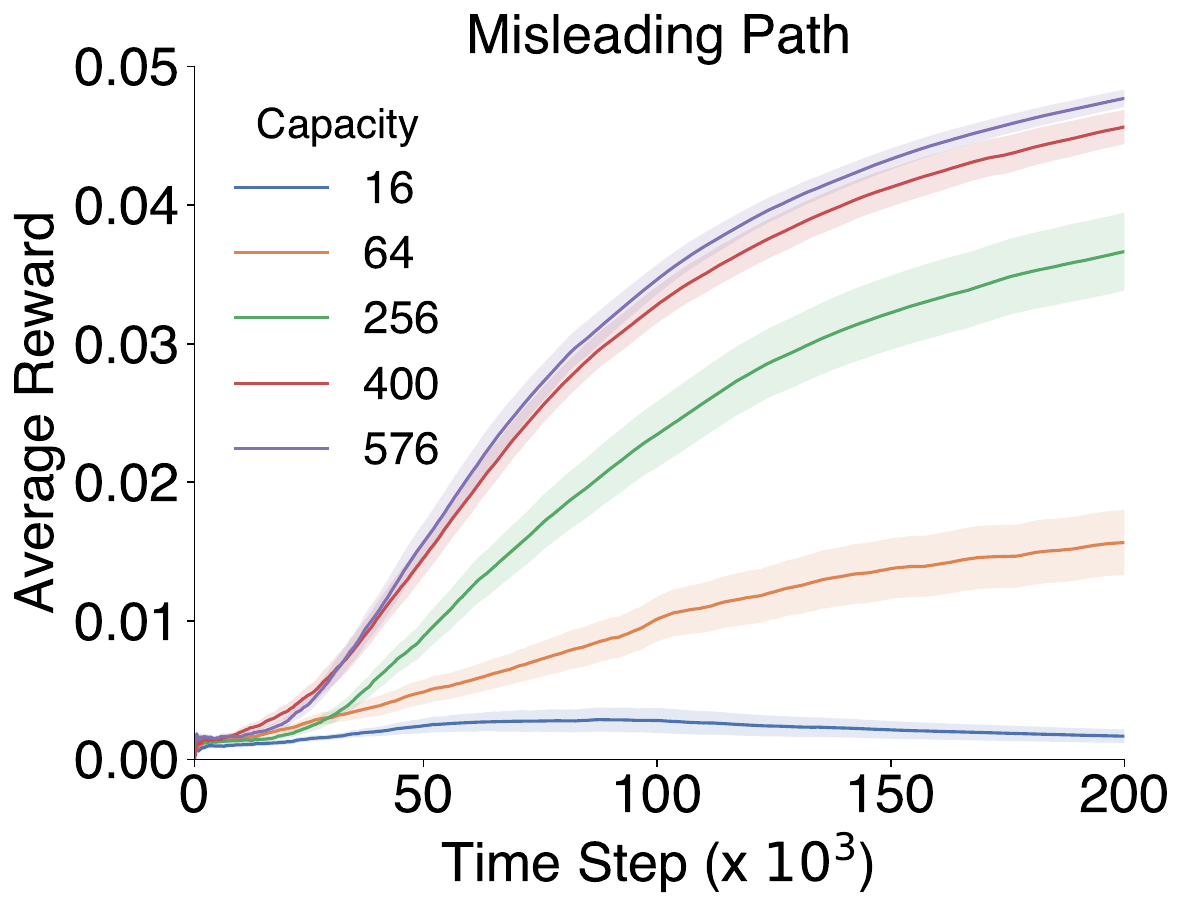}
    \includegraphics[width=0.49\linewidth]{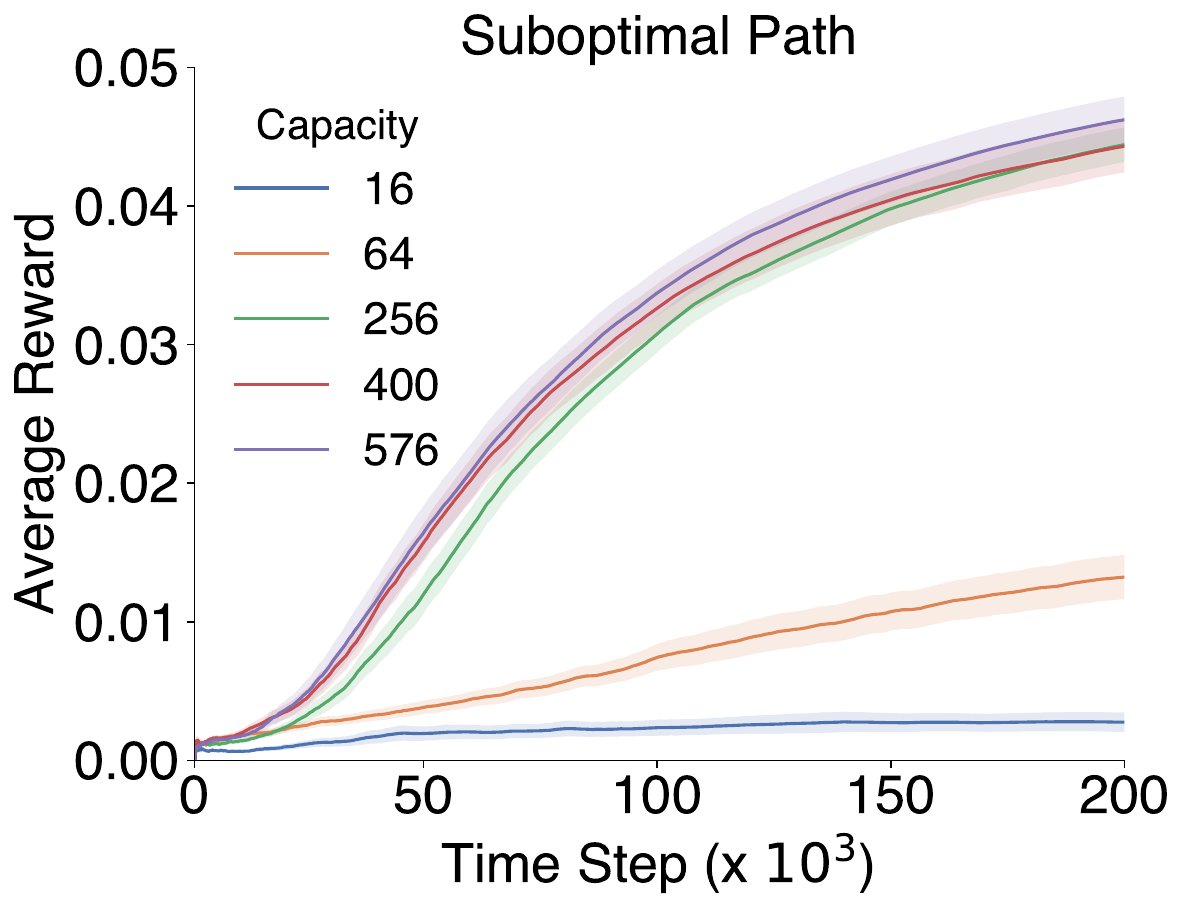}
    \includegraphics[width=0.49\linewidth]{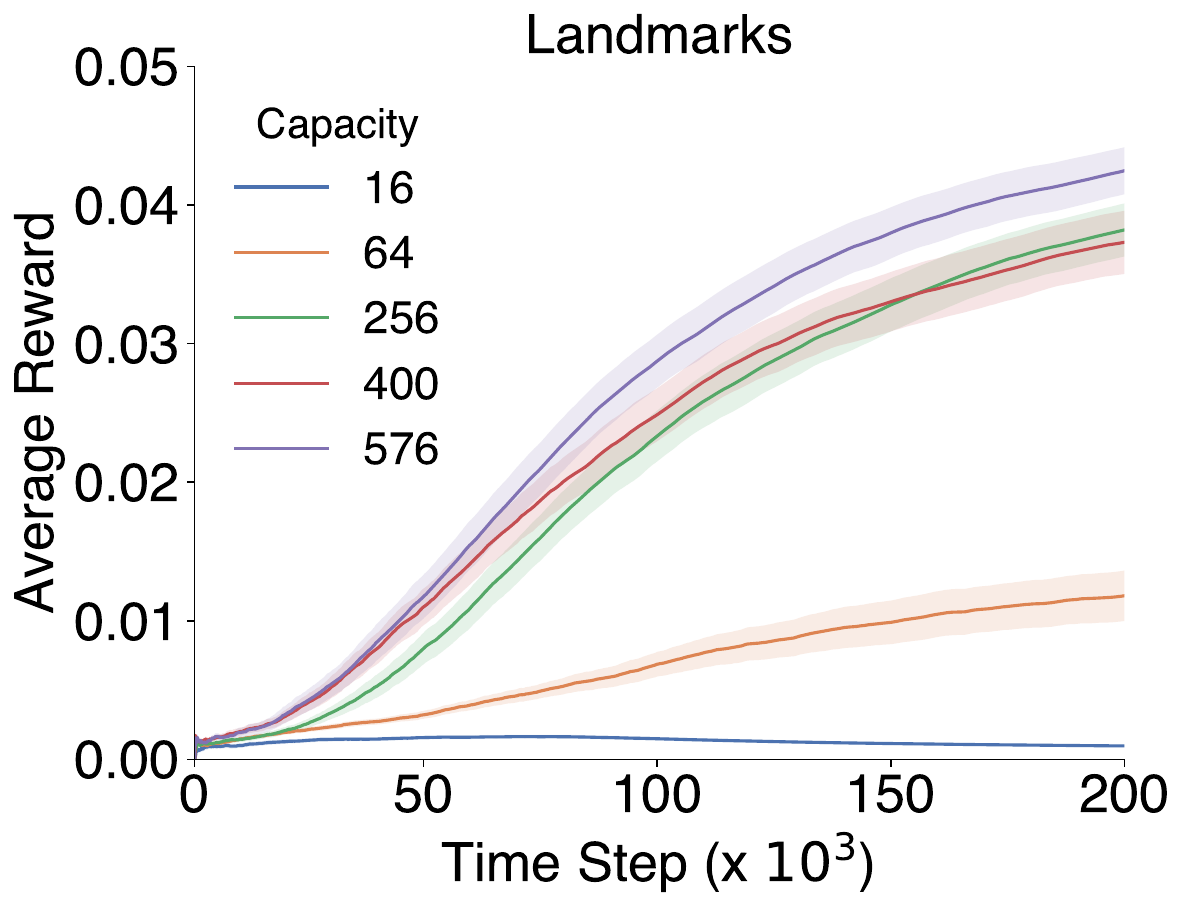}
    \includegraphics[width=0.49\linewidth]{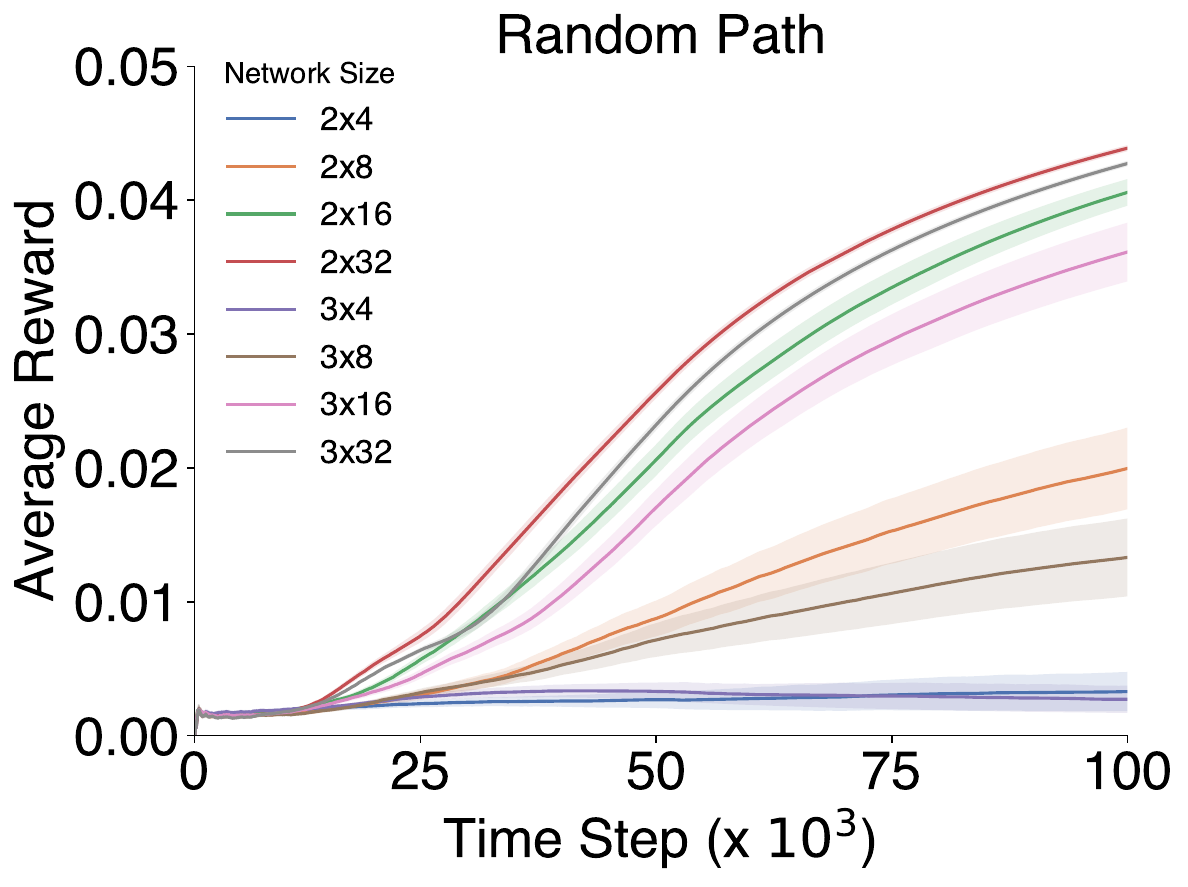}
    \includegraphics[width=0.49\linewidth]{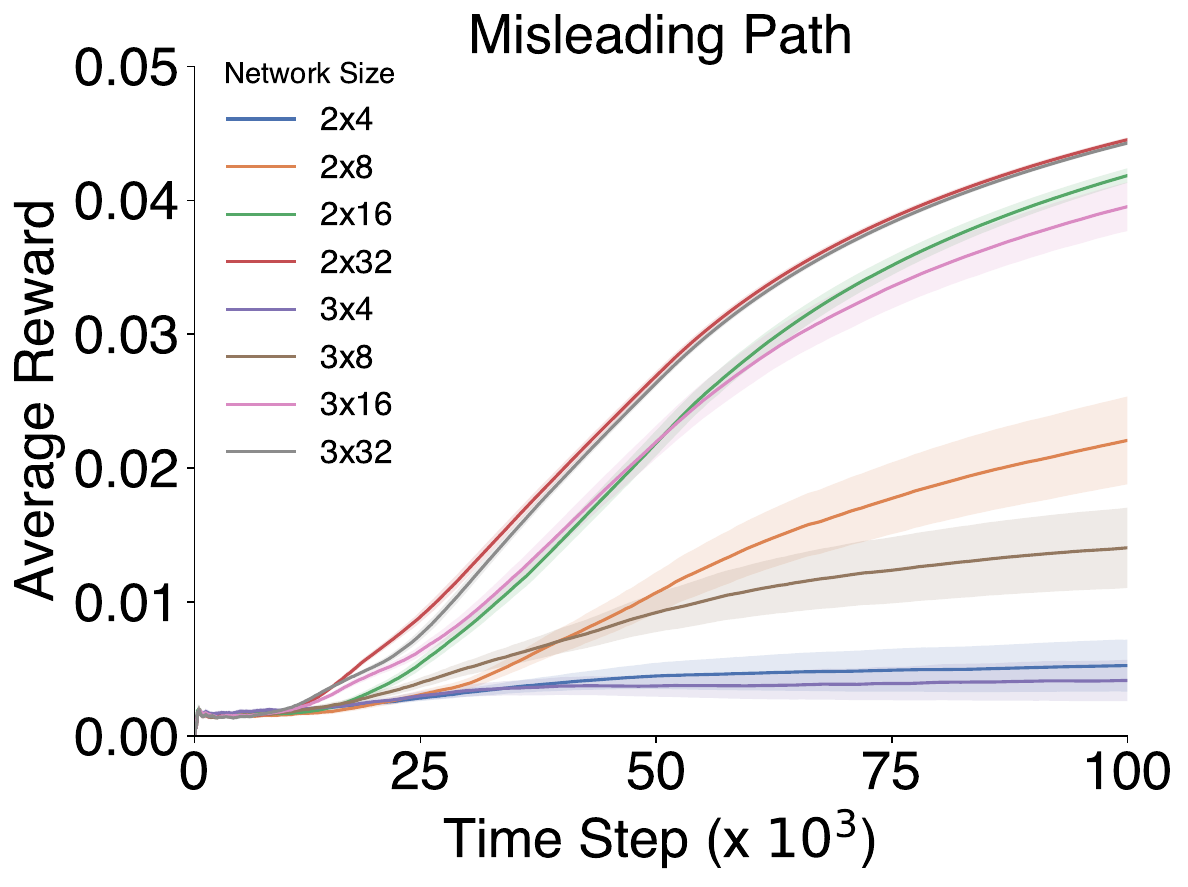}
    \includegraphics[width=0.49\linewidth]{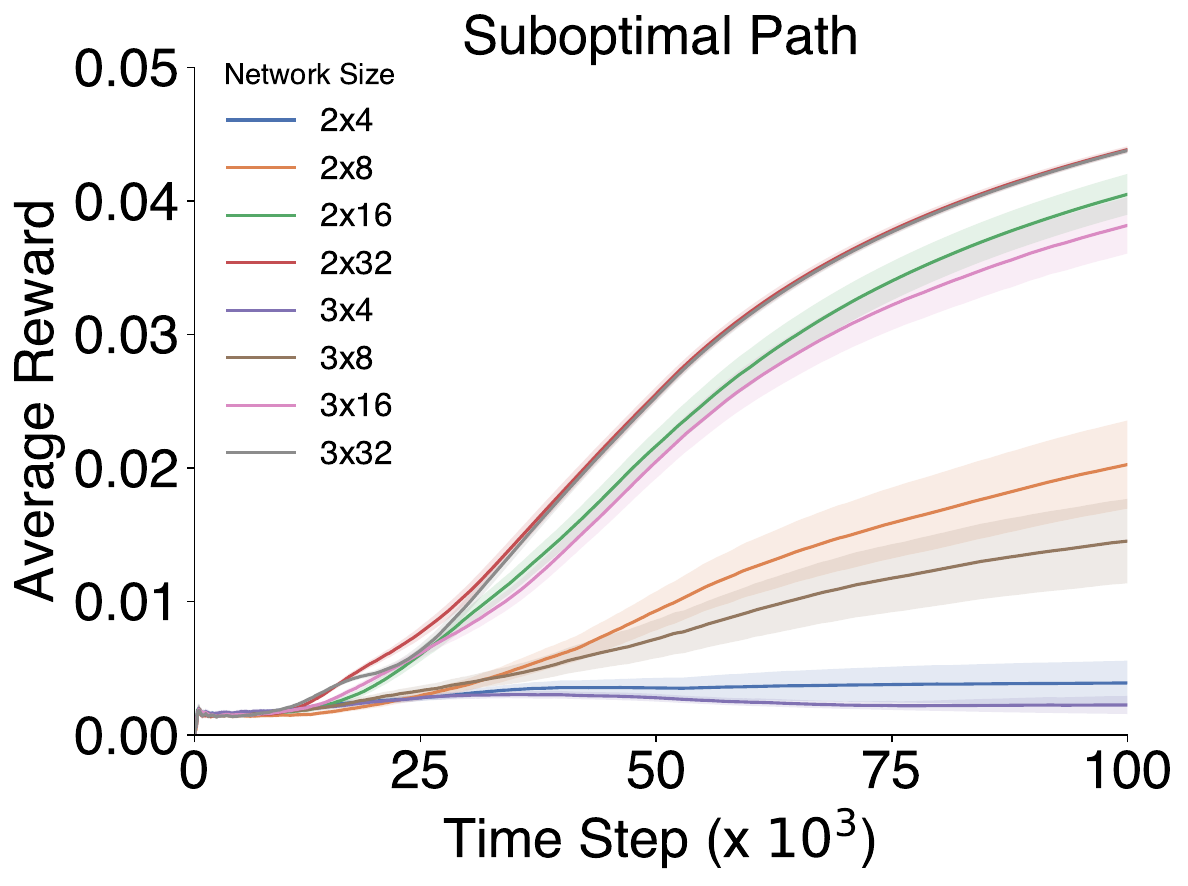}
    \includegraphics[width=0.49\linewidth]{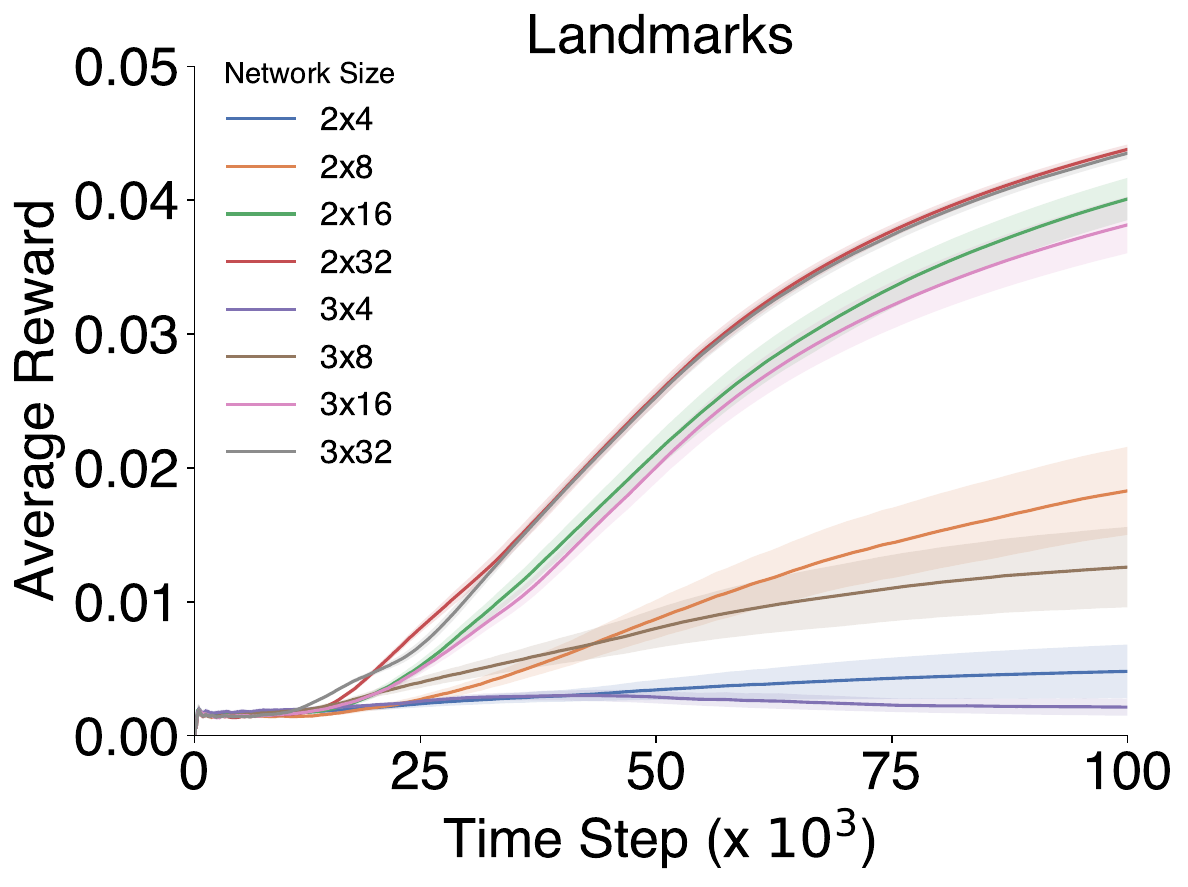}
    \caption{\textbf{Experiment 2. Average Reward.} Linear-Q (top half), DQN (bottom half).}
    \label{fig:exp2_stepsize}
\end{figure}

\begin{figure}[H]
    \centering
    \includegraphics[width=0.49\linewidth]{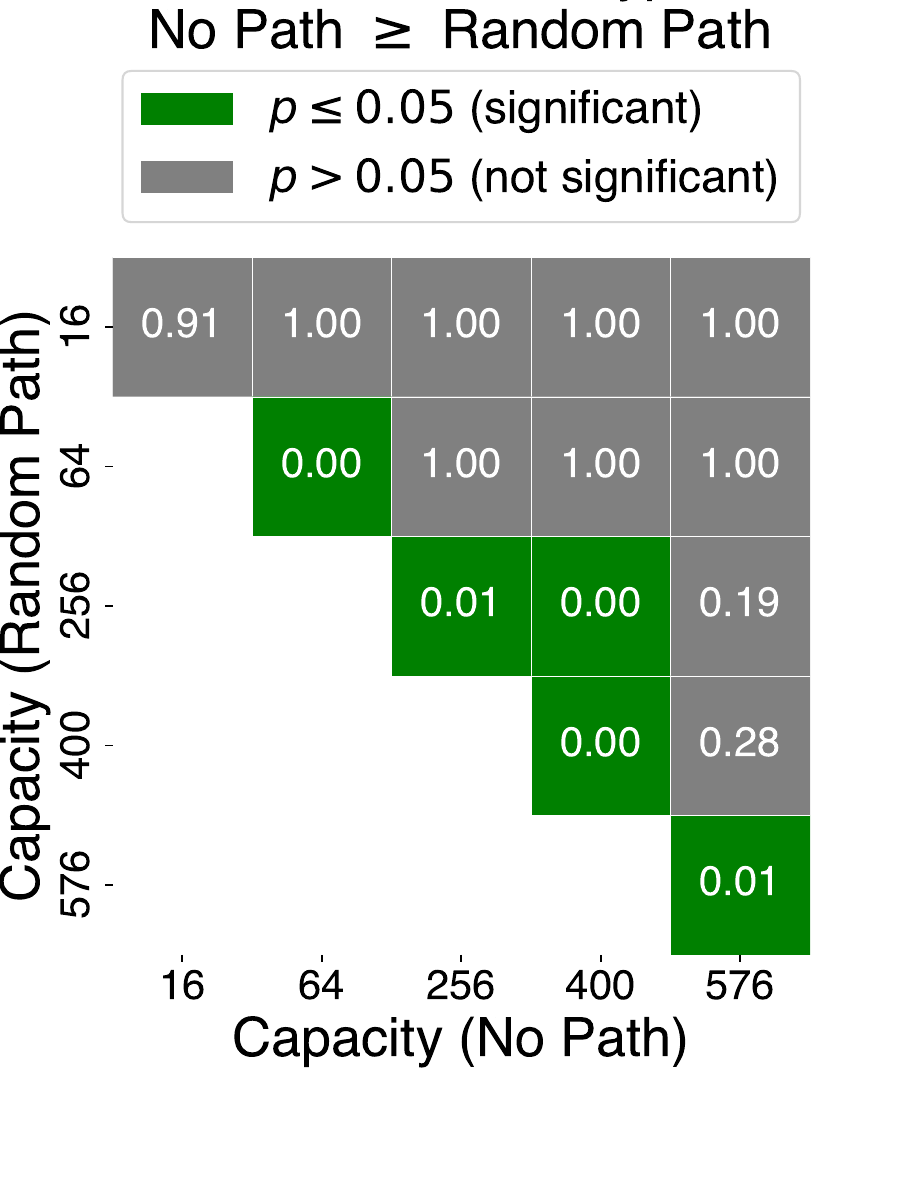}
    \includegraphics[width=0.49\linewidth]{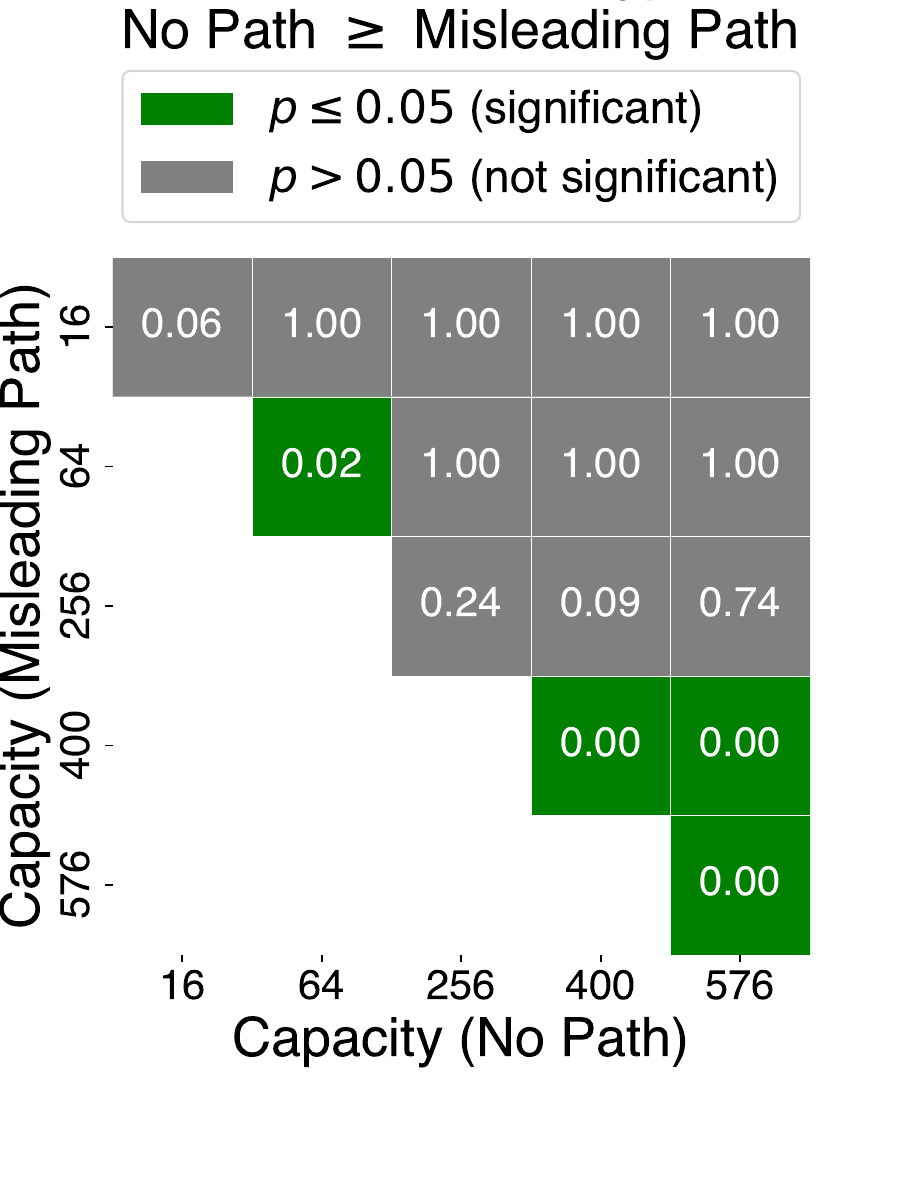}
    \includegraphics[width=0.49\linewidth]{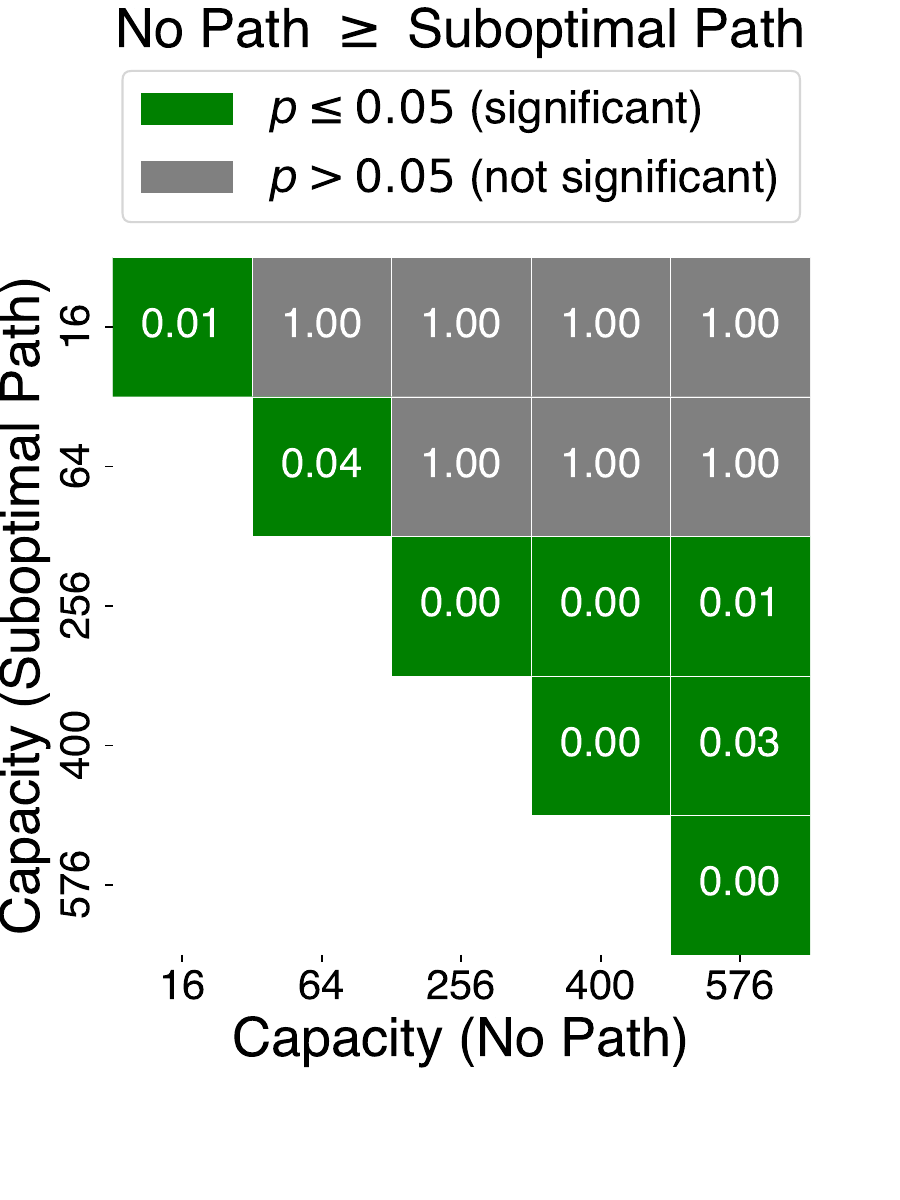}
    \includegraphics[width=0.49\linewidth]{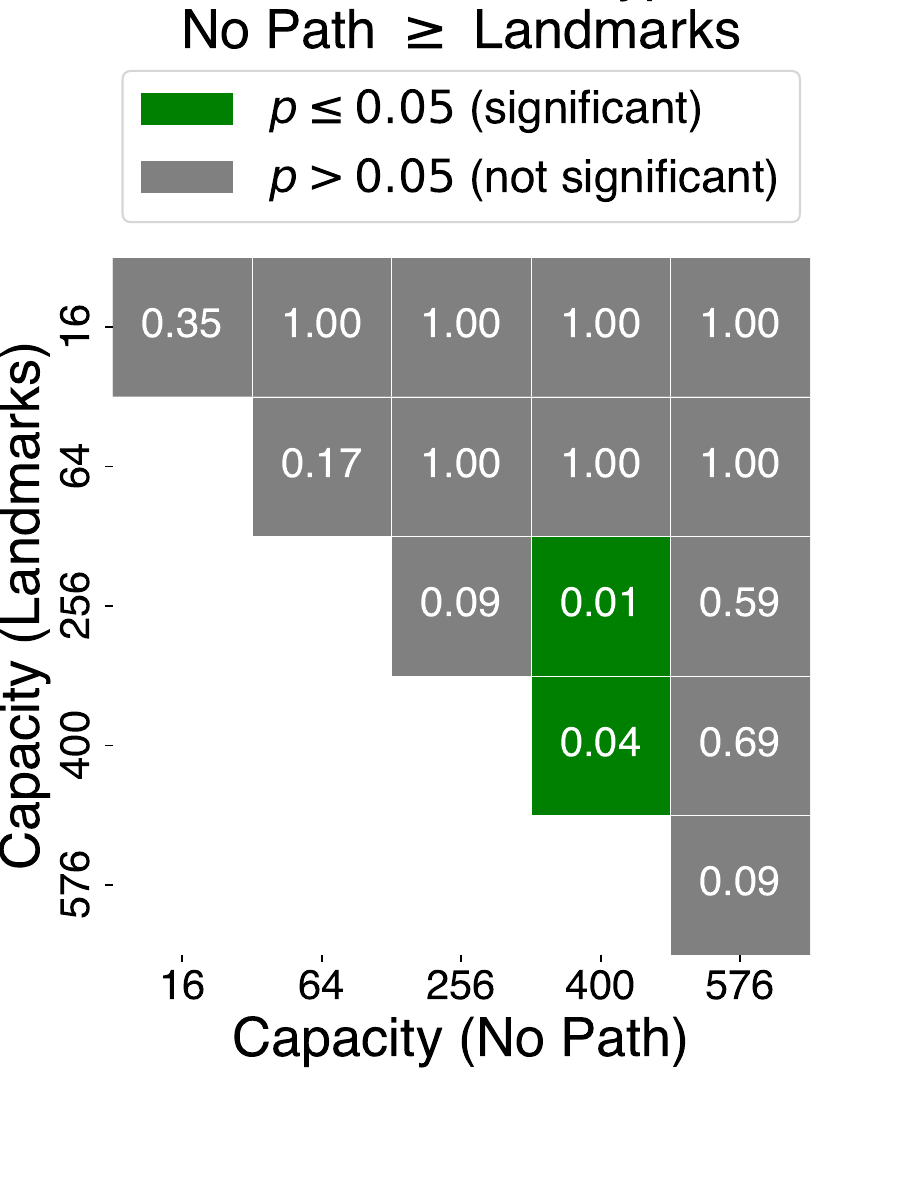}
    \caption{\textbf{Experiment 2. Linear Significant Tests.} Let $P_i$ and $P_j$ be the performances associated with capacities of the row and column. The plot should be read row-wise: when the $(i,j)$-cell is green, $P_i$ is significantly higher than $P_j$.}
\end{figure}

\begin{figure}[H]
    \includegraphics[width=0.49\linewidth]{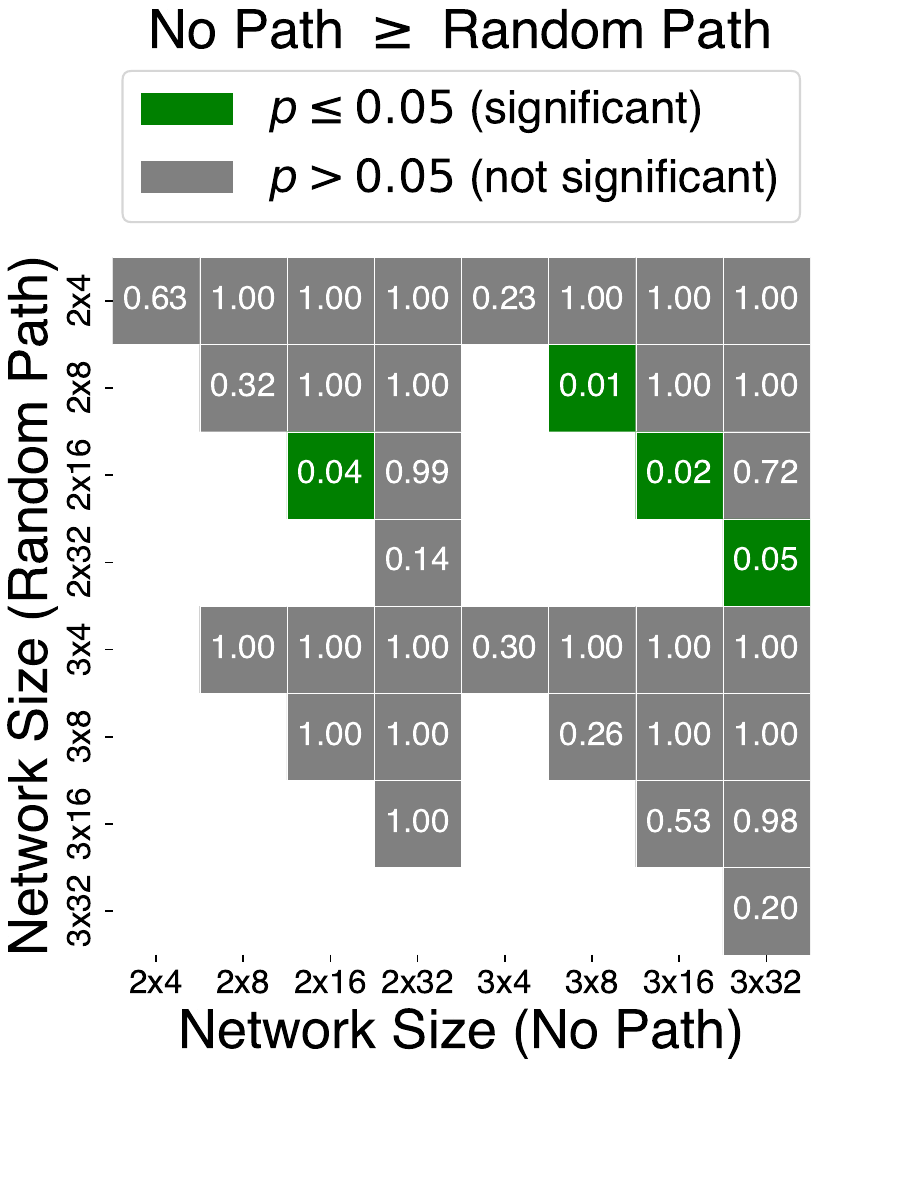}
    \includegraphics[width=0.49\linewidth]{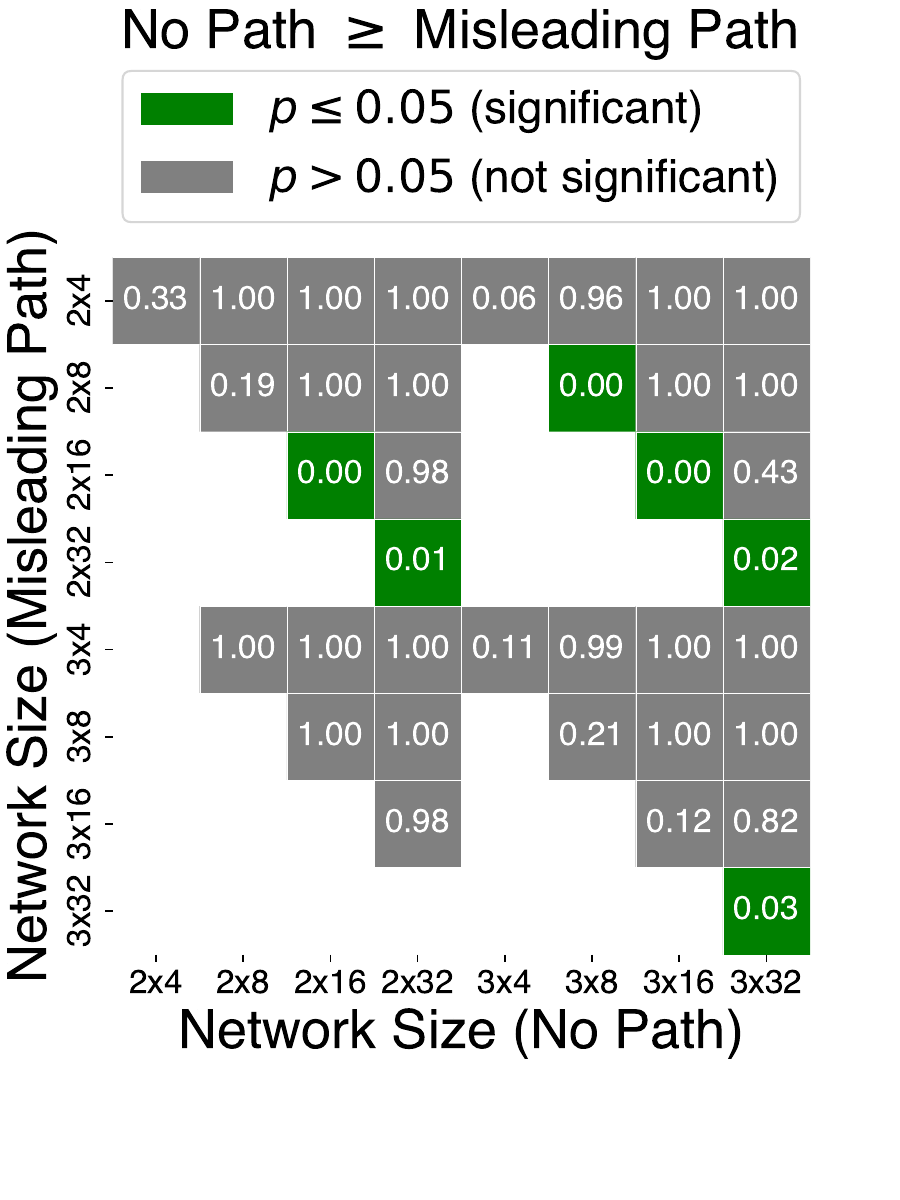}
    \includegraphics[width=0.49\linewidth]{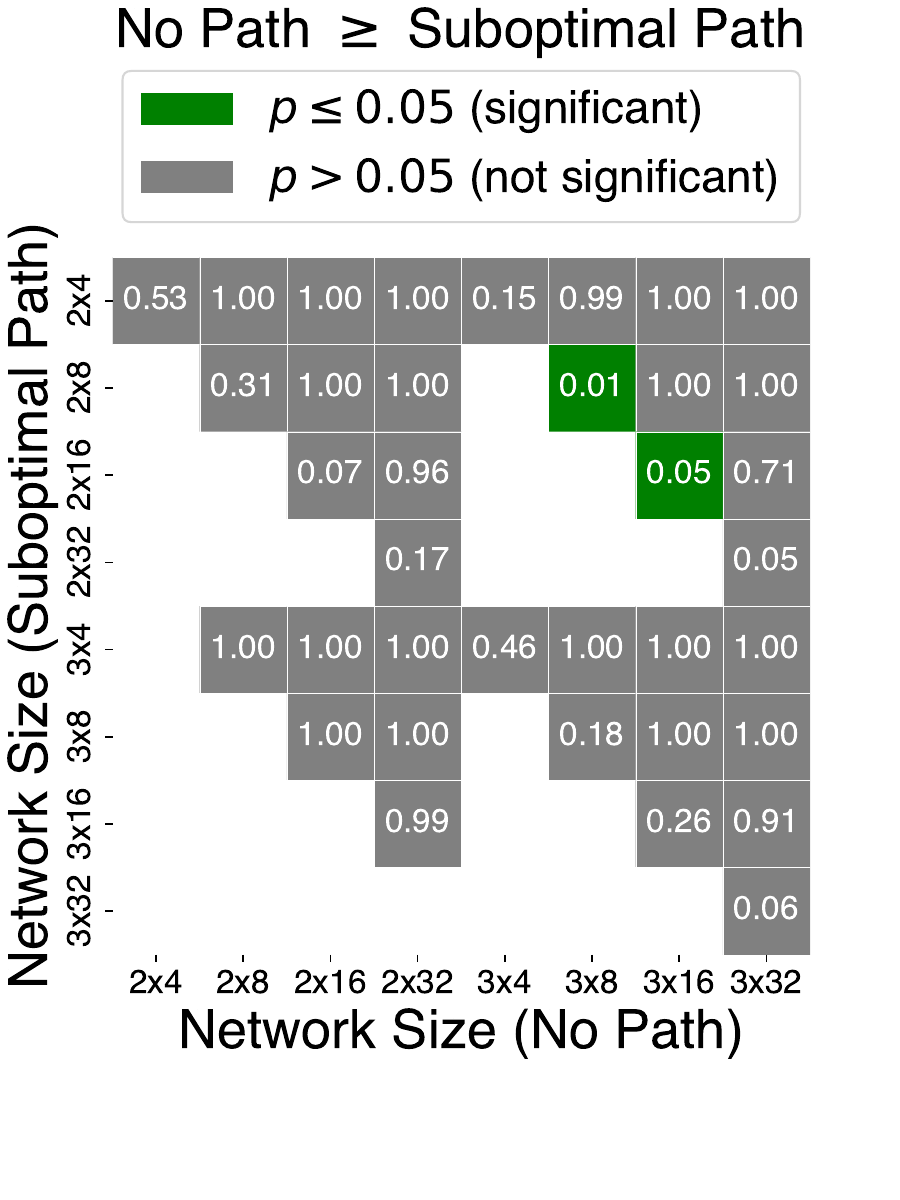}
    \includegraphics[width=0.49\linewidth]{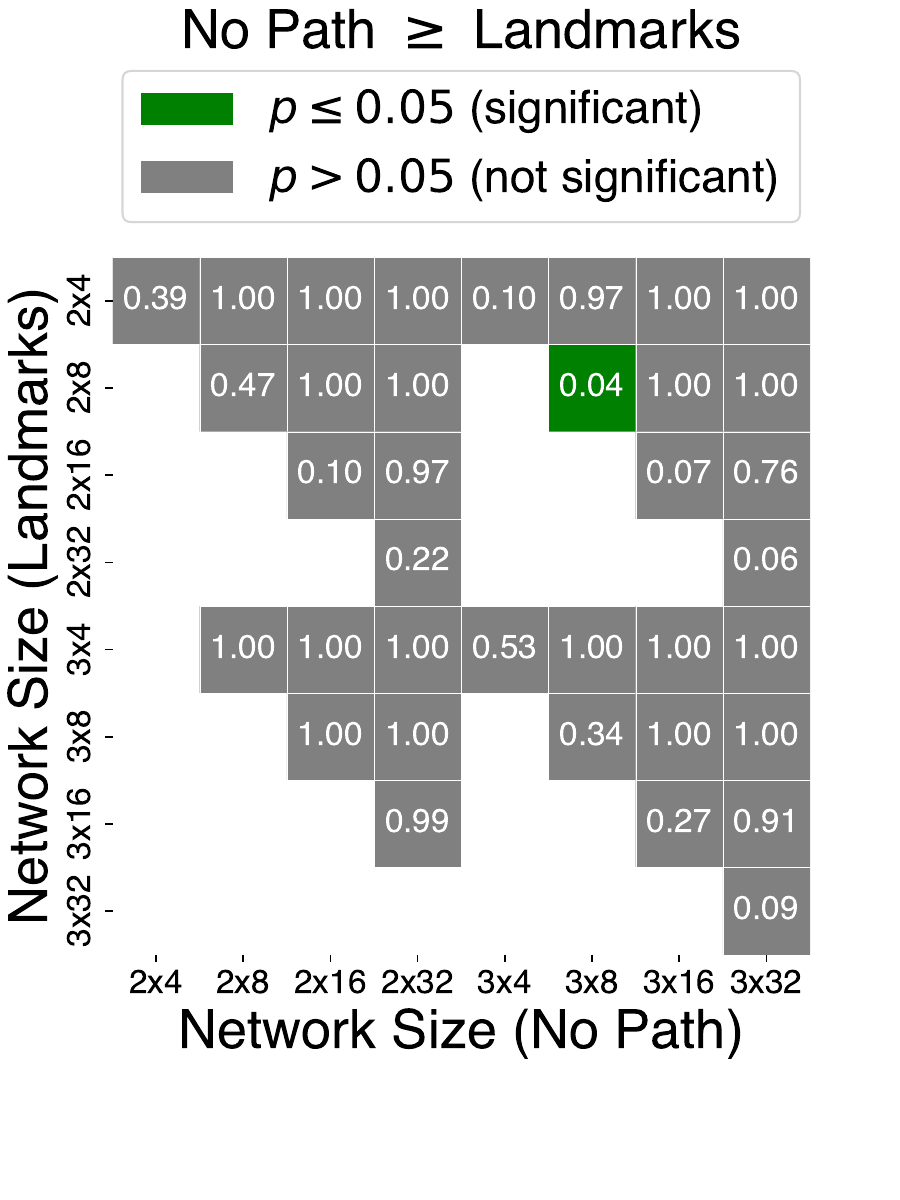}
    \caption{\textbf{Experiment 2. DQN Significant Tests.} Let $P_i$ and $P_j$ be the performances associated with capacities of the row and column. The plot should be read row-wise: when the $(i,j)$-cell is green, $P_i$ is significantly higher than $P_j$.}
    \label{fig:exp2_sigtest}
\end{figure}

\subsection{Experiment 3. Learning in the Presence of a Dynamic Path}

\begin{figure}[h]
    \centering
    \includegraphics[width=0.49\linewidth]{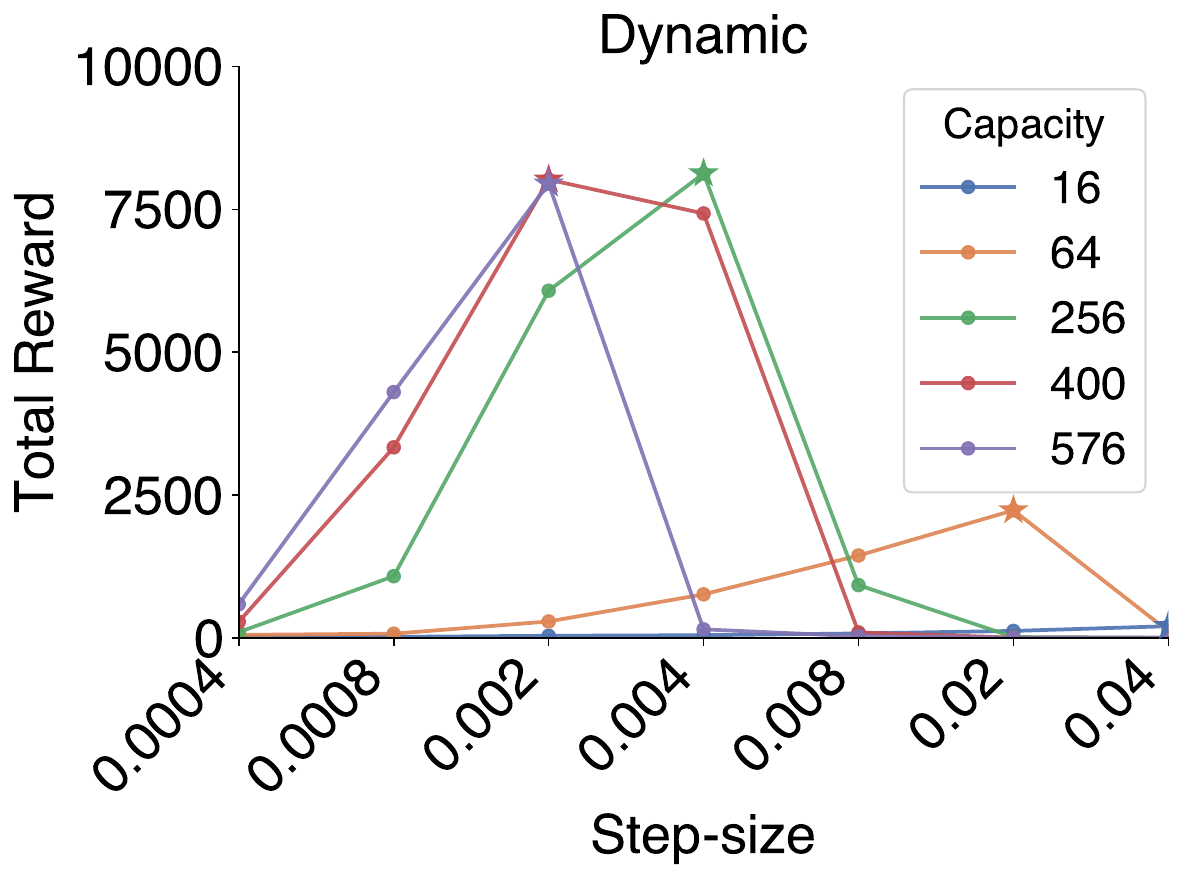}
    \includegraphics[width=0.49\linewidth]{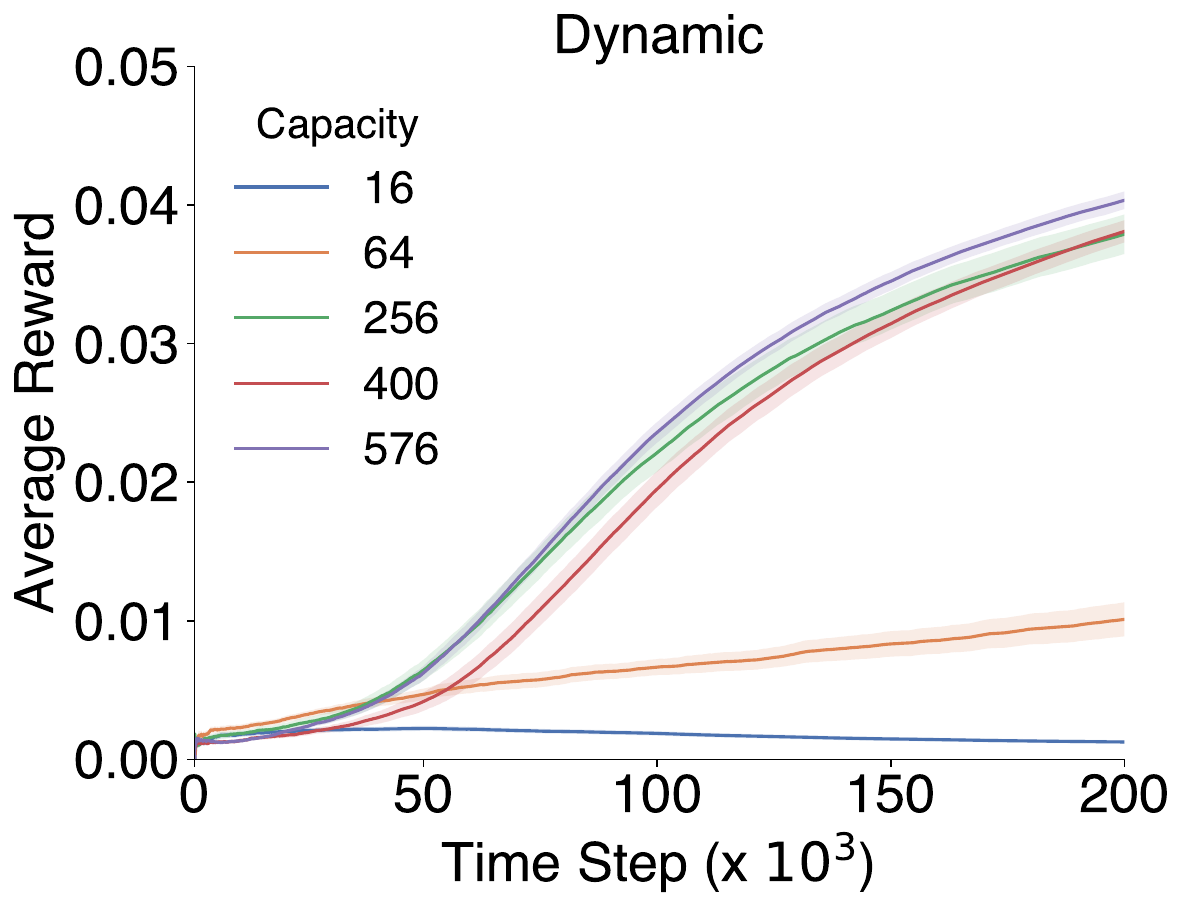}
    \caption{\textbf{Experiment 3. Dynamic Path Average Reward and Step-size sweeps.} Linear-Q.}
    \label{fig:exp3_stepsize_dyn_lin}
\end{figure}
\begin{figure}[h]
\centering
    \includegraphics[width=0.49\linewidth]{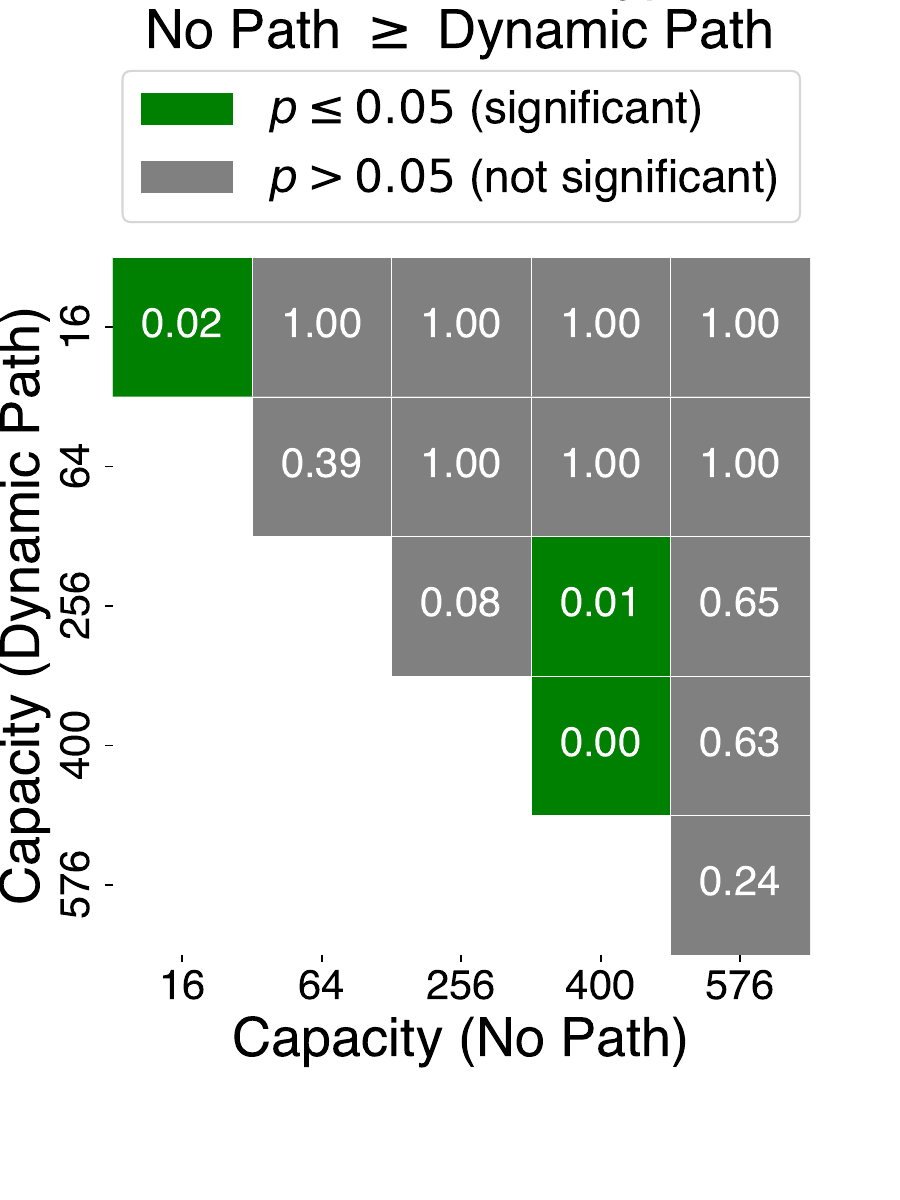}
    \caption{\textbf{Experiment 3. Linear Significance Tests.} Let $P_i$ and $P_j$ be the performances associated with capacities of the row and column. The plot should be read row-wise: when the $(i,j)$-cell is green, $P_i$ is significantly higher than $P_j$.}
    \label{fig:exp3_sigtest}
\end{figure}

\end{document}